\newif\if@restonecol
\newtheorem{lemma}{Lemma}
\newtheorem{theorem}{Theorem}
\newtheorem*{theorem*}{Theorem}
\newtheorem{corollary}{Corollary}
\newtheorem{definition}{Definition}
\newcommand{\Rmnum}[1]{\expandafter\@slowromancap\romannumeral #1@}
\DeclareMathOperator*{\argmin}{arg\,min}
\DeclareMathOperator*{\argmax}{arg\,max}
\newcommand{\cS}{{S}}
\newcommand{\cA}{\mathcal{A}}
\newcommand{\R}{\mathbb{R}}
\newcommand{\bigObound}{\ensuremath{\mathcal{O}}}
\newcommand{\smallObound}{\ensuremath{o}}
\newcommand{\epsP}{\ensuremath{\delta}}
\newcommand{\NoAttack}{\textsc{None}}
\newcommand{\NTRAttack}{\textsc{NT-RAttack}}
\newcommand{\NTDAttack}{\textsc{NT-DAttack}}
\newcommand{\RAttack}{\textsc{RAttack}}
\newcommand{\DAttack}{\textsc{DAttack}}
\newcommand{\regret}{\textsc{Regret}}
\newcommand{\avgmissm}{\textsc{AvgMiss}}
\newcommand{\avgcost}{\textsc{AvgCost}}
\newcommand{\targetpi}{\ensuremath{\pi_{\dagger}}}
\newcommand{\neighbor}[3]{#1\{#2;#3\}}
\newcommand{\norm}[1]{\left\lVert#1\right\rVert}
\newcommand{\expct}[1]{\mathbb{E}\left[#1\right]}
\newcommand{\expctu}[2]{\mathbb{E}_{#1}\left[#2\right]}
\newcommand{\Pru}[2]{\ensuremath{\mathbb{P}_{#1}\left[#2\right] }}
\newcommand{\ind}[1]{\mathds{1}\left[#1\right]}
\newcommand{\pos}[1]{\left[#1\right]^+}
\icmltitlerunning{Policy Teaching via Environment Poisoning}
\begin{document}

\twocolumn[
\icmltitle{Policy Teaching via Environment Poisoning:\\Training-time Adversarial Attacks against Reinforcement Learning}



\begin{icmlauthorlist}
  \icmlauthor{Amin Rakhsha}{af1}
  \icmlauthor{Goran Radanovic}{af1}
  \icmlauthor{Rati Devidze}{af1}
  \icmlauthor{Xiaojin Zhu}{af2}
  \icmlauthor{Adish Singla}{af1}
\end{icmlauthorlist}
\icmlaffiliation{af1}{Max Planck Institute for Software Systems (MPI-SWS).}
\icmlaffiliation{af2}{University of Wisconsin-Madison}
\icmlcorrespondingauthor{Adish Singla}{adishs@mpi-sws.org}
\icmlkeywords{Machine Learning, ICML}
\vskip 0.3in
]



\printAffiliationsAndNotice{}  

\newtoggle{longversion}
\settoggle{longversion}{true}
\begin{abstract}
We study a security threat to reinforcement learning where an attacker poisons the learning environment to force the agent into executing a target policy chosen by the attacker. As a victim, we consider RL agents whose objective is to find a policy that maximizes average reward in undiscounted infinite-horizon problem settings. The attacker can manipulate the rewards or the transition dynamics in the learning environment at training-time and is interested in doing so in a stealthy manner. We propose an optimization framework for finding an optimal stealthy attack for different measures of attack cost. We provide sufficient technical conditions under which the attack is feasible and provide lower/upper bounds on the attack cost. We instantiate our attacks in two settings: (i) an offline setting where the agent is doing planning in the poisoned environment, and (ii) an online setting where the agent is learning a policy using a regret-minimization framework with poisoned feedback. Our results show that the attacker can easily succeed in teaching any target policy to the victim under mild conditions and highlight a significant security threat to reinforcement learning agents in practice.
\end{abstract}

\section{Introduction}\label{sec.introduction}
Understanding adversarial attacks on learning algorithms is critical to finding security threats against the deployed machine learning systems and in designing novel algorithms robust to  those threats. We focus on \emph{training-time} adversarial attacks on learning algorithms, also known as data poisoning attacks \cite{huang2011adversarial,biggio2018wild,zhu2018optimal}. Different from \emph{test-time} attacks where the adversary perturbs test data to change the algorithm's decisions,  poisoning attacks manipulate the training data to change the algorithm's decision-making policy itself.

Most of the existing work on data poisoning attacks has focused on supervised learning algorithms~\cite{DBLP:conf/icml/BiggioNL12,mei2015using,DBLP:conf/icml/XiaoBBFER15,alfeld2016data,li2016data,DBLP:journals/corr/abs-1811-00741}. 
In contemporary works, researchers have explored data poisoning attacks against stochastic multi-armed bandits \cite{DBLP:conf/nips/Jun0MZ18,DBLP:conf/icml/LiuS19a} and contextual bandits \cite{DBLP:conf/gamesec/MaJ0018}, which belong to family of online learning algorithms with limited feedback--- such algorithms are widely used in real-world applications such as news article recommendation~\cite{DBLP:conf/www/LiCLS10} and web advertisements ranking~\cite{DBLP:journals/tist/ChapelleMR14}. The feedback loop in online learning makes these applications easily susceptible to data poisoning, e.g., attacks in the form of  click baits \cite{miller2011s}. In this paper, we focus on data poisoning attacks against reinforcement learning (RL) algorithms, an online learning paradigm for sequential decision-making under uncertainty~\cite{sutton2018reinforcement}.\footnote{Poisoning attacks is also mathematically equivalent to the formulation of machine teaching with teacher being the adversary~\cite{zhu2018overview}. However, the problem of designing optimized environments for teaching a target policy to an RL agent is not well-understood in machine teaching.}
 Given that RL algorithms are increasingly used in critical applications, including cyber-physical systems~\cite{li2019reinforcement} and personal assistive devices \cite{rybski2007interactive}, it is of utmost importance to investigate the security threat to RL algorithms against different forms of poisoning attacks.


%
%
%

\subsection{Overview of our Results and Contributions}\label{sec.introduction.contributions}
In the following, we discuss a few of the types/dimensions of poisoning attacks in RL in order to  highlight the novelty of our work in comparison to existing work.

\textbf{Type of adversarial manipulation.} Existing works on poisoning attacks against RL have studied the manipulation of rewards only  \cite{DBLP:conf/aaai/ZhangP08,DBLP:conf/sigecom/ZhangPC09,ma2019policy,DBLP:conf/gamesec/HuangZ19a}. A key novelty of our work is that we study environment poisoning, i.e., manipulating rewards or manipulating transition dynamics.
For certain applications, it is more natural to manipulate the transition dynamics instead of the rewards, such as (i) the inventory management problem setting where state transitions are controlled by demand and supply of products in a market and (ii) conversational agents where the state is represented by the history of conversations.
%
Through our proposed optimization framework, we show that the problem of optimally poisoning transition dynamics is a lot more challenging than that of poisoning rewards, and the attack might not be always feasible. We provide sufficient technical conditions which ensures attacker's success and provide lower/upper bounds on the attack cost.

\textbf{Objective of the learning agent.} Existing works have focused on studying RL agents that maximizes cumulative reward in \emph{discounted} problem settings. In our work, we consider RL agents that maximizes average reward in \emph{undiscounted} infinite-horizon settings~\cite{Puterman1994,DBLP:journals/ml/Mahadevan96}---this is a more suitable objective for many real-world applications that have cyclic tasks or tasks without absorbing states, e.g., inventory management and scheduling problems \cite{tadepalli1994h,Puterman1994}, and a robot learning to avoid obstacles \cite{DBLP:journals/ai/MahadevanC92}. This subtle difference in RL objective leads to  technical challenges because of the absence of the contraction property that comes from the usual discount factor \cite{DBLP:journals/ml/Mahadevan96}. The results and bounds we provide are based on several measures of the problem setting including \emph{Hajnal measure} of the transition kernel and \emph{diameter of the Markov chain} induced by target policy in the original unpoisoned environment.
Another reason we are considering average reward objective is because there are well-studied online RL algorithms for this objective using regret-minimization framework as discussed next.

\textbf{Offline planning and online learning.} Existing works have focused on attacks in an \emph{offline} setting where the adversary first poisons the reward function in the environment and then the RL agent finds a policy via \emph{planning} \cite{DBLP:conf/aaai/ZhangP08,DBLP:conf/sigecom/ZhangPC09,ma2019policy}. In contrast, we call a setting as \emph{online} where the adversary interacts with a \emph{learning} agent to manipulate the feedback signals. One of the key differences in these two settings is in measuring attacker's cost: The $\ell_\infty$\emph{-norm} of manipulation is commonly studied for the offline setting; for the online setting, the cumulative cost of attack over time (e.g., measured by $\ell_1$\emph{-norm} of manipulations) is more relevant and has not been studied in literature.
We instantiate our attacks in both the settings with appropriate notions of attack cost. For the online learning setting, we consider an agent who is learning a policy using regret-minimization framework \cite{auer2007logarithmic,jaksch2010near}.
We note that our attacks are constructive, and we provide numerical simulations to support our theoretical statements. Our results demonstrate that the attacker can easily succeed in teaching (forcing) the victim to execute the desired target policy at a minimal cost.
\begin{figure*}[t!]
\centering
	\begin{subfigure}[b]{0.48\textwidth}
	   \centering
		\includegraphics[width=1\linewidth]{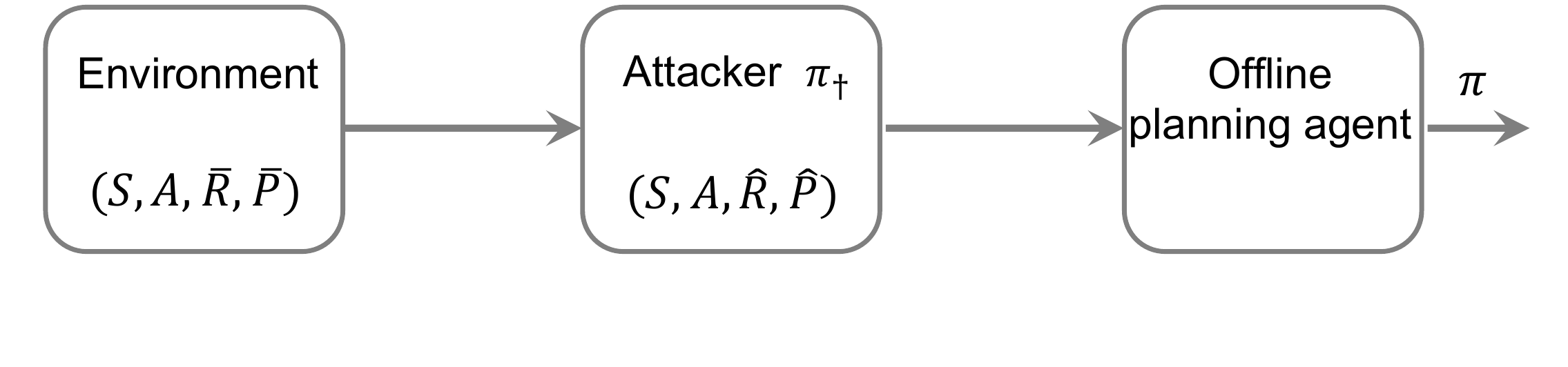}
		\caption{Poisoning attack against an RL agent doing offline planning}
		\label{fig:model.offline}
	\end{subfigure}
	\quad
	\begin{subfigure}[b]{0.48\textwidth}
	    \centering
		\includegraphics[width=1\linewidth]{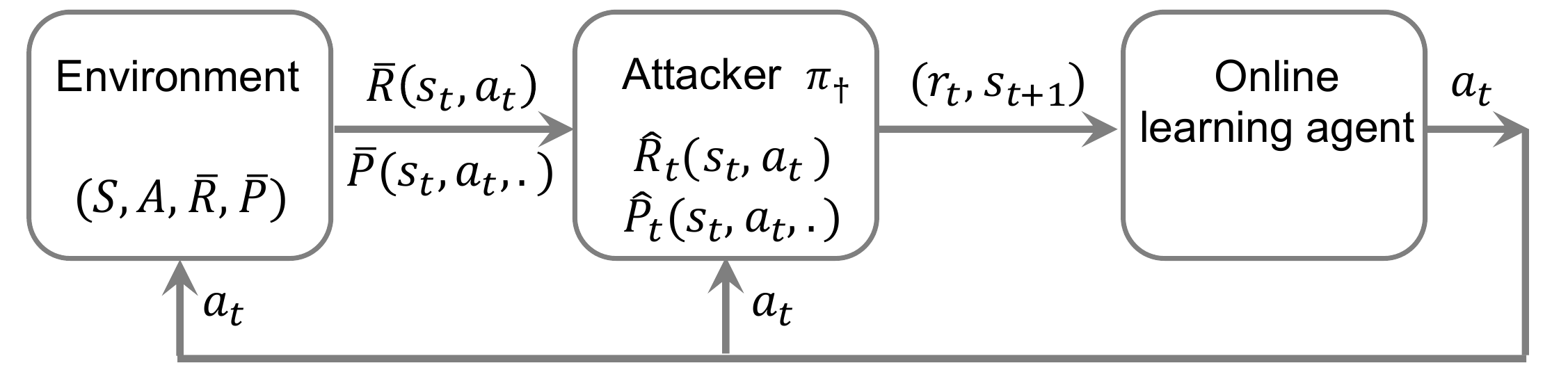}
		\caption{Poisoning attack against an RL agent doing online learning}
		\label{fig:model.online}
	\end{subfigure}
   \caption{(a) Adversary first poisons the environment by manipulating reward function and transition dynamics, then, the RL agent finds an optimal policy via \emph{planning} algorithms based on Dynamic Programming \cite{Puterman1994}. (b) Adversary interacts with an RL agent to manipulate the feedback signals; here, we consider an agent who is learning a policy using regret-minimization framework \cite{auer2007logarithmic,jaksch2010near}.}
	\label{fig:model}
\end{figure*}

\section{Environment and RL Agent}\label{sec.preliminaries}
We consider a standard RL setting, based on Markov decision processes and RL agents that optimize their expected utility. In contrast to prior work on reward poisoning attacks that assume RL agents are optimizing their total discounted rewards, we focus on the average reward optimality criterion, as specified in more detail in the following subsections. 

\subsection{Environment, Policy, and Optimality Criteria}\label{sec.preliminaries.env}
The environment is a Markov Decision Process (MDP) defined as $M = (S, A, R, P)$, where $S$ is the state space, $A$ is the action space, $R\colon S\times A \to \R$ is the reward function, and $P\colon S\times A\times S \to [0, 1]$ is the state transition dynamics, i.e., $P(s, a, s')$ denotes the probability of reaching state $s'$ when taking action $a$ in state $s$.
We denote a deterministic policy by $\pi$, which is a map from states to actions, i.e., $\pi\colon \cS\to\cA$.

\looseness-1
In our work, we consider \emph{average reward} optimality criteria in \emph{undiscounted} infinite-horizon settings~\cite{Puterman1994,DBLP:journals/ml/Mahadevan96}----this optimality criteria is particularly suitable for applications that have cyclic tasks or tasks without absorbing states (see discussion in Section~\ref{sec.introduction.contributions}). Given an initial state $s$, the expected average reward of a policy $\pi$ in MDP $M$ is given by
	$
	\rho(\pi, M, s) := \lim_{N\to \infty} \frac{1}{N} \expct{\sum_{t=0}^{N-1} R(s_t, a_t)|s_0=s, \pi}
	$,
where the expectation is over the rewards received by the agent when starting from initial state $s_0 = s$ and following the policy $\pi$.

Under mild conditions, the above limit exists and the value $\rho(\pi, M, s)$ is independent of the initial state $s$. In this paper, we will assume the condition that the MDP is \textit{ergodic}, which implies that every policy $\pi$ has a stationary distribution $\mu^\pi$, and $\mu^\pi(s) > 0$ for every state $s$ \cite{Puterman1994}. When ergodicity holds, the average reward or \emph{gain} of a policy can be computed as $\rho(\pi, M) = \sum_{s \in S}\mu^\pi(s)R(s, \pi(s))$; we also denote this as $\rho^\pi$ when the MDP $M$ is clear from context. For an overview of average-reward RL, we refer the reader to \cite{Puterman1994,DBLP:journals/ml/Mahadevan96,even2005experts}.



A policy $\pi^*$ is optimal if for every other deterministic policy $\pi$ we have $\rho^{\pi^*} \ge \rho^{\pi},$ and $\epsilon$-robust optimal if $\rho^{\pi^*} \ge \rho^{\pi} + \epsilon$ also holds.
Finally, we define a coefficient
$
\alpha = \min_{s,a,s',a'} \sum_{x \in S}\min \big (P(s,a,x), P(s',a',x) \big )
$.
Note that $(1-\alpha)$ is equivalent to Hajnal measure of $P$~\cite{Puterman1994}.

\subsection{RL Agent}\label{subsec.online_agent}
We consider RL agents with average reward optimality criteria in the following two settings (also, see Figure~\ref{fig:model}).

\textbf{Offline planning agent.}
In the \emph{offline} setting, an RL agent is given an MDP $M$, and chooses a deterministic optimal policy $\pi^* \in \argmax_{\pi} \rho(\pi, M)$. 
The optimal policy can be found via \emph{planning} algorithms based on  Dynamic Programming such as value iteration~\cite{Puterman1994}. 

\textbf{Online learning agent}.
In the \emph{online} setting, an RL agent does not know the MDP $M$ (i.e., $R$ and $P$ are unknown). At each step $t$, the agent stochastically chooses an action $a_t$ based on the previous observations, and then as feedback it obtains reward $r_t$ and transitions to the next state $s_{t+1}$. 
In this paper, we consider an agent who is using a learning algorithm to take actions based on regret-minimization framework. Performance of such a learner in MDP $M$ is measured by its \emph{regret} which after $T$ steps is given by
 $\regret(T, M) = \rho^* \cdot T - \sum_{t=0}^{T-1} r_t$
 ,
where $\rho^* := \rho(\pi^*, M)$ is the optimal average reward. We only consider agents with an algorithm promising a sublinear bound for total expected regret, i.e., $\expct{\regret(T, M)}$ is $\smallObound(T)$.~We note that well-studied algorithms with sublinear regret exist for average reward criteria, e.g., UCRL algorithm~\cite{auer2007logarithmic,jaksch2010near} and algorithms based on posterior sampling method~\cite{agrawal2017optimistic}.  



\section{Attack Models and Problem Formulation}\label{sec.formulation}
In this section, we formulate the problem of adversarial attacks on the RL agent in both the offline and online settings. In what follows, the original MDP (before poisoning) is denoted by $\overline{M} = (S, A, \overline{R}, \overline{P})$, and an \emph{overline} is added to the corresponding quantities before poisoning, such as $\overline{\rho}^\pi$, $\overline{\mu}^\pi$, and $\overline{\alpha}$.
%
%

The attacker has a target policy $\targetpi$ and poisons the environment with the \emph{goal} of teaching/forcing the RL agent to executing this policy.\footnote{Our results can be translated to attacks against the Batch RL agent studied by \cite{ma2019policy} where the attacker poisons the training data used by the agent to learn MDP parameters.}
The attacker is interested in doing a stealthy attack with minimal \emph{cost} to avoid being detected.\footnote{Note that the attacks that we will study in subsequent sections will be poisoning either rewards only or transition dynamics only. In a general attack manipulating both the rewards and dynamics simultaneously, one needs to appropriately combine the costs (also, refer to discussion in Section~\ref{sec.conclusions}). \label{footnote.note-on-cost-of-attack}}
We assume that the attacker knows the original MDP $\overline{M}$, i.e., the original reward function $\overline{R}$ and state transition dynamics $\overline{P}$. This assumption is standard in the existing literature on poisoning attacks against RL. 
The attacker requires that the RL agent behaves as specified in Section~\ref{sec.preliminaries}, however the attacker doesn't know the agent's algorithm or internal parameters.
\subsection{Attack Against an Offline Planning Agent}\label{sec.formulation.offline}
In attacks against an offline planning agent, the attacker manipulates the original MDP $\overline{M} = (S, A, \overline{R}, \overline{P})$ to a poisoned MDP  $\widehat{M} = (S, A, \widehat{R}, \widehat{P})$ which is then used by the RL agent for finding the optimal policy, see Figure~\ref{fig:model.offline}.

\textbf{Goal of the attack.} Given a margin parameter $\epsilon$, the attacker poisons the reward function or the transition dynamics so that the target policy $\targetpi$ is $\epsilon$-robust optimal in the poisoned MDP $\widehat{M}$, i.e., the following condition holds:
\begin{align}\label{prob.off.gen_constraints}
    \rho(\targetpi, \widehat{M}) \ge \rho(\pi, \widehat{M}) + \epsilon, \quad \forall \pi\ne\pi^\dagger.
\end{align}
%
%
\textbf{Cost of the attack.} We consider two notions of costs defined in terms of $\ell_p$-norm of differences in reward functions (i.e., $\overline{R}$ and $\widehat{R}$) and in transition dynamics (i.e., $\overline{P}$ and $\widehat{P}$).
For attacks via poisoning rewards, we  quantify the cost as
\begin{align*}
\norm{\widehat{R} - \overline{R}}_{p} = \bigg (\sum_{s,a} \Big (\big|\widehat{R}(s,a) - \overline{R}(s,a)\big|\Big)^{p}\bigg)^{1/p}
,
\end{align*}
where we treated the reward functions as vectors of length $|S| \cdot |A|$ with values $R(s, a)$. For attacks via poisoning transition dynamics, we  quantify the cost as follows:
\begin{align*}
     \norm{\widehat{P} - \overline{P}}_{p} = \bigg (\sum_{s,a} \Big(\sum_{s'}\big|\widehat{P}(s,a,s') - \overline{P}(s,a,s')\big|\Big)^{p}\bigg)^{1/p}.
     %
\end{align*}
Here, we have computed $\ell_p$-norm of a vector of length $|S| \cdot |A|$ where the value for each $(s, a)$ is given by the $\ell_1$-\emph{norm} of difference of two distributions $\widehat{P}(s,a,.)$ and $\overline{P}(s,a,.)$.
\subsection{Attack Against an Online Learning Agent}\label{sec.formulation.online}
In attacks against an online learning agent, the attacker at time $t$ manipulates the reward function $\overline{R}(s_t, a_t)$ and transition dynamics $\overline{P}(s_t, a_t, .)$ for the current state $s_t$ and agent's action $a_t$, see Figure~\ref{fig:model.online}. Then, at time $t$,  the (poisoned) reward $r_t$ is obtained from $\widehat{R}_t (s_t,a_t)$ instead of $\overline{R}(s_t,a_t)$ and the (poisoned) next state $s_{t+1}$ is sampled from $\widehat{P}_t(s_t,a_t,.)$ instead of $\overline{P}(s_t,a_t,.)$.



\textbf{Goal of the attack.} Specification of the attacker's goal in this online setting is not as straightforward as that in the offline setting, primarily because the agent might never converge to any stationary policy. In our work, at time $t$ when the current state is $s_t$, we measure the mismatch of agent's action $a_t$ w.r.t. the target policy $\targetpi$ as $\ind{a_t \ne \targetpi(s_t)}$ where $\ind{.}$ denotes the indicator function. With this, we define a notion of \emph{average mismatch} of learner's actions in time horizon $T$ as follows:
\begin{align}
    \avgmissm(T) = \frac{1}{T} \cdot \bigg(\sum_{t=0}^{T-1} \ind{a_t \ne \targetpi(s_t)}\bigg).
\end{align}
The goal of the attacker is to  ensure that $\avgmissm(T)$ is $o(1)$, or, alternatively, the total number of time steps where there is a mismatch is $o(T)$.

\textbf{Cost of the attack.} We consider a notion of \emph{average cost} of attack in time horizon $T$ denoted as $\avgcost(T)$.
For reward poisoning attacks, 
$\avgcost(T)$ is
\begin{align*}
    \frac{1}{T} \cdot \bigg(\sum_{t=0}^{T-1} \Big(\big|\widehat{R}_t (s_t,a_t) - \overline{R}(s_t, a_t)\big|\Big)^p\bigg)^{1/p}, 
\end{align*}
%
and for dynamics poisoning attacks, 
$\avgcost(T)$ is
\begin{align*}
    \frac{1}{T} \cdot \bigg(\sum_{t=0}^{T-1} \Big(\sum_{s'} \big|\widehat{P}_t(s_t,a_t,s') - \overline{P}(s_t,a_t,s')\big| \Big)^p\bigg)^{1/p}. 
\end{align*}
Note that here the $\ell_p$-norm is defined over a vector of length $T$ with values quantifying the attack cost at each time step $t$. One of the key differences in measuring attacker's cost for offline and online settings is the use of appropriate norm. While the $\ell_\infty$\emph{-norm} of manipulation is most suitable and commonly studied for the offline setting; for the online setting, the cumulative cost of attack over time measured by $\ell_1$\emph{-norm} is most relevant.
\section{Attacks in Offline Setting}\label{sec.offlineattacks}
In this section, we introduce and analyze attacks against an offline planing agent that derives its policy using a poisoned MDP $\widehat M$.  The attacker tries to minimally change the original MDP $\overline{M}$, while at the same time ensuring that the target policy is optimal in the modified MDP $\widehat M$. 


\subsection{Overview of the Approach}\label{sec.off.ideas-definitions}
To formulate optimization problems explicitly and provide bounds on the quality of obtainable solutions, we first define concepts that enable us do the following: i) reduce the complexity of achieving the attacker's goal explained in Section \ref{sec.formulation.offline}, ii) quantify how much change is required in rewards or transitions. 
To find MDP $\widehat M$ for which the target policy is $\epsilon$-robust optimal, one could directly utilize constraints expressed by \eqref{prob.off.gen_constraints}. However, the number of constraints in \eqref{prob.off.gen_constraints} equals $(|A|^{|S|}-1)$, i.e., it is exponential in $|S|$, making optimization problems that directly utilize them intractable. We show that it is enough to satisfy these constraints for $(|S|\cdot|A|-|S|)$ policies that we call \textit{neighbors} of the target policy and which we define as follows:
\begin{definition}
	For a policy $\pi$, its \textit{neighbor} policy $\neighbor{\pi}{s}{a}$ is defined as
	$$
	\neighbor{\pi}{s}{a}(x) = \left\{\begin{array}{lc}
		\pi(x) & x \ne s \\
		a & x = s
	\end{array}\right. .
	$$
\end{definition}
The following lemma provides a simple verification criteria for examining whether a policy of interest is $\epsilon$-robust optimal in a given MDP. Its proof can be found in the supplementary  material.    
\begin{lemma}
	\label{lemma.using_neighbors}
	Policy $\pi$ is $\epsilon$-robust optimal \emph{iff} we have $\rho^{\pi} \ge \rho^{\neighbor{\pi}{s}{a}} + \epsilon$ for every state $s$ and action $a \ne \pi(s)$.
\end{lemma}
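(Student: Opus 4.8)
The forward direction is immediate: a neighbor $\neighbor{\pi}{s}{a}$ is itself a deterministic policy different from $\pi$, so $\epsilon$-robust optimality gives $\rho^{\pi}\ge\rho^{\neighbor{\pi}{s}{a}}+\epsilon$. The content is the converse, so assume $\rho^{\pi}\ge\rho^{\neighbor{\pi}{s}{a}}+\epsilon$ for every $s$ and $a\ne\pi(s)$, and aim to show $\rho^{\pi}\ge\rho^{\pi'}+\epsilon$ for every deterministic $\pi'\ne\pi$. The plan is to run everything through the average-reward Poisson (Bellman) equation rather than any contraction argument. For the ergodic MDP and any policy $\sigma$ there is a gain $\rho^{\sigma}$ and a bias $h^{\sigma}$ with $\rho^{\sigma}+h^{\sigma}(s)=R(s,\sigma(s))+\sum_{s'}P(s,\sigma(s),s')h^{\sigma}(s')$ for all $s$; set $A^{\sigma}(s,a):=R(s,a)+\sum_{s'}P(s,a,s')h^{\sigma}(s')-h^{\sigma}(s)-\rho^{\sigma}$, which does not depend on the (non-unique) choice of $h^{\sigma}$ and satisfies $A^{\sigma}(s,\sigma(s))=0$. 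The first step is the performance-difference identity: for any two policies $\sigma,\sigma'$,
\[
\rho^{\sigma'}-\rho^{\sigma}=\sum_{s}\mu^{\sigma'}(s)\,A^{\sigma}(s,\sigma'(s)),
\]
which follows by substituting the Poisson equation for $\sigma$ into the right-hand side and using stationarity $\sum_{s}\mu^{\sigma'}(s)P(s,\sigma'(s),s')=\mu^{\sigma'}(s')$ so the bias terms telescope. In the special case where $\sigma'$ agrees with $\sigma$ except at one state $s_0$ this reads $\rho^{\sigma'}-\rho^{\sigma}=\mu^{\sigma'}(s_0)\,A^{\sigma}(s_0,\sigma'(s_0))$.

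Applying the single-state identity with $\sigma=\pi$ to the neighbor $\neighbor{\pi}{s}{a}$ gives $\rho^{\neighbor{\pi}{s}{a}}-\rho^{\pi}=\mu^{\neighbor{\pi}{s}{a}}(s)\,A^{\pi}(s,a)$, so the hypothesis forces $A^{\pi}(s,a)\le-\epsilon/\mu^{\neighbor{\pi}{s}{a}}(s)<0$ for every $s$ and $a\ne\pi(s)$ (ergodicity guarantees $\mu^{\neighbor{\pi}{s}{a}}(s)>0$). Feeding these strict inequalities into the performance-difference identity for an arbitrary $\pi'\ne\pi$, and noting that $T(\pi'):=\{s:\pi'(s)\ne\pi(s)\}$ is nonempty with $\mu^{\pi'}(s)>0$ on it, yields $\rho^{\pi'}<\rho^{\pi}$; hence $\pi$ is the (strictly) unique gain-optimal policy.

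It remains to recover the quantitative margin. Let $\pi^{\sharp}$ maximize $\rho^{\cdot}$ over the finite set of deterministic policies other than $\pi$; it suffices to prove $\rho^{\pi^{\sharp}}\le\rho^{\pi}-\epsilon$. The key claim is that $\pi^{\sharp}$ is necessarily a neighbor of $\pi$. Suppose not, i.e. $|T(\pi^{\sharp})|\ge2$. For any $s_0\in T(\pi^{\sharp})$ the policy $\sigma:=\neighbor{\pi^{\sharp}}{s_0}{\pi(s_0)}$ still differs from $\pi$ (at the $|T(\pi^{\sharp})|-1\ge1$ remaining states), so $\rho^{\sigma}\le\rho^{\pi^{\sharp}}$ by maximality; the single-state identity with base policy $\pi^{\sharp}$ then gives $\rho^{\sigma}-\rho^{\pi^{\sharp}}=\mu^{\sigma}(s_0)\,A^{\pi^{\sharp}}(s_0,\pi(s_0))\le0$, hence $A^{\pi^{\sharp}}(s_0,\pi(s_0))\le0$. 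Since this holds for every $s_0\in T(\pi^{\sharp})$, the performance-difference identity with base policy $\pi^{\sharp}$ applied to $\pi$ gives $\rho^{\pi}-\rho^{\pi^{\sharp}}=\sum_{s\in T(\pi^{\sharp})}\mu^{\pi}(s)\,A^{\pi^{\sharp}}(s,\pi(s))\le0$, contradicting $\rho^{\pi}>\rho^{\pi^{\sharp}}$ established above. Therefore $\pi^{\sharp}$ is a neighbor of $\pi$, the hypothesis gives $\rho^{\pi^{\sharp}}\le\rho^{\pi}-\epsilon$, and by the choice of $\pi^{\sharp}$ this bounds $\rho^{\pi'}$ for every $\pi'\ne\pi$, which is exactly $\epsilon$-robust optimality.

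The main obstacle, and the part I expect to need the most care, is this last step: transferring the gap $\epsilon$ from single-state deviations to arbitrary deviations. A naive argument — bounding each advantage by $-\epsilon$ and summing, or writing an occupation measure as a convex combination of neighbor occupation measures — only produces a gap of $\epsilon$ times the $\mu^{\pi'}$-mass on the deviating states, which can be arbitrarily smaller than $\epsilon$. The observation that the second-best deterministic policy must itself be a neighbor of $\pi$ is precisely what circumvents this loss. I would also be careful in setting up the average-reward Poisson equation and the well-definedness of $A^{\sigma}$, but given ergodicity these are routine.
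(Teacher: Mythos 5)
Your proof is correct and rests on the same machinery as the paper's: the performance-difference identity (Lemma~7 of \cite{even2005experts}) together with its single-state specialization, first to establish strict unique optimality of $\pi$ and then to transfer the $\epsilon$-gap from neighbors to arbitrary policies. The only difference is organizational --- the paper runs an explicit induction on Hamming distance, improving an arbitrary policy one state at a time toward $\pi$ until it becomes a neighbor, whereas you apply the equivalent extremal argument showing the second-best deterministic policy must itself be a neighbor; both hinge on exactly the same single-switch improvement step.
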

In other words, Lemma \ref{lemma.using_neighbors} implies that (sub)optimality of the target policy can be deduced by examining its neighbor policies.
By using the definition of average rewards, we conclude that the attacker's strategy, which consists of modifying MDP $\overline{M} = (S, A, \overline{R}, \overline{P})$ to MDP $\widehat M = (S, A, \widehat{R}, \widehat{P})$, is successful if and only if for each state $s$ and action $a \ne \targetpi(s)$ we have:
\begin{align}\label{off.attack.nec.cond}
    \sum_{s'} \widehat{\mu}^{\targetpi}(s') &\cdot \widehat R \big (s', \targetpi(s') \big ) \ge \\
    \notag
	&\sum_{s'} \widehat{\mu}^{\neighbor{\targetpi}{s}{a}}(s') \cdot \widehat R \big (s', \neighbor{\targetpi}{s}{a}(s') \big ) + \epsilon.
\end{align}

 To simplify the exposition of our results, we introduce the following quantity w.r.t. MDP $\overline M$ for all $s \in S, a \in A$:
\begin{align}
    \label{chi.definition}
    \overline \chi^{\pi}_{\epsilon}(s, a) = 
    \begin{cases}
        \pos{\frac{
                \overline \rho^{\neighbor{\pi}{s}{a}} - \overline \rho^{\pi} + \epsilon
            }{
                \overline \mu^{\neighbor{\pi}{s}{a}}(s)
            }} \quad & \textnormal{ for } a\ne\pi(s), \\
            0 &  \textnormal{ for }  a = \pi(s).
    \end{cases}
\end{align}
Here, $\pos{x}$ is equal to $\max\{0, x\}$. As we show in our formal results, the amount of change needed in modifying the original MDP $\overline{M}$ to MDP $\widehat{M}$ so that \eqref{off.attack.nec.cond} holds is captured through $\overline \chi^{\pi}_{\epsilon}$.


\subsection{Attacks in Offline Setting via Poisoning Rewards}
\label{sec.off.reward}

The first attack we consider is an attack on an offline planning agent via poisoning rewards, where the attacker's strategy consists of choosing a new reward function which makes the target policy $\epsilon$-robust optimal. In this attack we have $\widehat{P} = \overline{P}$, and hence $\widehat{\mu}^\pi$ is equal to $\overline{\mu}^\pi$ for any policy $\pi$, so $\widehat{\mu}^\pi$ can be precomputed based on MDP $\overline{M}$. This allows us to formulate an optimization problem for finding an optimal reward poisoning attack using condition \eqref{off.attack.nec.cond}:
\begin{align}
	\label{prob.off.unc}
	\tag{P1}
	&\quad \min_{R} \quad \norm{R - \overline R}_{p}\\
	\notag
	&\quad \mbox{ s.t. } \quad \text{condition \eqref{off.attack.nec.cond} holds with  $\widehat R = R$ and  $\widehat \mu = \overline{\mu}$}.
\end{align}
In this optimization problem, the constraints are obtained by using condition \eqref{off.attack.nec.cond} for specific instantiation of $\widehat{R}$ and $\widehat{\mu}$. In particular, we set 
$\widehat{R}$ to be the optimization variable $R$, and $\widehat{\mu}$ to be the stationary distributions of the original transition kernel $\overline{P}$ (i.e., $\overline{\mu}$).

The optimization problem \eqref{prob.off.unc} is a tractable convex program 
with
linear constraints, and  we show it is always feasible. 
Additionally, the following results provide lower and upper bounds on the attack cost, that is, on the cost of solution $\widehat R$. 
\begin{theorem}
\label{theorem_off_unc}
	The optimization problem \eqref{prob.off.unc} is always feasible, and the cost of the optimum solution is bounded by
	\begin{align*}
	   \frac{\overline \alpha}{2}\cdot\norm{\overline \chi^{\targetpi}_{\epsilon}}_{\infty} \le \norm{\widehat R-\overline R}_{p} \le \norm{\overline \chi^{\targetpi}_{\epsilon}}_{p}. 
	\end{align*}
\end{theorem}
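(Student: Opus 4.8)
The plan is to get the upper bound (and feasibility) by exhibiting an explicit attack, and the lower bound by a perturbation argument on the stationary distributions of the target policy and its neighbors.

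\textbf{Feasibility and the upper bound.} I would take the candidate $\widehat R$ that leaves target actions untouched, $\widehat R(s,\targetpi(s))=\overline{R}(s,\targetpi(s))$, and for every $a\ne\targetpi(s)$ sets $\widehat R(s,a)=\overline{R}(s,a)-\overline{\chi}^{\targetpi}_{\epsilon}(s,a)$. Since this attack keeps $\widehat P=\overline{P}$, stationary distributions are unchanged, so $\widehat\mu^{\pi}=\overline{\mu}^{\pi}$ for all $\pi$. The policy $\targetpi$ only ever collects unchanged rewards, so $\rho(\targetpi,\widehat M)=\overline{\rho}^{\targetpi}$, whereas the neighbor $\neighbor{\targetpi}{s}{a}$ differs from $\targetpi$ only at $s$, so $\rho(\neighbor{\targetpi}{s}{a},\widehat M)=\overline{\rho}^{\neighbor{\targetpi}{s}{a}}-\overline{\mu}^{\neighbor{\targetpi}{s}{a}}(s)\cdot\overline{\chi}^{\targetpi}_{\epsilon}(s,a)$. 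Plugging these into \eqref{off.attack.nec.cond} and invoking the definition \eqref{chi.definition} of $\overline{\chi}^{\targetpi}_{\epsilon}$ (using $\overline{\mu}^{\neighbor{\targetpi}{s}{a}}(s)>0$ by ergodicity) verifies every constraint, so \eqref{prob.off.unc} is feasible. Since $\overline{\chi}^{\targetpi}_{\epsilon}(s,\targetpi(s))=0$, the difference $\widehat R-\overline{R}$ has absolute value exactly $\overline{\chi}^{\targetpi}_{\epsilon}(s,a)$ at each $(s,a)$, hence $\norm{\widehat R-\overline{R}}_{p}=\norm{\overline{\chi}^{\targetpi}_{\epsilon}}_{p}$, which bounds the optimum from above.

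\textbf{The lower bound.} Let $\widehat R$ be any feasible solution and $D=\widehat R-\overline{R}$; since $\norm{D}_{p}\ge\norm{D}_{\infty}$ for $p\ge1$, it suffices to bound $\norm{D}_{\infty}$ from below. Pick $(s^*,a^*)$ attaining $\norm{\overline{\chi}^{\targetpi}_{\epsilon}}_{\infty}$, which we may assume is positive, and let $\nu=\neighbor{\targetpi}{s^*}{a^*}$. Writing \eqref{off.attack.nec.cond} for $(s^*,a^*)$ with $\widehat R=\overline{R}+D$ and $\widehat\mu=\overline{\mu}$, and using that $\targetpi$ and $\nu$ agree off $s^*$, I would rearrange it to
\[
\overline{\mu}^{\nu}(s^*)\cdot\overline{\chi}^{\targetpi}_{\epsilon}(s^*,a^*)=\overline{\rho}^{\nu}-\overline{\rho}^{\targetpi}+\epsilon\le\sum_{s'}\overline{\mu}^{\targetpi}(s')D(s',\targetpi(s'))-\sum_{s'}\overline{\mu}^{\nu}(s')D(s',\nu(s')),
\]
then split the sums at $s^*$ and use $|x|+|y|-|x-y|=2\min(|x|,|y|)$ to bound the right-hand side by $\norm{D}_{\infty}\big(\norm{\overline{\mu}^{\targetpi}-\overline{\mu}^{\nu}}_{1}+2\overline{\mu}^{\nu}(s^*)\big)$. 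The crux is then bounding $\norm{\overline{\mu}^{\targetpi}-\overline{\mu}^{\nu}}_{1}$: from $\overline{\mu}^{\pi}\overline{P}^{\pi}=\overline{\mu}^{\pi}$ one gets the identity $\overline{\mu}^{\targetpi}-\overline{\mu}^{\nu}=(\overline{\mu}^{\targetpi}-\overline{\mu}^{\nu})\overline{P}^{\targetpi}+\overline{\mu}^{\nu}(\overline{P}^{\targetpi}-\overline{P}^{\nu})$. The first term contracts, since $\overline{\mu}^{\targetpi}-\overline{\mu}^{\nu}$ has zero coordinate sum: by the coefficient-of-ergodicity bound and $\overline{\alpha}\le\min_{s,s'}\sum_x\min(\overline{P}^{\targetpi}(s,x),\overline{P}^{\targetpi}(s',x))$, its $\ell_1$-norm is at most $(1-\overline{\alpha})\norm{\overline{\mu}^{\targetpi}-\overline{\mu}^{\nu}}_{1}$. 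The second term lives on row $s^*$ only, equal to $\overline{\mu}^{\nu}(s^*)\big(\overline{P}(s^*,\targetpi(s^*),\cdot)-\overline{P}(s^*,a^*,\cdot)\big)$, whose $\ell_1$-norm is $2\overline{\mu}^{\nu}(s^*)\big(1-\sum_x\min(\overline{P}(s^*,\targetpi(s^*),x),\overline{P}(s^*,a^*,x))\big)\le 2(1-\overline{\alpha})\overline{\mu}^{\nu}(s^*)$. Solving the resulting inequality gives $\norm{\overline{\mu}^{\targetpi}-\overline{\mu}^{\nu}}_{1}\le\tfrac{2(1-\overline{\alpha})}{\overline{\alpha}}\overline{\mu}^{\nu}(s^*)$, so the bracket above is at most $\tfrac{2}{\overline{\alpha}}\overline{\mu}^{\nu}(s^*)$. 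Cancelling $\overline{\mu}^{\nu}(s^*)>0$ yields $\norm{D}_{\infty}\ge\tfrac{\overline{\alpha}}{2}\overline{\chi}^{\targetpi}_{\epsilon}(s^*,a^*)=\tfrac{\overline{\alpha}}{2}\norm{\overline{\chi}^{\targetpi}_{\epsilon}}_{\infty}$, completing the proof.

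I expect the main obstacle to be the stationary-distribution perturbation step, and in particular tracking the constants so that the $(1-\overline{\alpha})$ contraction factor and the $2(1-\overline{\alpha})$ row-perturbation bound combine to collapse exactly to $\overline{\alpha}/2$; the remaining work (expanding \eqref{off.attack.nec.cond}, reading off $\overline{\chi}^{\targetpi}_{\epsilon}$, and the norm monotonicity $\norm{\cdot}_{p}\ge\norm{\cdot}_{\infty}$) is routine bookkeeping.
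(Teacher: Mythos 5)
Your proposal is correct. The feasibility/upper-bound half is essentially the paper's argument: the same explicit solution $\widehat R=\overline R-\overline\chi^{\targetpi}_{\epsilon}$, the same verification via the neighbor constraints, and the same norm computation. The lower bound, however, takes a genuinely different route. The paper lifts the proof technique of Ma et al.\ to the average-reward setting: it defines the Bellman-type operator $F$, proves a span contraction $sp(F(Q_1,\cdot)-F(Q_2,\cdot))\le(1-\alpha)\,sp(Q_1-Q_2)$, derives the general inequality $\norm{R_1-R_2}_{\infty}+\norm{P_1-P_2}_{\infty}\norm{V_1^{\pi}}_{\infty}\ge\frac{\alpha_2}{2}\,sp(Q_1^{\pi}-Q_2^{\pi})$, and then bounds $sp(\widehat Q^{\targetpi}-\overline Q^{\targetpi})$ from below by $\norm{\overline\chi^{\targetpi}_{\epsilon}}_{\infty}$ using robust optimality. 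You instead work directly with the linear constraints of \eqref{prob.off.unc}: you isolate the perturbation $D=\widehat R-\overline R$ in the constraint for the worst neighbor, and reduce everything to the stationary-distribution perturbation bound $\norm{\overline\mu^{\targetpi}-\overline\mu^{\nu}}_1\le\frac{2(1-\overline\alpha)}{\overline\alpha}\overline\mu^{\nu}(s^*)$, obtained from the identity $\overline\mu^{\targetpi}-\overline\mu^{\nu}=(\overline\mu^{\targetpi}-\overline\mu^{\nu})\overline P^{\targetpi}+\overline\mu^{\nu}(\overline P^{\targetpi}-\overline P^{\nu})$ together with Dobrushin contraction on zero-sum vectors and the total-variation identity for the single perturbed row; the constants do collapse to exactly $\overline\alpha/2$ as you anticipate. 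Your argument is more elementary and self-contained for the reward-only case (no $Q$-values or span seminorm needed, and it reads off directly from the LP), but it leans on $\widehat P=\overline P$ in two places (the constraints use $\overline\mu$, and the row perturbation is confined to $s^*$), so it does not transfer to the dynamics-poisoning lower bound of Theorem \ref{theorem_off_dyn}, whereas the paper's Lemma \ref{lemma_qr_relation} is set up once to serve both theorems. Only cosmetic gaps remain: you should note that the case $\overline\alpha=0$ (where you divide by $\overline\alpha$) is vacuous since the claimed bound is then trivial, and likewise the case $\norm{\overline\chi^{\targetpi}_{\epsilon}}_{\infty}=0$.
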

The proof of the theorem can be found in the supplementary  material. 
The lower bound on the cost of the attack is obtained by extending the proof technique of \cite{ma2019policy} to the average reward criterion, and it depends on two factors. 
The first factor, $\frac{\overline \alpha}{2}$ (see the definition in Section~\ref{sec.preliminaries.env}), is related to the Hajnal measure of the original MDP $\overline{M}$. The second factor, $\norm{\overline \chi^{\targetpi}_{\epsilon}}_{\infty}$, captures the initial disadvantage of the target policy relative to its neighbor policies, as can be seen from the inequality $\overline \chi^{\targetpi}_{\epsilon}(s,a) \ge (\overline \rho^{\neighbor{\targetpi}{s}{a}} - \overline \rho^{\targetpi})$.

The upper bound is obtained by analyzing the optimum solution to a modified version of the optimization problem \eqref{prob.off.unc} which enforces reward function $\widehat R$ to be equal to $\overline R$ for state-action pairs that define the target policy. We further discuss the modified optimization problem in Section~\ref{sec.on.reward}.

\subsection{Attacks in Offline Setting via Poisoning Dynamics}
\label{sec.off.dynamics}
Let us now consider attacks on an offline planning agent based on poisoning transition dynamics. The attacker's strategy now consists of choosing a new transition dynamics $\widehat P$ that makes the target policy $\epsilon$-robust optimal. Again, we would like to utilize condition \eqref{off.attack.nec.cond}, however, now  we cannot precompute $\widehat{\mu}$ since we are modifying the transition dynamics. We have to explicitly account for that in our optimization problem, and we can do so by noting that stationary distribution $\widehat{\mu}^{\pi}$ satisfies:
\begin{align}\label{eq.stat_dist.from.kernel}
    \widehat{\mu}^{\pi}(s) = \sum_{s'} \widehat{P}(s', \pi(s'), s) \cdot \widehat{\mu}^{\pi}(s').
\end{align}
Therefore, by using condition \eqref{off.attack.nec.cond} and $\widehat{R} = \overline{R}$, we can formulate an optimization problem for finding an optimal dynamics poisoning attack:  
\begin{align}
  	\label{prob.off.dyn}
  	\tag{P2}
  	&\quad \min_{P,  \mu^{\targetpi}, \mu^{\neighbor{\targetpi}{s}{a}}} \quad \norm{P - \overline P}_{p}\\
	\notag
	&\quad \mbox{ s.t. } \quad \text{$ \mu^{\targetpi}$ and $P$ satisfy \eqref{eq.stat_dist.from.kernel}}, \\
	\notag
	&\qquad \quad \quad \forall s, a \ne \targetpi(s): \text{$ \mu^{\neighbor{\targetpi}{s}{a}}$ and $P$ satisfy \eqref{eq.stat_dist.from.kernel}}, \\
	\notag
	&\qquad \quad \quad \text{condition \eqref{off.attack.nec.cond} holds with $\widehat R = \overline{R}$ and $\widehat \mu = \mu$},\\
	\notag
	&\qquad \quad \quad \forall s, a, s': P(s,a,s') \ge \epsP \cdot \overline P(s,a,s').
  \end{align}
  Here, $\epsP \in (0, 1]$ in the last set of constraints is a given parameter, specifying how much one is allowed to decrease the original values of transition probabilities. $\epsP > 0$ is a regularity condition which ensures that the new MDP is ergodic.\footnote{This follows because strictly positive trajectory probabilities in MDP $\overline M$ remain strictly positive in the new MDP $\widehat M$, which further implies that all states remain recurrent and  aperiodic. } 
  %
  In the supplementary material, we provide a more detailed discussion on $\epsP$ and how to choose it.   
 Furthermore, the constraints related to condition \eqref{off.attack.nec.cond} are obtained by instantiating $\widehat{R}$ and $\widehat{\mu}$. In particular, $\widehat{R}$ is set to be the original reward function  $\overline{R}$ and $\widehat{\mu}$ is set to be the optimization variables $\mu$ (i.e., $\mu^{\targetpi}$ and $\mu^{\neighbor{\targetpi}{s}{a}}$).
 
%

Since the first and the second set of constraints are quadratic equality constraints, the optimization problem is in general non-convex. Furthermore, the third set of constraints defined by condition \eqref{off.attack.nec.cond}  
might not even be feasible, for example, when reward function is constant or almost constant (i.e., $\overline R(s,a) = \overline R(s',a')$ or more generally $|\overline R(s,a) - \overline R(s',a')| \le \epsilon$). 
In Theorem \ref{theorem_off_dyn} we provide a sufficient condition that renders the optimization problem feasible and we provide bounds on the cost of the attack.
%
Given the nature of the constraints in~\eqref{prob.off.dyn}, we describe an approach for finding an approximate solution in Section \ref{sec.experiments}.

To introduce the formal statement, let us first define relevant quantities. Let $\overline D^{\pi}$ denote the \emph{diameter} of Markov chain induced by policy $\pi$ in MDP $\overline M$, i.e., $\overline D^{\pi} = \max_{s,s'} \overline T^{\pi}(s,s')$, where $\overline T^{\pi}(s,s')$ is the expected time to reach $s'$ starting from $s$ and following policy $\pi$. Next we define {\em value function} of a policy and a few related quantities.
Value function of policy $\pi$ in MDP $\overline M$ is defined as
\begin{align*}
\overline V^\pi(s) = \lim_{N\to \infty} \expct{\sum^{N-1}_{t=0} \Big(\overline{R}(s_t, a_t) - \overline \rho_\pi\Big)| s_0=s, \pi},
\end{align*}
where the expectation is over the rewards received by agent (relative to the offset $\overline \rho_\pi$) when starting from initial state $s_0 = s$ and following policy $\pi$.
Furthermore, let us denote by $\overline B^{\pi}(s) = \sum_{s'} \overline P(s,\pi(s),s') \cdot \overline V^{\pi}(s')$ the expected value of the next state given the current state $s$ and policy $\pi$. We also define $\overline V_{\textnormal{min}}^{\pi} = \min_{s} \overline V^{\pi}(s)$ to be the minimum value of $\overline V^{\pi}$, and $sp(\overline V^{\pi}) = \max_{s} \overline V^{\pi}(s) - \min_{s} \overline V^{\pi}(s)$ to be the span of $\overline V^{\pi}$.
Finally, we introduce the following two quantities important for the theorem statement:
 \begin{itemize}
     \item $\overline \beta^{\pi}_{\epsP}(s, a)$ defined as
     \begin{align}
            \overline \beta^{\pi}_{\epsP}(s,a) &= \overline R(s, \pi(s)) - \overline R(s,a) \\\notag &+
     \overline B^{\pi}(s) - \overline V_{\textnormal{min}}^{\pi} - \epsP \cdot sp(\overline V^\pi).
    \end{align}
    \item $\overline \Lambda(s,a)$ defined as
    \begin{align}\label{eq_def_lambda}
    \overline \Lambda(s,a) = 
    \begin{cases}
        \frac{\overline \chi_0^{\targetpi}( s,a) + \epsilon \cdot (1 + \overline D^{\targetpi})}{ \overline \chi_0^{\targetpi}(s,a) + \overline \beta^{\targetpi}_{\epsP}(s,a)}& \textnormal{if \ } \overline \chi_{\epsilon}^{\targetpi}(s,a) > 0, \\
        0 & \textnormal{otherwise},
    \end{cases}
    \end{align}
    where $\overline \chi_0^{\targetpi}$ is obtained from  $\overline \chi_{\epsilon}^{\targetpi}$ by setting $\epsilon = 0$.  
 \end{itemize}

\begin{theorem}\label{theorem_off_dyn}
If there exists a solution $\widehat P$ to the optimization problem \eqref{prob.off.dyn}, its cost 
    satisfies
    \begin{align*}
       \norm{\widehat P - \overline P}_{p} \cdot  \norm{\overline V^{\targetpi}}_{\infty} \ge \frac{\epsP \cdot \overline \alpha}{2}  \cdot \norm{ \overline \chi_0^{\targetpi}}_{\infty}.
    \end{align*}
    If for every state $s$ and action $a$, it holds that either $\overline \beta^{\targetpi}_{\epsP}(s,a) \ge \epsilon \cdot (1 + \overline D^{\targetpi})$ \texttt{OR} $\overline \chi_{\epsilon}^{\targetpi}(s,a) = 0$, then the optimization problem 
    \eqref{prob.off.dyn}
    has a solution $\widehat P$ whose cost is upper bounded by $\norm{\widehat P - \overline P}_{p} \le 2\cdot \norm{\overline \Lambda}_{p}$.
\end{theorem}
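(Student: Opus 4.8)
The plan is to treat the two halves separately: a perturbation/contraction estimate for the lower bound, and an explicit row‑by‑row construction of $\widehat P$ for feasibility and the upper bound. Throughout, write $P^{\pi}$ for the $|S|\times|S|$ matrix with entries $P(s,\pi(s),s')$, recall that only the dynamics are poisoned (so $\widehat R=\overline R$), and freely use the average‑reward performance‑difference identity and the transfer identity $\widehat\rho^{\targetpi}-\overline\rho^{\targetpi}=\sum_x\widehat\mu^{\targetpi}(x)\langle(\widehat P-\overline P)(x,\targetpi(x),\cdot),\overline V^{\targetpi}\rangle$ (proved from $\overline R^{\targetpi}=\overline\rho^{\targetpi}\mathbf 1+\overline V^{\targetpi}-\overline P^{\targetpi}\overline V^{\targetpi}$ and stationarity of $\widehat\mu^{\targetpi}$ under $\widehat P^{\targetpi}$).

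\emph{Lower bound.} Pick $(s^*,a^*)$ attaining $\norm{\overline\chi_0^{\targetpi}}_{\infty}=:C$ (if $C=0$ the bound is vacuous) and set $\pi'=\neighbor{\targetpi}{s^*}{a^*}$; condition \eqref{off.attack.nec.cond} for this neighbor reads $\widehat\rho^{\targetpi}\ge\widehat\rho^{\pi'}+\epsilon$. The performance‑difference identity in any MDP gives $\rho^{\pi'}-\rho^{\targetpi}=\mu^{\pi'}(s^*)\big(\overline R(s^*,a^*)+\langle P(s^*,a^*,\cdot),V^{\targetpi}\rangle-\rho^{\targetpi}-V^{\targetpi}(s^*)\big)$ with all quantities evaluated in that MDP: in $\overline M$ the bracket equals $C>0$ (definition of $\overline\chi_0$), and feasibility makes the analogous bracket in $\widehat M$ nonpositive. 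Subtracting, $C$ is bounded by the change of $\langle P(s^*,a^*,\cdot),V^{\targetpi}\rangle-\rho^{\targetpi}-V^{\targetpi}(s^*)$ across the two MDPs, which splits into $\langle(\widehat P-\overline P)(s^*,a^*,\cdot),\overline V^{\targetpi}\rangle$, the gain change, and the bias change. The transfer identity yields $|\widehat\rho^{\targetpi}-\overline\rho^{\targetpi}|\le\norm{\widehat P-\overline P}_{p}\cdot\norm{\overline V^{\targetpi}}_{\infty}$. The crux is the bias change: $\widehat V^{\targetpi}-\overline V^{\targetpi}$ solves $(\mathbf I-\widehat P^{\targetpi})\Delta=(\widehat P^{\targetpi}-\overline P^{\targetpi})\overline V^{\targetpi}-(\widehat\rho^{\targetpi}-\overline\rho^{\targetpi})\mathbf 1$ with right‑hand side orthogonal to $\widehat\mu^{\targetpi}$, so its span is at most $\tfrac{1}{\widehat\alpha^{\targetpi}}$ times the span of the right‑hand side by the Hajnal/Dobrushin contraction; the constraint $\widehat P\ge\epsP\overline P$ forces the Hajnal coefficient of $\widehat P^{\targetpi}$ to satisfy $\widehat\alpha^{\targetpi}\ge\epsP\overline\alpha$, which is exactly where $\epsP$ and $\overline\alpha$ enter (one also needs $\norm{\widehat\mu^{\targetpi}-\overline\mu^{\targetpi}}_1$ bounded similarly to pass from $sp(\Delta)$ to $\norm{\Delta}_\infty$). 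Assembling the pieces and tracking the factor‑$2$ losses in the span/centering inequalities gives $\norm{\widehat P-\overline P}_{p}\cdot\norm{\overline V^{\targetpi}}_{\infty}\ge\frac{\epsP\overline\alpha}{2}\norm{\overline\chi_0^{\targetpi}}_{\infty}$.

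\emph{Feasibility and upper bound.} The construction modifies only rows $(s,a)$ with $a\ne\targetpi(s)$. Such a row is used by the single neighbor $\neighbor{\targetpi}{s}{a}$ and by no target row, so $\widehat\rho^{\targetpi}=\overline\rho^{\targetpi}$ and $\widehat V^{\targetpi}=\overline V^{\targetpi}$, and \eqref{off.attack.nec.cond} decouples into independent requirements $\widehat\rho^{\neighbor{\targetpi}{s}{a}}\le\overline\rho^{\targetpi}-\epsilon$, one per $(s,a)$. When $\overline\chi_{\epsilon}^{\targetpi}(s,a)=0$ the original row already works (cost $0$, and $\overline\Lambda(s,a)=0$); otherwise set $\widehat P(s,a,\cdot)=(1-\lambda_{s,a})\overline P(s,a,\cdot)+\lambda_{s,a}\mathbf 1_{s^-}$ with $s^-\in\argmin_x\overline V^{\targetpi}(x)$. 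A first‑passage/renewal‑reward decomposition of the excursion back to $s$ under $\neighbor{\targetpi}{s}{a}$ in $\widehat M$ (using that, from $s'$, the reward accrued until hitting $s$ is $\overline V^{\targetpi}(s')-\overline V^{\targetpi}(s)+\overline\rho^{\targetpi}\overline T^{\targetpi}(s',s)$) yields the clean identity $\widehat\rho^{\neighbor{\targetpi}{s}{a}}-\overline\rho^{\targetpi}=\big(\overline R(s,a)-\overline R(s,\targetpi(s))+\widehat B_a(s)-\overline B^{\targetpi}(s)\big)/(1+\widehat L_a(s))$, where $\widehat B_a(s)=\langle\widehat P(s,a,\cdot),\overline V^{\targetpi}\rangle$ and $\widehat L_a(s)=\sum_{s'}\widehat P(s,a,s')\overline T^{\targetpi}(s',s)\le\overline D^{\targetpi}$; evaluated at $\lambda_{s,a}=0$ (and using $1+\overline L_a(s)=1/\overline\mu^{\neighbor{\targetpi}{s}{a}}(s)$) it recovers $\overline\chi_0^{\targetpi}(s,a)=\pos{\overline R(s,a)-\overline R(s,\targetpi(s))+\overline B_a(s)-\overline B^{\targetpi}(s)}$. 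Substituting $\widehat B_a(s)=(1-\lambda_{s,a})\overline B_a(s)+\lambda_{s,a}\overline V_{\textnormal{min}}^{\targetpi}$ and $1+\widehat L_a(s)\le1+\overline D^{\targetpi}$ reduces the requirement to the scalar inequality $\lambda_{s,a}\big(\overline B_a(s)-\overline V_{\textnormal{min}}^{\targetpi}\big)\ge\big(\overline R(s,a)-\overline R(s,\targetpi(s))+\overline B_a(s)-\overline B^{\targetpi}(s)\big)+\epsilon(1+\overline D^{\targetpi})$, which we make an equality.

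\emph{Closing the upper bound.} Two checks remain, both running on the identity $\overline\chi_0^{\targetpi}(s,a)+\overline\beta_{\epsP}^{\targetpi}(s,a)=\overline B_a(s)-\overline V_{\textnormal{min}}^{\targetpi}-\epsP\,sp(\overline V^{\targetpi})$, obtained by adding the defining expressions and invoking the Bellman equation for $\targetpi$ at $s$. First, the numerator of $\lambda_{s,a}$ is at most $\overline\chi_0^{\targetpi}(s,a)+\epsilon(1+\overline D^{\targetpi})$, and the identity gives $\overline B_a(s)-\overline V_{\textnormal{min}}^{\targetpi}\ge\overline\chi_0^{\targetpi}(s,a)+\overline\beta_{\epsP}^{\targetpi}(s,a)$, so $\lambda_{s,a}\le\overline\Lambda(s,a)$; since $\norm{\widehat P(s,a,\cdot)-\overline P(s,a,\cdot)}_1=2\lambda_{s,a}(1-\overline P(s,a,s^-))\le2\lambda_{s,a}$, summing into the $\ell_p$‑norm gives $\norm{\widehat P-\overline P}_{p}\le2\norm{\overline\Lambda}_{p}$. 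Second, the last constraint of \eqref{prob.off.dyn}, $\widehat P(s,a,\cdot)\ge\epsP\overline P(s,a,\cdot)$, is equivalent to $\lambda_{s,a}\le1-\epsP$; this follows from $\overline B_a(s)-\overline V_{\textnormal{min}}^{\targetpi}\le sp(\overline V^{\targetpi})$ together with the hypothesis $\overline\beta_{\epsP}^{\targetpi}(s,a)\ge\epsilon(1+\overline D^{\targetpi})$ and the same identity. Ergodicity of $\widehat M$ is then automatic from $\epsP>0$, and one takes $\mu^{\targetpi}$ and $\mu^{\neighbor{\targetpi}{s}{a}}$ to be the stationary distributions in $\widehat M$, producing a feasible solution of \eqref{prob.off.dyn}. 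The main obstacle in this half is the delicate choice of the numerator of $\lambda_{s,a}$—the genuine one‑step advantage rather than its positive part $\overline\chi_0^{\targetpi}(s,a)$—so that the cost bound and the admissibility bound hold simultaneously in the degenerate range $\overline\chi_0^{\targetpi}(s,a)=0<\overline\chi_{\epsilon}^{\targetpi}(s,a)$.
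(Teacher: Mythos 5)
Your second half (feasibility and the upper bound) is essentially the paper's own construction: shift each non-target row with $\overline\chi^{\targetpi}_{\epsilon}(s,a)>0$ toward a sink state $s^-\in\arg\min_{s'}\overline V^{\targetpi}(s')$, keep target rows untouched so that $\widehat V^{\targetpi}=\overline V^{\targetpi}$ and $\widehat\rho^{\targetpi}=\overline\rho^{\targetpi}$, use $1/\widehat\mu^{\neighbor{\targetpi}{s}{a}}(s)=1+\sum_{s'\ne s}\widehat P(s,a,s')\overline T^{\targetpi}(s',s)\le 1+\overline D^{\targetpi}$, and choose the per-row mixing weight so that $\lambda_{s,a}\le\overline\Lambda(s,a)$ and the row $\ell_1$-change is at most $2\lambda_{s,a}$. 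The scalar inequality you derive for $\lambda_{s,a}$ matches the paper's, and your explicit check $\lambda_{s,a}\le 1-\epsP$ correctly handles the $\widehat P\ge\epsP\overline P$ constraint for your point-mass mixture (the paper instead mixes with a kernel that already retains $\epsP\overline P$, so any $\lambda\in[0,1]$ is admissible). One imprecision: the ``identity'' $\overline\chi_0^{\targetpi}(s,a)+\overline\beta^{\targetpi}_{\epsP}(s,a)=\overline B_a(s)-\overline V^{\targetpi}_{\textnormal{min}}-\epsP\, sp(\overline V^{\targetpi})$ holds only when the one-step advantage $g(s,a)=\overline R(s,a)-\overline R(s,\targetpi(s))+\overline B_a(s)-\overline B^{\targetpi}(s)$ is nonnegative; in the range $\overline\chi_0^{\targetpi}(s,a)=0<\overline\chi_{\epsilon}^{\targetpi}(s,a)$ with $g(s,a)<-\epsP\, sp(\overline V^{\targetpi})$ your stated denominator bound $\overline B_a(s)-\overline V^{\targetpi}_{\textnormal{min}}\ge\overline\chi_0^{\targetpi}(s,a)+\overline\beta^{\targetpi}_{\epsP}(s,a)$ can fail. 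The conclusion $\lambda_{s,a}\le\overline\Lambda(s,a)$ is still true there (use $g\le 0$ and $\overline\beta^{\targetpi}_{\epsP}\ge\epsilon(1+\overline D^{\targetpi})$ directly), so this is fixable, but as written that step is not justified.

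The genuine gap is in your lower bound. You decompose the loss of advantage at the maximizing pair into three pieces: the direct term $\langle(\overline P-\widehat P)(s^*,a^*,\cdot),\overline V^{\targetpi}\rangle$, the gain change $\widehat\rho^{\targetpi}-\overline\rho^{\targetpi}$, and the bias change controlled through the Poisson equation and the Hajnal contraction of the poisoned target-policy chain. Bounding these pieces in the natural way gives $\norm{\overline\chi_0^{\targetpi}}_{\infty}\le\bigl(2+\tfrac{2}{\epsP\overline\alpha}\bigr)\norm{\widehat P-\overline P}_{\infty}\norm{\overline V^{\targetpi}}_{\infty}$, i.e.\ a lower bound with constant $\tfrac{\epsP\overline\alpha}{2(1+\epsP\overline\alpha)}$, which is strictly weaker than the claimed $\tfrac{\epsP\overline\alpha}{2}$; your closing sentence ``assembling the pieces \dots gives'' the stated bound is exactly where the argument does not deliver. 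The reason is structural: in your split, the direct and gain terms sit outside the $1/\widehat\alpha$ amplification, whereas the theorem's constant comes from a single span-contraction inequality at the level of $Q$-functions --- the paper proves $\norm{R_1-R_2}_{\infty}+\norm{P_1-P_2}_{\infty}\norm{V_1^{\pi}}_{\infty}\ge\tfrac{\alpha_2}{2}\, sp(Q_1^{\pi}-Q_2^{\pi})$ with $\alpha_2$ the Hajnal coefficient of the entire poisoned kernel, combines it with $sp(\widehat Q^{\targetpi}-\overline Q^{\targetpi})\ge\norm{\overline\chi_0^{\targetpi}}_{\infty}$ (from $\epsilon$-robust optimality of $\targetpi$ in $\widehat M$ and the gain-difference identity), and only then uses $\widehat\alpha\ge\epsP\overline\alpha$, so the whole right-hand side is divided by $\widehat\alpha$ once. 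Unless you redo the estimate at that $Q$-level (or otherwise absorb the direct and gain terms into the contraction), your route establishes only a weakened version of the first inequality, not the one stated.
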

The proof can be found in the supplementary material. The proof technique for the first claim is similar to the one used for proving the lower bound in Theorem \ref{theorem_off_unc}. 

The sufficient condition of 
Theorem \ref{theorem_off_dyn} suggests that the initial disadvantage of the target policy $\targetpi$ relative to its neighbor policy $\neighbor{\targetpi}{s}{a}$ (captured by $\overline \chi_{\epsilon}^{\targetpi}(s,a)$) can be overcome if $\beta^{\targetpi}_{\epsP}(s,a)$ is large enough. 
This in turn means that either $\overline R(s, \targetpi(s))$ is sufficiently larger than $ \overline R(s,a)$, or the future prospect  of the target policy (i.e., $\overline B^{\targetpi}(s)$)  is greater than the myopic disadvantage   (i.e., $\overline R(s, a) - \overline R(s, \targetpi(s))$), by a margin which depends on the value function $\overline V^{\targetpi}$. 
%
The proof for the upper bound is constructive, and it is based on an attack that treats state $s_{\textnormal{sink}} \in \arg\min_{s'} \overline V^{\targetpi}(s')$ as a \emph{sink} state. Then, we shift transitions of state-action pairs that have $\overline \chi_{\epsilon}^{\targetpi}(s,a) > 0$ towards this sink state.
Notice that this is a type of an attack that does not alter the transition dynamics for the the target policy. We further discuss this type of attacks in Section \ref{sec.on.dynamics}.

\section{Attacks in Online Setting}\label{sec.onlineattacks}
We now turn to attacks on an agent that learns over time using the environment feedback. 
Unlike the planning agent from the previous section, an online learning agent derives its policy from the interaction history, i.e., tuples of the form  $(s_{t}, a_{t}, r_{t}, s_{t+1})$. To attack an online learning agent, an attacker changes the environment feedback, i.e.,  reward $r_t$ or state $s_{t+1}$. 


\subsection{Overview of the Approach}
\label{sec.on.ideas-definitions}

 The underlying idea behind our approach is to utilize the fact that a learning agent has a bounded regret, and thus chooses a suboptimal action a bounded number of times. Hence, to steer a learning agent toward selecting the target policy, it suffices for the attacker to provide the feedback (i.e., reward $r_t$ and the next state $s_{t+1}$) sampled from an MDP in which the target policy is $\epsilon$-robust optimal. Such an MDP can be derived using the results from the previous sections.  
We first show that this approach is sound: assuming that an ergodic MDP $M$ has 
$\targetpi$ as its $\epsilon$-robust optimal policy,
the expected number of steps in which a learner whose experience is drawn from MDP $M$ deviates from $\targetpi$ 
is of order $\expct{\regret(T, M)}$.
\begin{lemma}
	\label{lemma.on.known.nontarget}
	Consider an ergodic MDP $M$ that has 
	$\targetpi$ as its $\epsilon$-robust optimal policy,
	and 
	an online learning agent whose expected regret on an MDP $M$ is $\expct{\regret(T, M)}$. The average mismatch of the agent is bounded by $\expct{\avgmissm(T)} \le \frac{1}{T} \cdot K(T, M)$, where
	\begin{align}\label{eq_def_K}
	    K(T, M) = \frac{\mu_{\textnormal{max}}}{\epsilon} \cdot \Big(\expct{\regret(T, M)} + 2\norm{V^{\targetpi}}_\infty\Big),
	\end{align}
with $\mu_{\textnormal{max}} := \max_{s,a}\mu^{\neighbor{\targetpi}{s}{a}} (s)$. Here, $\mu^{\pi}$ and $V^{\pi}$ are respectively the stationary distribution and the value function of a policy $\pi$ on MDP $M$.
\end{lemma}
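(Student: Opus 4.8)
The plan is to convert the learner's regret on $M$ into a count of mismatched steps via the average-reward Bellman (Poisson) equation for the target policy. Since $M$ is ergodic and $\targetpi$ is ($\epsilon$-robust) optimal in $M$, we have $\rho^* = \rho^{\targetpi}$, and the bias $V^{\targetpi}$ defined in the excerpt satisfies $\rho^{\targetpi} + V^{\targetpi}(x) = R(x,\targetpi(x)) + \sum_{x'} P(x,\targetpi(x),x')\,V^{\targetpi}(x')$ for every state $x$ (standard for ergodic MDPs; see Puterman). Introduce the advantage gap
$$\Delta(x,a) := \rho^{\targetpi} + V^{\targetpi}(x) - R(x,a) - \sum_{x'} P(x,a,x')\,V^{\targetpi}(x'),$$
so that $\Delta(x,\targetpi(x)) = 0$ by the Poisson equation. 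Writing $r_t = R(s_t,a_t)$, a one-line algebraic identity gives the per-step decomposition $\rho^* - r_t = \Delta(s_t,a_t) + \big(\sum_{x'} P(s_t,a_t,x')\,V^{\targetpi}(x') - V^{\targetpi}(s_t)\big)$. Summing over $t=0,\dots,T-1$, taking expectations, and using $\expct{\sum_{x'} P(s_t,a_t,x')\,V^{\targetpi}(x') - V^{\targetpi}(s_t)} = \expct{V^{\targetpi}(s_{t+1})} - \expct{V^{\targetpi}(s_t)}$ (tower property: $s_{t+1}\sim P(s_t,a_t,\cdot)$ even though $a_t$ is chosen adaptively), the second term telescopes to $\expct{V^{\targetpi}(s_T)} - \expct{V^{\targetpi}(s_0)} \ge -2\norm{V^{\targetpi}}_\infty$, hence
$$\expct{\textstyle\sum_{t=0}^{T-1}\Delta(s_t,a_t)} \le \expct{\regret(T,M)} + 2\norm{V^{\targetpi}}_\infty.$$

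The key step is the pointwise lower bound $\Delta(x,a) \ge \epsilon/\mu_{\textnormal{max}}$ for every $x$ and $a\ne\targetpi(x)$, which is where $\epsilon$-robust optimality and Lemma~\ref{lemma.using_neighbors} enter. Fix such $(x,a)$ and let $\pi' = \neighbor{\targetpi}{x}{a}$. Using that $\mu^{\pi'}$ is stationary for $\pi'$ (so $\sum_y \mu^{\pi'}(y)\,V^{\targetpi}(y) = \sum_y \mu^{\pi'}(y)\sum_{x'} P(y,\pi'(y),x')\,V^{\targetpi}(x')$) together with $\rho^{\pi'} = \sum_y \mu^{\pi'}(y)\,R(y,\pi'(y))$, I would derive the average-reward performance-difference identity
$$\rho^{\pi'} - \rho^{\targetpi} = -\sum_{y} \mu^{\pi'}(y)\,\Delta(y,\pi'(y)) = -\mu^{\pi'}(x)\,\Delta(x,a),$$
where the last equality uses $\Delta(y,\targetpi(y))=0$ for $y\ne x$ since $\pi'$ and $\targetpi$ agree off $x$. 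Lemma~\ref{lemma.using_neighbors} and $\epsilon$-robust optimality give $\rho^{\targetpi} \ge \rho^{\pi'} + \epsilon$, so $\mu^{\pi'}(x)\,\Delta(x,a) = \rho^{\targetpi}-\rho^{\pi'} \ge \epsilon$; since ergodicity gives $0 < \mu^{\pi'}(x) \le \mu_{\textnormal{max}}$, this yields $\Delta(x,a) \ge \epsilon/\mu_{\textnormal{max}}$. Together with $\Delta(x,\targetpi(x)) = 0$, we obtain the pointwise bound $\Delta(s_t,a_t) \ge \frac{\epsilon}{\mu_{\textnormal{max}}}\,\ind{a_t\ne\targetpi(s_t)}$.

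Combining the two displays, $\frac{\epsilon}{\mu_{\textnormal{max}}}\,\expct{\sum_{t=0}^{T-1}\ind{a_t\ne\targetpi(s_t)}} \le \expct{\regret(T,M)} + 2\norm{V^{\targetpi}}_\infty$; dividing by $T$ and recalling $\avgmissm(T)=\frac{1}{T}\sum_{t=0}^{T-1}\ind{a_t\ne\targetpi(s_t)}$ gives $\expct{\avgmissm(T)} \le \frac{1}{T}K(T,M)$, as claimed. I expect the main obstacle to be the careful derivation of the neighbor-restricted performance-difference identity and the legitimacy of dividing by $\mu^{\pi'}(x)$ — that is, translating the global $\epsilon$-robustness guarantee into a per-state-action advantage lower bound; once that is in hand, the regret decomposition and the telescoping-martingale bookkeeping are routine, the only subtlety there being that $a_t$ is adaptive while $s_{t+1}$ is still drawn from $P(s_t,a_t,\cdot)$.
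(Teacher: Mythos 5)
Your proof is correct and rests on the same two pillars as the paper's: the neighbor-policy gain-difference identity (Corollary~\ref{gain_diff_neighbor}, i.e.\ Lemma~\ref{lemma_qrho_relate}, which you re-derive from scratch as your performance-difference identity) to obtain the per-state-action advantage gap $\Delta(x,a)\ge \epsilon/\mu_{\textnormal{max}}$ for $a\ne\targetpi(x)$, and a Poisson-equation regret decomposition whose value-function term telescopes and is bounded by $2\norm{V^{\targetpi}}_\infty$. The one place you genuinely diverge is in handling the learner's history-dependence: the paper first invokes Theorem~5.5.1 of \cite{Puterman1994} to replace the adaptive algorithm by an equivalent history-independent randomized Markov policy with the same state-action marginals, and then telescopes deterministically on the marginal state distributions via $d_{t+1}=d_t P_{q_t}$; you instead telescope directly on the trajectory using the tower property, $\expct{V^{\targetpi}(s_{t+1})\mid s_t,a_t}=\sum_{x'}P(s_t,a_t,x')\,V^{\targetpi}(x')$, which is valid for an arbitrary adaptive learner and renders the Markov-policy reduction unnecessary. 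Your route is slightly more economical and equally rigorous; the paper's reduction buys nothing extra for this particular lemma. I see no gaps.
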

The proof of the lemma can be found in the supplementary material. The lemma implies that $o(1)$ average mismatch can be achieved in expectation using the sampling based attack described above, 
assuming that $\widehat{M}$ is ergodic and that $\targetpi$ is its $\epsilon$-robust optimal policy. Since the solutions to the optimization problems \eqref{prob.off.unc} and \eqref{prob.off.dyn} from Section \ref{sec.offlineattacks} satisfy this, the lemma applies to them as well. 

However, the expected average cost of such an attack could be $\Omega(1)$ (non-diminishing over time) for a learner with subliner expected regret, unless the original and the sampling MDP have equal rewards and transition probabilities for the state-action pairs of the target policy. Intuitively, if a learner follows the target policy and there exists $s$ for which $\widehat R(s, \targetpi(s)) \ne \overline{R}(s, \targetpi(s)) $  or $\widehat{P}(s, \targetpi(s), .) \ne \overline{P}(s, \targetpi(s), .)$, then the attacker would incur a non-zero cost whenever the learner visits $s$.  
%
To avoid this issue, we need to enforce constraints on the sampling MDP specifying that the attack does not alter rewards and transitions that correspond to the state-action pairs of the target policy. This brings us to the following template that we utilize for attacks on an online learner:
\begin{itemize}
    \item Modify the optimization problems \eqref{prob.off.unc} and \eqref{prob.off.dyn} by respectively adding constraints $\widehat R(s, \targetpi(s)) = \overline{R}(s, \targetpi(s))$ and $\widehat P(s, \targetpi(s), s') = \overline{P}(s, \targetpi(s), s')$. 
    \item Obtain the sampling MDP $\widehat M$ by solving the more constrained version of  \eqref{prob.off.unc} or \eqref{prob.off.dyn}.
    \item Use the sampling MDP $\widehat M$ instead of the environment $\overline M$ during the learning process, i.e., obtain $r_t$ from $\widehat R(s_t, a_t)$ in the case of poisoning rewards and $s_{t+1} \sim \widehat P(s_t, a_t, .)$ in the case of poisoning dynamics (see Figure~\ref{fig:model.online}). 
\end{itemize}

\subsection{Attacks in Online Setting via Poisoning Rewards}
\label{sec.on.reward}



For the case of attacks via poisoning rewards, the sampling MDP $\widehat M$ differs from the original MDP $\overline M$ only in its reward function $\widehat R$. 
Following the attack template from the above, we first find $\widehat R$ using a modified version of  \eqref{prob.off.unc} which ensures that $\widehat R(s, \targetpi(s)) = \overline{R}(s, \targetpi(s))$, i.e.: 
\begin{align}
\label{prob.on.known.reward_n1}
\tag{P3}
\min_{R} &\quad \norm{R - \overline{R}}_{p}\\
\notag
 \mbox{ s.t. }  &\quad  \forall s: \  R(s, \targetpi(s)) = \overline{R}(s, \targetpi(s)),\\
\notag
&\quad \textnormal{all constraints from problem $\eqref{prob.off.unc}$}.
\end{align}
As we show in the supplementary material, the optimal solution to the optimization problem \eqref{prob.on.known.reward_n1} is $\widehat {R}(s,a) = \overline{R}(s,a) - \overline \chi^{\targetpi}_{\epsilon}(s,a)$. Since $\overline \chi^{\targetpi}_{\epsilon}(s,\targetpi(s)) = 0$, using the definition of $\avgcost(T)$ we conclude that an upper bound on $\expct{\avgcost(T)}$ could be computed using (i) the number of times that a non-target action is selected and (ii) the maximum value of $\overline \chi^{\targetpi}_{\epsilon}(s,a)$. The quantity in (i) is specified in the result of Lemma \ref{lemma.on.known.nontarget}, which brings us to the main result of this subsection.
\begin{theorem}
	\label{theorem.on.known.reward}
	Let $\widehat R$ be the optimal solution to \eqref{prob.on.known.reward_n1}.
	 Consider the attack defined by $r_t$ obtained from $\widehat{R}(s_t, a_t)$ and $s_{t + 1} \sim \overline{P}(s_t, a_t, .)$, and an online learning agent whose expected regret on an MDP $M$ is $\expct{\regret(T, M)}$. 
	The average mismatch of the learner is in expectation upper bounded by $$\expct{\avgmissm(T)} \le \frac{K(T, \widehat M)}{T},$$ where $K$ is defined in \eqref{eq_def_K}.
	Furthermore, the average attack cost is in expectation upper bounded by $$\expct{\avgcost(T)} \le \norm{\overline \chi^{\targetpi}_{\epsilon}}_\infty \cdot \frac{\big (K(T, \widehat M)\big )^{1/p}}{T}.$$
\end{theorem}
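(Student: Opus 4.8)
The plan is to reduce both claims to Lemma~\ref{lemma.on.known.nontarget} applied to the sampling MDP $\widehat M = (S, A, \widehat R, \overline P)$. First I would observe that under the proposed attack the learner's experience tuples $(s_t, a_t, r_t, s_{t+1})$ are exactly those generated by interacting with $\widehat M$: the reward $r_t$ is drawn according to $\widehat R$ by construction, and $s_{t+1} \sim \overline P(s_t,a_t,.)$, so $\overline P$ plays the role of $\widehat P$ in this reward-only attack. Since $\widehat M$ has the same transition kernel as the ergodic MDP $\overline M$, it is ergodic; and since \eqref{prob.on.known.reward_n1} includes all constraints of \eqref{prob.off.unc}, the target policy $\targetpi$ is $\epsilon$-robust optimal in $\widehat M$. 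Lemma~\ref{lemma.on.known.nontarget} then applies with $M = \widehat M$ and immediately yields the first claim, $\expct{\avgmissm(T)} \le \frac{1}{T}\,K(T, \widehat M)$.

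For the cost bound I would start from the explicit form of the optimum of \eqref{prob.on.known.reward_n1}, namely $\widehat R(s,a) = \overline R(s,a) - \overline \chi^{\targetpi}_{\epsilon}(s,a)$ (established in the supplementary material), so that the per-step cost is $\big|\widehat R(s_t,a_t) - \overline R(s_t,a_t)\big| = \overline \chi^{\targetpi}_{\epsilon}(s_t,a_t)$, using that $\overline \chi^{\targetpi}_{\epsilon} \ge 0$ by the $\pos{\cdot}$ in \eqref{chi.definition}. Because $\overline \chi^{\targetpi}_{\epsilon}(s,\targetpi(s)) = 0$, the per-step cost vanishes whenever $a_t = \targetpi(s_t)$, which gives the pathwise bound $\big(\overline \chi^{\targetpi}_{\epsilon}(s_t,a_t)\big)^p \le \norm{\overline \chi^{\targetpi}_{\epsilon}}_\infty^p \cdot \ind{a_t \ne \targetpi(s_t)}$. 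Summing over $t = 0,\dots,T-1$, taking $p$-th roots, and dividing by $T$ yields $\avgcost(T) \le \norm{\overline \chi^{\targetpi}_{\epsilon}}_\infty \cdot \frac{(T\cdot\avgmissm(T))^{1/p}}{T}$ for every realization.

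Finally I would take expectations and move the expectation inside the concave map $x \mapsto x^{1/p}$ (valid since $p \ge 1$) via Jensen's inequality: $\expct{(T\cdot\avgmissm(T))^{1/p}} \le (T\cdot\expct{\avgmissm(T)})^{1/p} \le \big(K(T,\widehat M)\big)^{1/p}$, where the last inequality is the first claim. This gives $\expct{\avgcost(T)} \le \norm{\overline \chi^{\targetpi}_{\epsilon}}_\infty \cdot \frac{(K(T,\widehat M))^{1/p}}{T}$, completing the proof.

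The genuinely non-trivial inputs — Lemma~\ref{lemma.on.known.nontarget} and the closed form of the optimal $\widehat R$ for \eqref{prob.on.known.reward_n1} — are both cited from elsewhere, so the work here is the bookkeeping that links $\avgcost$ to $\avgmissm$ through $\overline \chi^{\targetpi}_{\epsilon}$. The only subtle point worth flagging is the Jensen step: it is precisely why the bound carries a $\big(K(T,\widehat M)\big)^{1/p}$ rather than a term linear in $K$; for $p=1$ it is an equality and the two agree.
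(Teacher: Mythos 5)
Your proposal is correct and follows essentially the same route as the paper's proof: reduce the mismatch bound to Lemma~\ref{lemma.on.known.nontarget} applied to $\widehat M=(S,A,\widehat R,\overline P)$, use the closed form $\widehat R=\overline R-\overline\chi^{\targetpi}_{\epsilon}$ together with $\overline\chi^{\targetpi}_{\epsilon}(s,\targetpi(s))=0$ to charge the cost only to mismatch steps, and apply Jensen's inequality for the concave map $x\mapsto x^{1/p}$. The only cosmetic difference is that you apply Jensen to $T\cdot\avgmissm(T)$ after a pathwise bound, whereas the paper applies it directly to the sum of $p$-th powers of the per-step costs; the two are equivalent.
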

\subsection{Attacks in Online Setting via Poisoning Dynamics}\label{sec.on.dynamics}

For the case of attacks via dynamics poisoning, the sampling MDP $\widehat M$ differs form the original MDP $\overline M$ in its transition dynamics $\widehat P$. 
Following the attack template,
we can derive the optimization problem for finding $\widehat P$ by simply adding the constraints $P(s, \targetpi(s), s') = \overline{P}(s, \targetpi(s), s')$ in the optimization problem \eqref{prob.off.dyn}, i.e.:
\begin{align}
  	\label{prob.on.known.dyn}
  	\tag{P4}
  	&\quad \min_{P, \mu^{\targetpi},  \mu^{\neighbor{\targetpi}{s}{a}}} \quad \norm{P - \overline P}_{p}\\
	\notag
	&\quad \mbox{ s.t. } \quad \forall s, s': P(s,\targetpi(s),s') =  \overline P(s,\targetpi(s),s'),\\
	\notag
	&\qquad \quad \quad \textnormal{all constraints from problem $\eqref{prob.off.dyn}$}.
 \end{align}
In the supplementary material, we show that the additional constraint allows us to transform \eqref{prob.on.known.dyn} into a tractable convex program with linear constraints.
 In fact, the convex program has a specific structure so that its optimal solution can be obtained by solving $|S|\cdot (|A|-1)$ simple convex problems---each of these simpler problems only involves $|S|$ variables and $|S|+1$ linear constraints.
Moreover, the convex program is feasible if the conditions of Theorem \ref{theorem_off_dyn} are met, and its 
solutions satisfy
$\norm{\widehat P(s,a,.) - \overline P(s,a,.)}_{1} \le 2 \cdot \overline \Lambda(s, a)$, where $\overline \Lambda$ is defined in \eqref{eq_def_lambda}.
%
%
Since $\overline \Lambda(s,\targetpi(s)) = 0$, this means that one can bound $\expct{\avgcost(T)}$ based on the number of times that a non-target action is selected and the maximum value of $\overline \Lambda(s,a)$. 
Combining this insight with Lemma \ref{lemma.on.known.nontarget}, we obtain the following theorem.

%
%
%
\begin{theorem}
	\label{theorem.on.known.dyn}
	%
	Assume that the sufficient condition of Theorem \ref{theorem_off_dyn} holds, 
	and let $\widehat P$ be the optimal solution to \eqref{prob.on.known.dyn}. 
	Consider the attack defined by $r_t$ obtained from $\overline{R}(s_t, a_t)$ and $s_{t + 1} \sim \widehat P(s_t, a_t, .) $, and an online learning agent whose expected regret on an MDP $M$ is $\expct{\regret(T, M)}$.
	The average mismatch of the learner is in expectation upper bounded by
	$$\expct{\avgmissm(T)} \le \frac{ K(T, \widehat M)}{T},$$ where $K$ is defined in \eqref{eq_def_K}.
	Furthermore, the expected attack cost is upper bounded by
	$$\expct{\avgcost(T)} \le 2 \cdot \norm{\overline \Lambda}_\infty \cdot \frac{\big (K(T, \widehat M)\big )^{1/p}}{T},$$ where $\overline \Lambda$ is defined in \eqref{eq_def_lambda}.
\end{theorem}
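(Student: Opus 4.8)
The plan is to combine three ingredients already available in the excerpt: (i) Lemma~\ref{lemma.on.known.nontarget}, which bounds the expected average mismatch of any sublinear-regret learner interacting with an ergodic MDP in which $\targetpi$ is $\epsilon$-robust optimal; (ii) the fact, asserted in Section~\ref{sec.on.dynamics}, that the constrained convex program \eqref{prob.on.known.dyn} is feasible under the sufficient condition of Theorem~\ref{theorem_off_dyn} and that its optimal solution $\widehat P$ satisfies the per-state-action bound $\norm{\widehat P(s,a,\cdot)-\overline P(s,a,\cdot)}_1 \le 2\,\overline\Lambda(s,a)$; and (iii) the structural observation that $\overline\Lambda(s,\targetpi(s)) = 0$, together with the constraint $P(s,\targetpi(s),s')=\overline P(s,\targetpi(s),s')$, so the attacker incurs zero cost at any time step where the agent plays the target action.

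\textbf{Step 1 (mismatch bound).} Since the sufficient condition of Theorem~\ref{theorem_off_dyn} holds, \eqref{prob.on.known.dyn} has a solution $\widehat P$; the regularity constraint $P(s,a,s')\ge\epsP\cdot\overline P(s,a,s')$ (inherited from \eqref{prob.off.dyn}) guarantees $\widehat M = (S,A,\overline R,\widehat P)$ is ergodic, and the constraints enforcing condition \eqref{off.attack.nec.cond} together with Lemma~\ref{lemma.using_neighbors} guarantee $\targetpi$ is $\epsilon$-robust optimal in $\widehat M$. The learner in the theorem draws its feedback ($r_t$ from $\overline R$, which equals $\widehat R$ here, and $s_{t+1}\sim\widehat P(s_t,a_t,\cdot)$) exactly from $\widehat M$, so Lemma~\ref{lemma.on.known.nontarget} applies verbatim with $M = \widehat M$, yielding $\expct{\avgmissm(T)} \le K(T,\widehat M)/T$.

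\textbf{Step 2 (cost bound).} Write out $\avgcost(T) = \frac1T\big(\sum_{t=0}^{T-1} d_t^p\big)^{1/p}$ with $d_t = \sum_{s'}|\widehat P(s_t,a_t,s')-\overline P(s_t,a_t,s')| = \norm{\widehat P(s_t,a_t,\cdot)-\overline P(s_t,a_t,\cdot)}_1$. When $a_t = \targetpi(s_t)$ the constraint in \eqref{prob.on.known.dyn} forces $d_t = 0$; when $a_t\ne\targetpi(s_t)$ we bound $d_t \le 2\overline\Lambda(s_t,a_t) \le 2\norm{\overline\Lambda}_\infty$. Hence $\sum_t d_t^p \le (2\norm{\overline\Lambda}_\infty)^p \cdot \#\{t : a_t\ne\targetpi(s_t)\} = (2\norm{\overline\Lambda}_\infty)^p \cdot \big(T\cdot\avgmissm(T)\big)$, so $\avgcost(T) \le 2\norm{\overline\Lambda}_\infty \cdot \big(\avgmissm(T)\big)^{1/p}/T^{1-1/p} = 2\norm{\overline\Lambda}_\infty \cdot \big(T\cdot\avgmissm(T)\big)^{1/p}/T$. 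Taking expectations and applying Jensen's inequality to the concave map $x\mapsto x^{1/p}$ (for $p\ge1$) gives $\expct{\big(T\cdot\avgmissm(T)\big)^{1/p}} \le \big(T\cdot\expct{\avgmissm(T)}\big)^{1/p} \le \big(K(T,\widehat M)\big)^{1/p}$ by Step~1, which delivers $\expct{\avgcost(T)} \le 2\norm{\overline\Lambda}_\infty\cdot\big(K(T,\widehat M)\big)^{1/p}/T$, as claimed.

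The routine parts are the $\ell_p$/counting manipulation and the Jensen step; the only real content is making sure the two imported facts apply, namely that $\widehat M$ is genuinely ergodic with $\targetpi$ $\epsilon$-robust optimal (so Lemma~\ref{lemma.on.known.nontarget} is invocable) and that the claimed per-$(s,a)$ bound $2\overline\Lambda(s,a)$ on the solution of \eqref{prob.on.known.dyn} indeed holds — both of these are established in the material surrounding \eqref{prob.on.known.dyn} and in Theorem~\ref{theorem_off_dyn}, so I would cite them rather than reprove them. The main obstacle, to the extent there is one, is the case $p>1$: one must be careful that the bound uses $\big(T\cdot\avgmissm(T)\big)^{1/p}$ rather than $\avgmissm(T)$ directly, and that the concavity/Jensen direction is the one that helps; I would double-check that $(2\norm{\overline\Lambda}_\infty)$ rather than a $\norm{\overline\Lambda}_p$-type quantity is what survives, which it does precisely because the nonzero $d_t$ are bounded uniformly and we only pay per mismatched step.
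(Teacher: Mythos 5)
Your proposal is correct and follows essentially the same route as the paper's proof: feasibility plus $\epsilon$-robust optimality of $\targetpi$ in $\widehat M$ lets Lemma~\ref{lemma.on.known.nontarget} give the mismatch bound, and the cost bound comes from noting that the per-step cost vanishes on target actions, is at most $2\norm{\overline\Lambda}_\infty$ otherwise, and applying Jensen's inequality to $x\mapsto x^{1/p}$. The only cosmetic difference is that you cite the per-row bound $\norm{\widehat P(s,a,\cdot)-\overline P(s,a,\cdot)}_1\le 2\overline\Lambda(s,a)$ directly, whereas the paper's appendix re-derives the needed $\norm{\widehat P-\overline P}_\infty\le 2\norm{\overline\Lambda}_\infty$ by comparing the optimum of \eqref{prob.on.known.dyn} against the explicit feasible point constructed in the proof of Theorem~\ref{theorem_off_dyn}; both rest on the same fact.
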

\looseness-1A direct consequence of Theorem \ref{theorem.on.known.reward} and Theorem \ref{theorem.on.known.dyn} is that for a learner with a sublinear expected regret, the average mismatch and the average cost are expected to decrease over time
 as $\bigObound\Big(\frac{\expct{\regret(T, \widehat{M})}}{T}\Big)$ and $\bigObound\Big(\frac{\expct{\regret(T, \widehat{M})}^{1/p}}{T}\Big)$.
 Note that while we considered $\ell_p$ norms with $p \ge 1$ to define the attack cost, the above results can be generalized to include the case of $p=0$. For $p=0$, the expected value of the average cost is simply upper bounded by  $\frac{K(T, \hat{M})}{T}$, and this follows from Lemma \ref{lemma.on.known.nontarget}.

\section{Numerical Simulations}\label{sec.experiments}
We perform numerical simulations on an environment represented as an MDP with four states and two actions, see Figure~\ref{fig:environment} for details. Even though simple, this environment provides a very rich and an intuitive problem setting to validate the theoretical statements  and understand the effectiveness of the attacks by varying different parameters.
We also vary the number of states in the MDP to check the efficiency of solving different optimization problems, and report run times. We further extend our experimental evaluation in the supplementary material and report results on an additional environment.

\begin{minipage}{\linewidth}
    \makebox[\linewidth]{
    \includegraphics[width=0.6\textwidth]{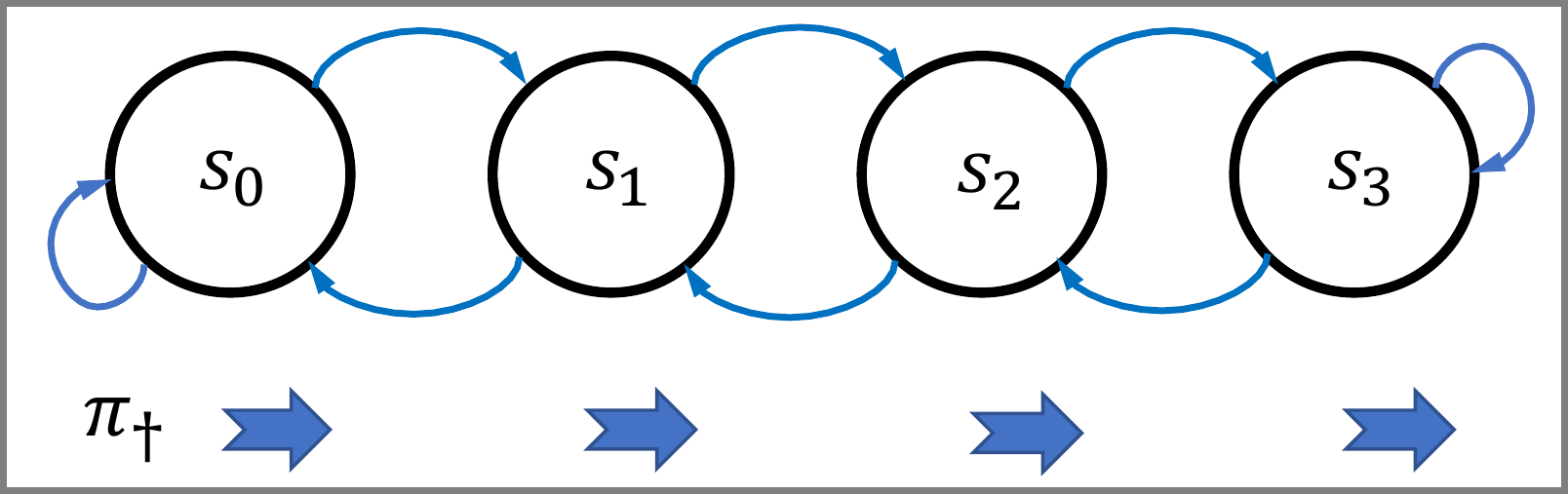}
  	}
  	\captionof{figure}{The environment has $|S| = 4$ states and $|A|=2$ actions given by $\{\texttt{left}, \texttt{right}\}$. The original reward function $\overline{R}$ is action independent and has the following values: $s_1$ and $s_2$ are rewarding states with $\overline{R}(s_1,.) =  \overline{R}(s_2,.) = 0.5$; state $s_3$ has negative reward of $\overline{R}(s_3,.) = -0.5$, and the reward of the state $s_0$ given by $\overline{R}(s_0,.)$  will be varied in experiments. With probability $0.9$, the actions succeed in navigating the agent to left or right as shown on arrows; with probability $0.1$ the agent's next state is sampled randomly from the set $S$. The target policy $\targetpi$ is to take \texttt{right} action in all states as shown in the illustration. 
  	}
	\label{fig:environment}
\end{minipage}
%

\subsection{Attacks in the Offline Setting: Setup and Results}\label{sec.experiments.offline}
\looseness-1For the offline setting, we considered the following attack strategies: (i) \RAttack: reward attacks using $\widehat{R}$ as solution to problem \eqref{prob.off.unc} ($\ell_p$\emph{-norm} with $p=1, 2, \infty$), (ii) \NTRAttack: reward attacks using $\widehat{R}$ as solution to problem \eqref{prob.on.known.reward_n1} (solution is independent of $\ell_p$\emph{-norm}), (iii) \DAttack: dynamics attacks using $\widehat{P}$ as a solution to problem \eqref{prob.off.dyn} ($\ell_p$\emph{-norm} for $p=1, 2, \infty$), and (iv) \NTDAttack: dynamics attacks using $\widehat{P}$ as a solution to problem \eqref{prob.on.known.dyn} (solution is independent of $\ell_p$\emph{-norm}). The regularity parameter $\epsP$ in the problems for solving dynamic poisoning attacks is set to $0.0001$.\footnote{Here, \textsc{NT-} prefix is used to highlight that \emph{non-target only} manipulations are allowed in problems~\eqref{prob.on.known.reward_n1} and \eqref{prob.on.known.dyn}.}

We note that optimal solutions to the problems \eqref{prob.off.unc}, \eqref{prob.on.known.reward_n1}, and \eqref{prob.on.known.dyn} can be computed efficiently using standard optimization techniques (also, refer to discussions in Section~\ref{sec.offlineattacks} and Section~\ref{sec.onlineattacks}). Problem~\eqref{prob.off.dyn} is computationally more challenging, and we provide a simple yet effective approach towards finding an approximate solution by iteratively solving the problem~\eqref{prob.on.known.dyn} as follows: First, we use a simple heuristic to obtain a pool of transition kernels $\widetilde{P}$  by perturbations of $\overline{P}$ that increase the average reward of $\targetpi$, and as second step, we use $\widetilde{P}$'s from this pool as as input to problem \eqref{prob.on.known.dyn} instead of $\overline{P}$. 
So, the runtime of solving problem~\eqref{prob.off.dyn} depends on the number of iterations we invoke problem~\eqref{prob.on.known.dyn} internally. The implementation details and code are provided in supplementary materials.


\looseness-1We vary $\overline{R}(s_0, .) \in [-5, 5]$ and vary $\epsilon$ margin $\in [0, 1]$. We use $\ell_\infty$\emph{-norm} in the measure of attack cost (see Section~\ref{sec.formulation.offline}). The results are reported as an average of $10$ runs (here, \DAttack~is the only stochastic algorithm because of the random initialization as discussed above).


There are two key points we want to highlight in Figure~\ref{fig:results.offline}. First, as we increase $\epsilon$ margin, the attack problem becomes more difficult: While the reward poisoning attacks are always feasible (though with increasing attack cost),  it is infeasible to do dynamics poisoning attacks for $\epsilon > 0.85$. Second, the plots also show that the solution to the generic problems (\RAttack~and \DAttack~for any $\ell_p$\emph{-norm} with $p=1, 2, \infty$) can have much lower cost compared to solutions obtained by problems allowing non-target only manipulations (\NTRAttack~and \NTDAttack).


For the MDP in Figure~\ref{fig:environment} with $|S|=4$, the run times for solving problems \eqref{prob.off.unc}, \eqref{prob.off.dyn}, \eqref{prob.on.known.reward_n1}, and \eqref{prob.on.known.dyn} are roughly $0.0110$s, $3.3010$s, $0.0003$s, and $0.0339$s, respectively. We further ran experiments on a variant of the MDP with $|S|=100$ (keeping the same linear chain structure as in Figure~\ref{fig:environment}), and this led to average run times of $0.8326$s, $157.2484$s, $0.6153$s, and $1.1705$s, respectively.


\begin{figure*}[t!]
\centering
	\begin{subfigure}[b]{0.245\textwidth}
	   \centering
		\includegraphics[width=1\linewidth]{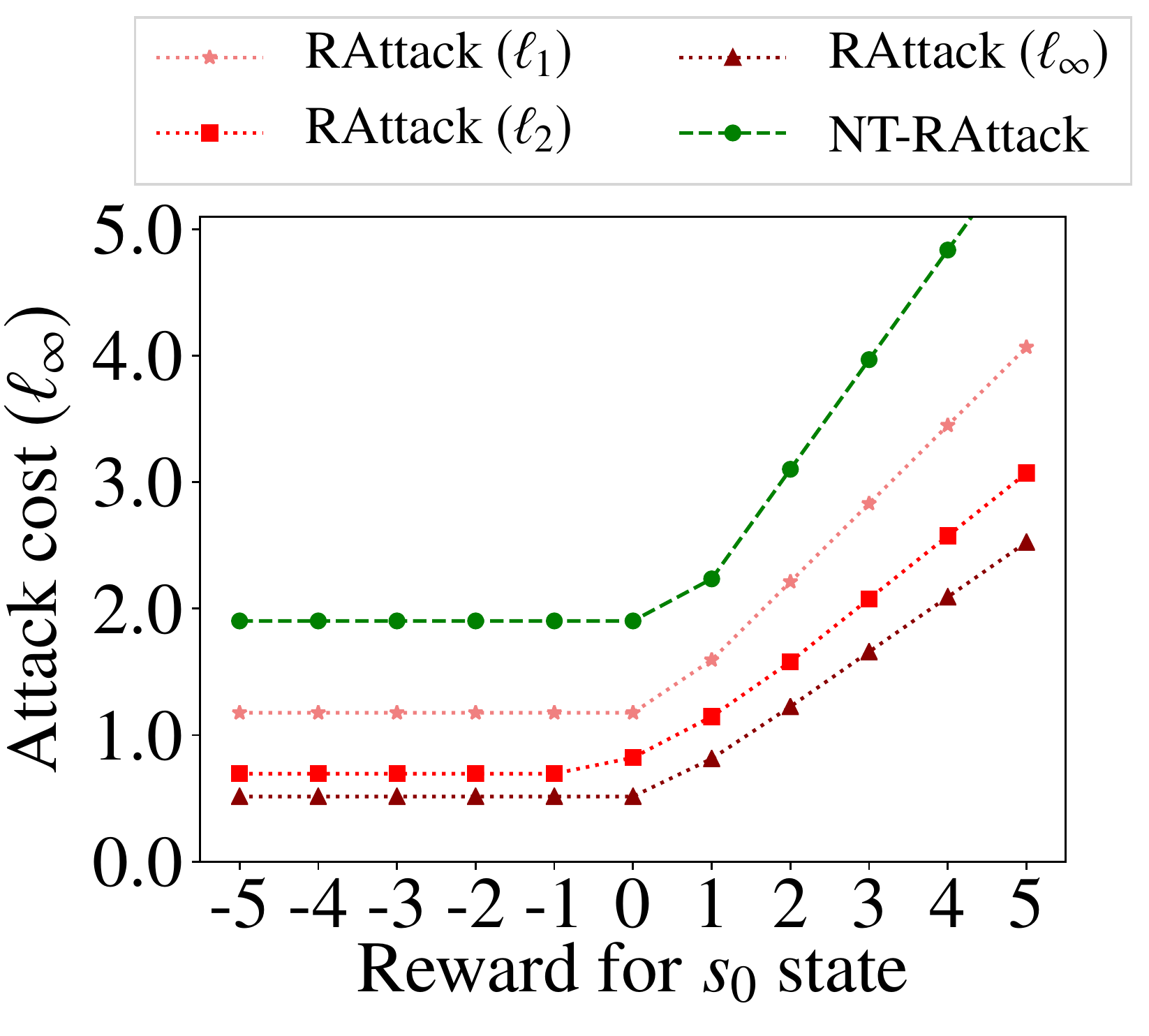}
		\caption{Vary $\overline{R}(s_0, .)$: $\widehat{R}$ attack}
		\label{fig:results.1}
	\end{subfigure}
	\begin{subfigure}[b]{0.245\textwidth}
	    \centering
		\includegraphics[width=1\linewidth]{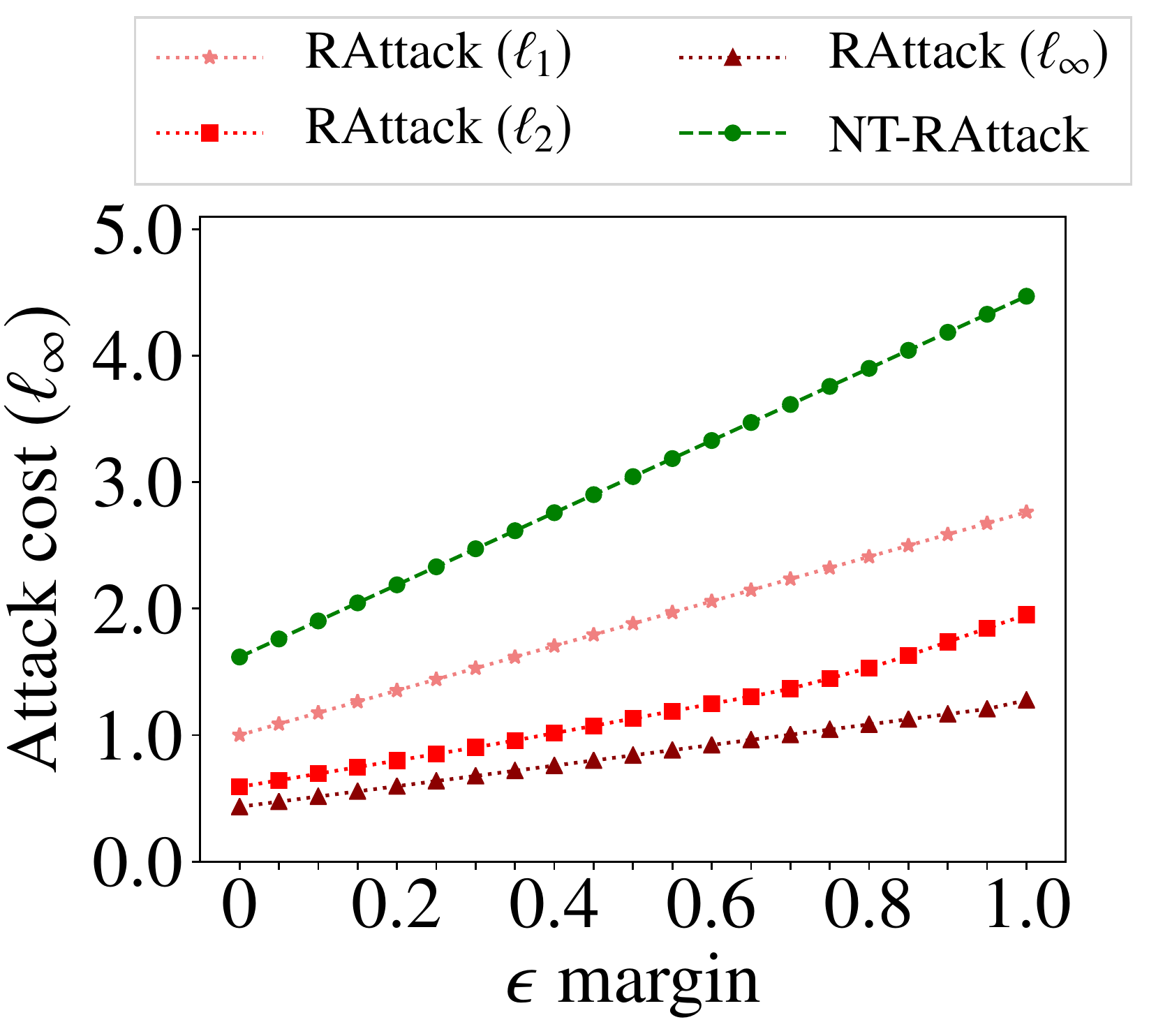}
		\caption{Vary $\epsilon$: $\widehat{R}$ attack}
		\label{fig:results.2}
	\end{subfigure}		
	\begin{subfigure}[b]{0.245\textwidth}
	    \centering
		\includegraphics[width=1\linewidth]{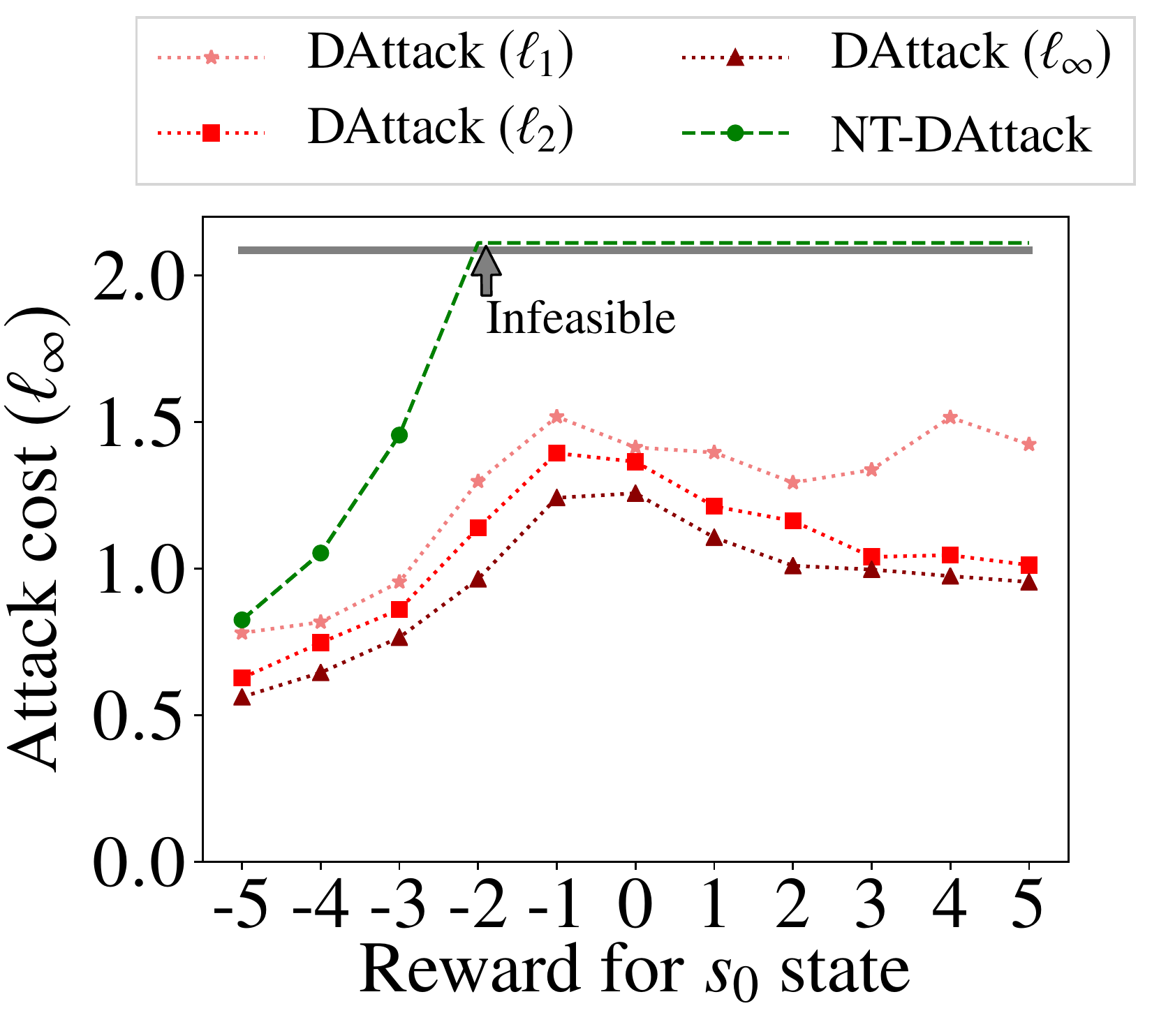}
		\caption{Vary $\overline{R}(s_0, .)$: $\widehat{P}$ attack}
		\label{fig:results.3}
	\end{subfigure}		
	\begin{subfigure}[b]{0.245\textwidth}
	    \centering
		\includegraphics[width=1\linewidth]{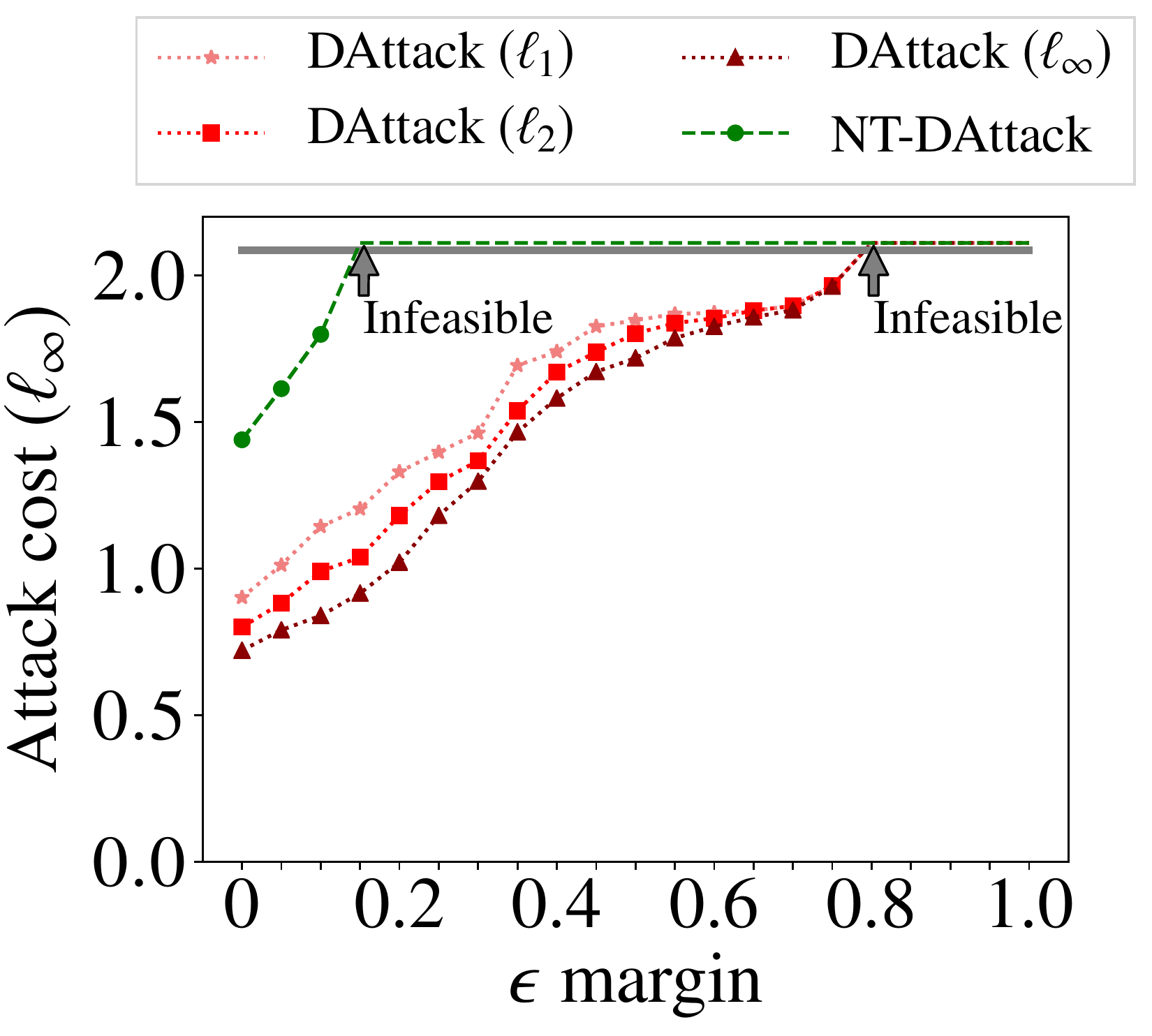}		
		\caption{Vary $\epsilon$:  $\widehat{P}$ attack}
		\label{fig:results.4}
	\end{subfigure}
   \caption{Results for poisoning attacks in the offline setting from Section~\ref{sec.offlineattacks}. (\textbf{a}, \textbf{b}) plots show results for attack on rewards and (\textbf{c}, \textbf{d}) plots show results for attack on dynamics. Details are in Section~\ref{sec.experiments.offline}.
   }
	\label{fig:results.offline}
	\vspace{-3mm}
\end{figure*}
%
%

\begin{figure*}[t!]
\centering
	\begin{subfigure}[b]{0.245\textwidth}
	   \centering
		\includegraphics[width=1\linewidth]{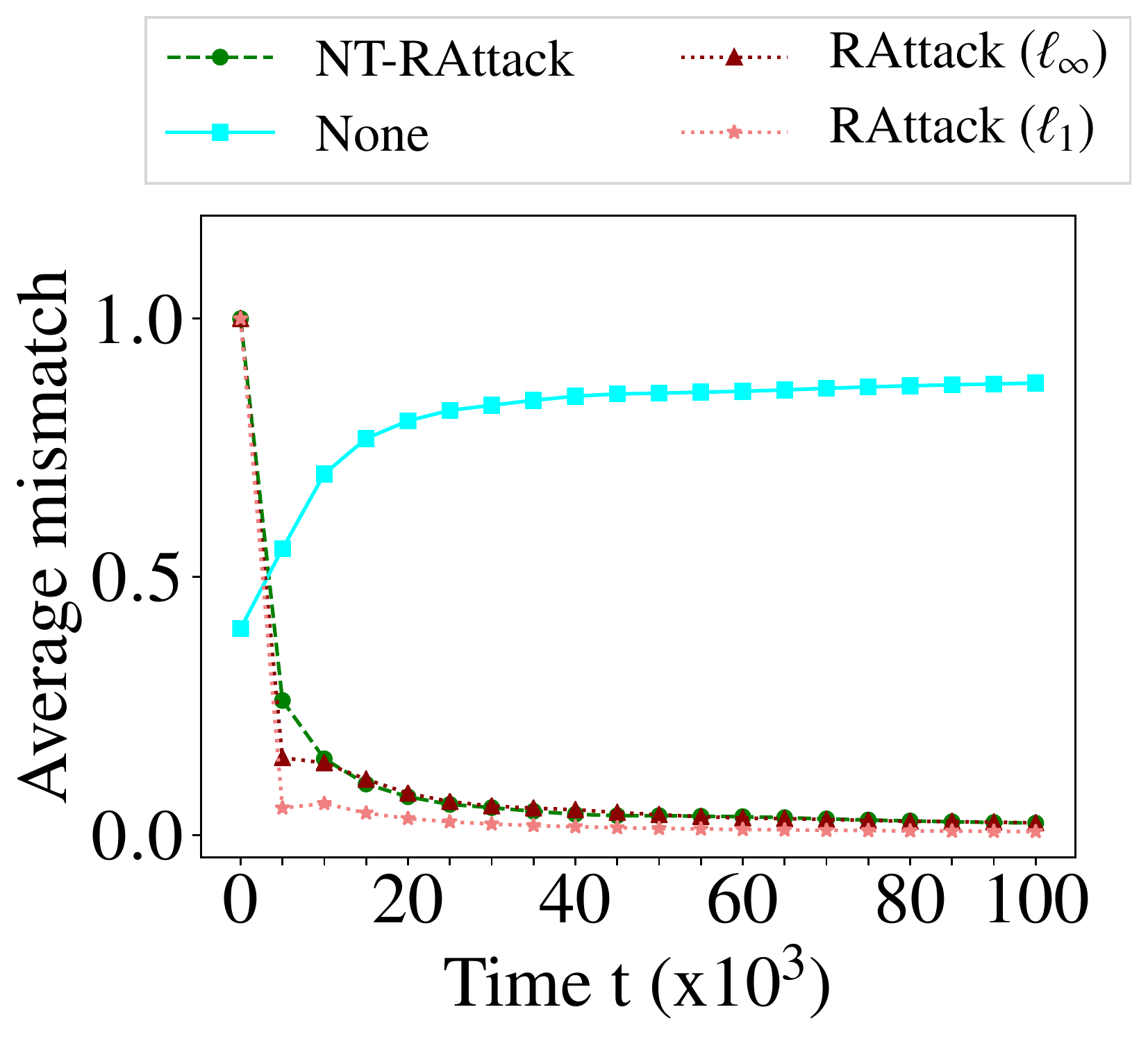}
		\caption{\avgmissm: $\widehat{R}$ attack}
		\label{fig:results.5}
	\end{subfigure}
	\begin{subfigure}[b]{0.245\textwidth}
	    \centering
		\includegraphics[width=1\linewidth]{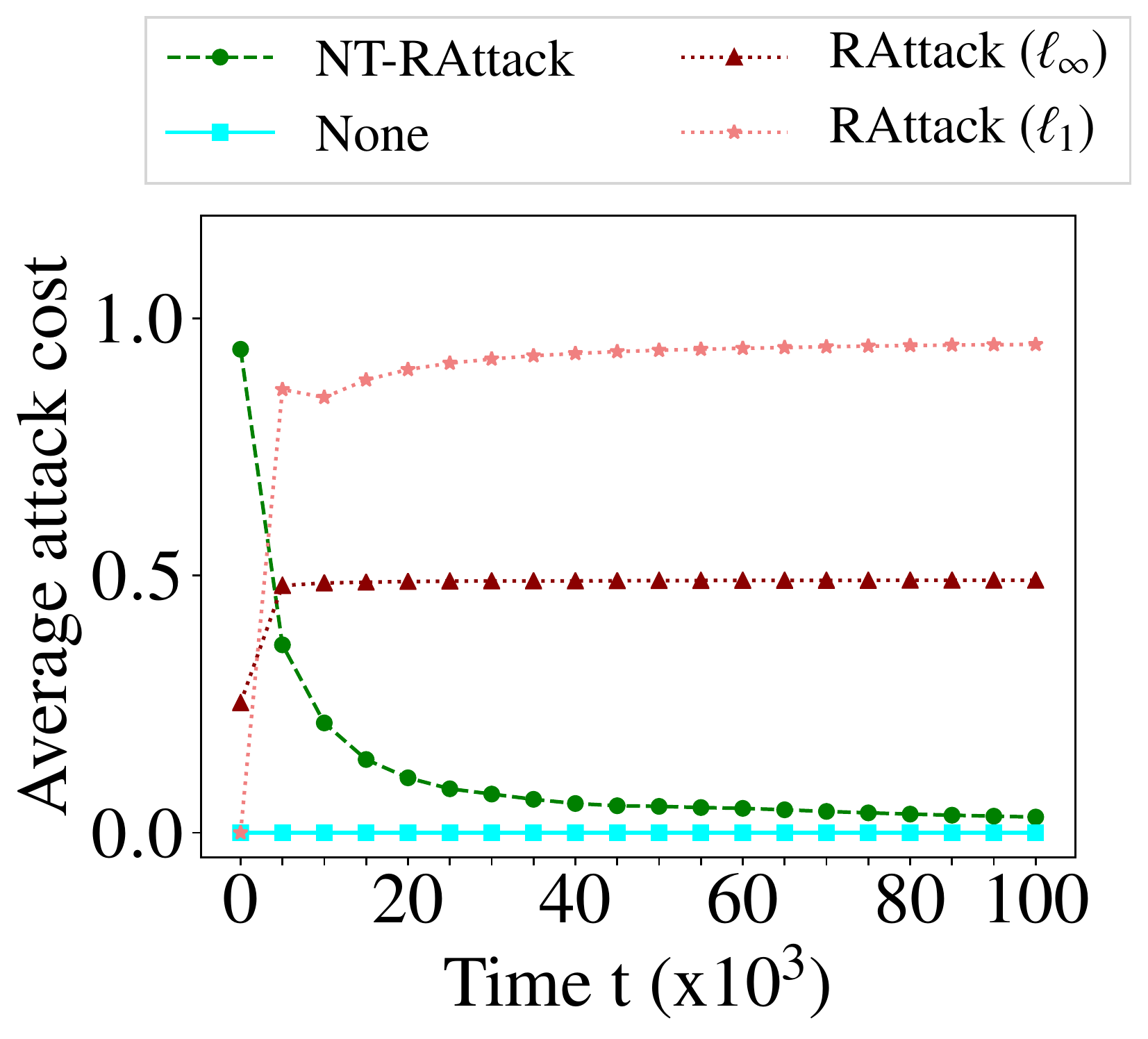}
		\caption{\avgcost: $\widehat{R}$ attack}
		\label{fig:results.6}
	\end{subfigure}		
	\begin{subfigure}[b]{0.245\textwidth}
	    \centering
		\includegraphics[width=1\linewidth]{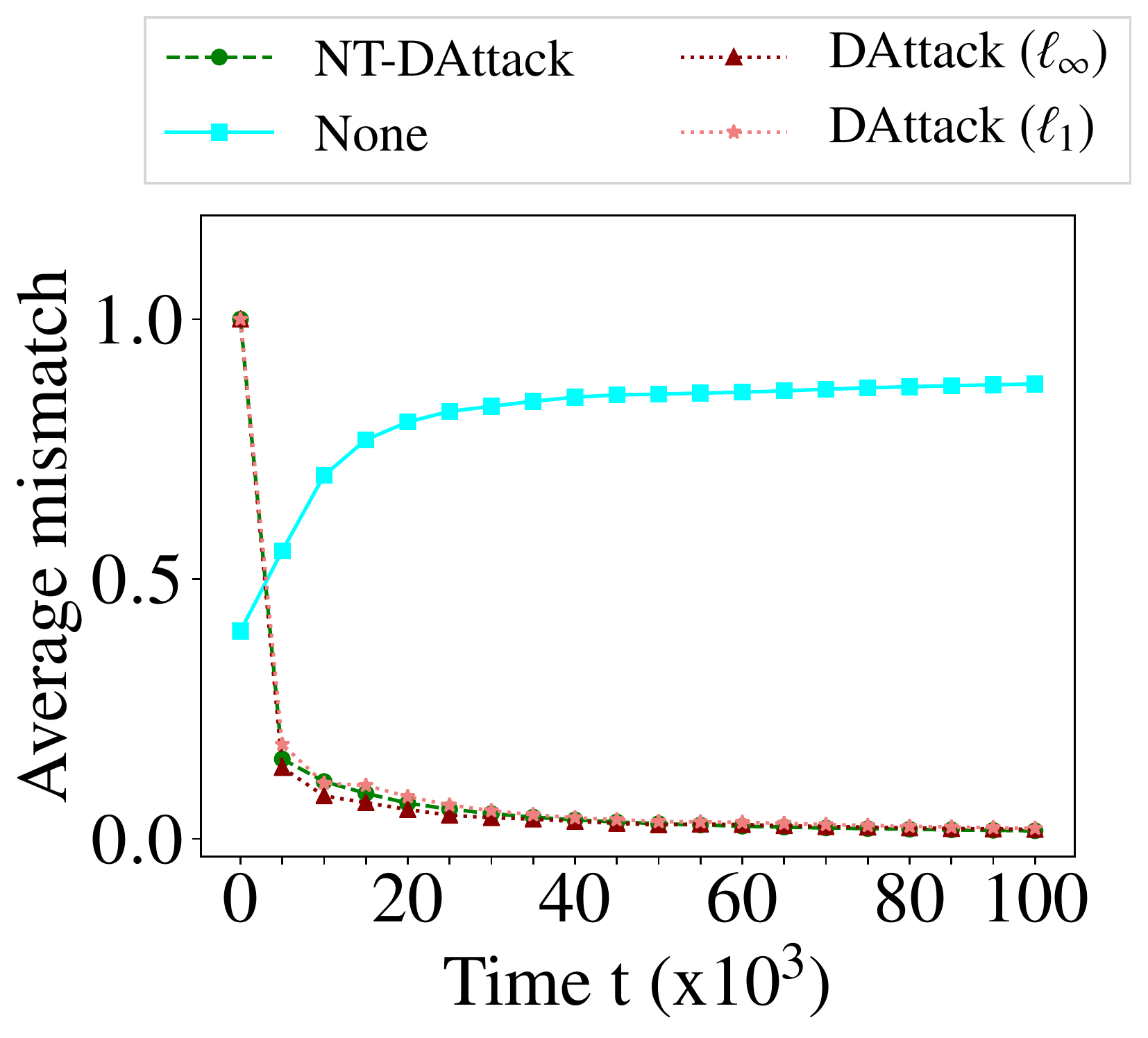}		
		\caption{\avgmissm: $\widehat{P}$ attack}
		\label{fig:results.7}
	\end{subfigure}		
	\begin{subfigure}[b]{0.245\textwidth}
	    \centering
		\includegraphics[width=1\linewidth]{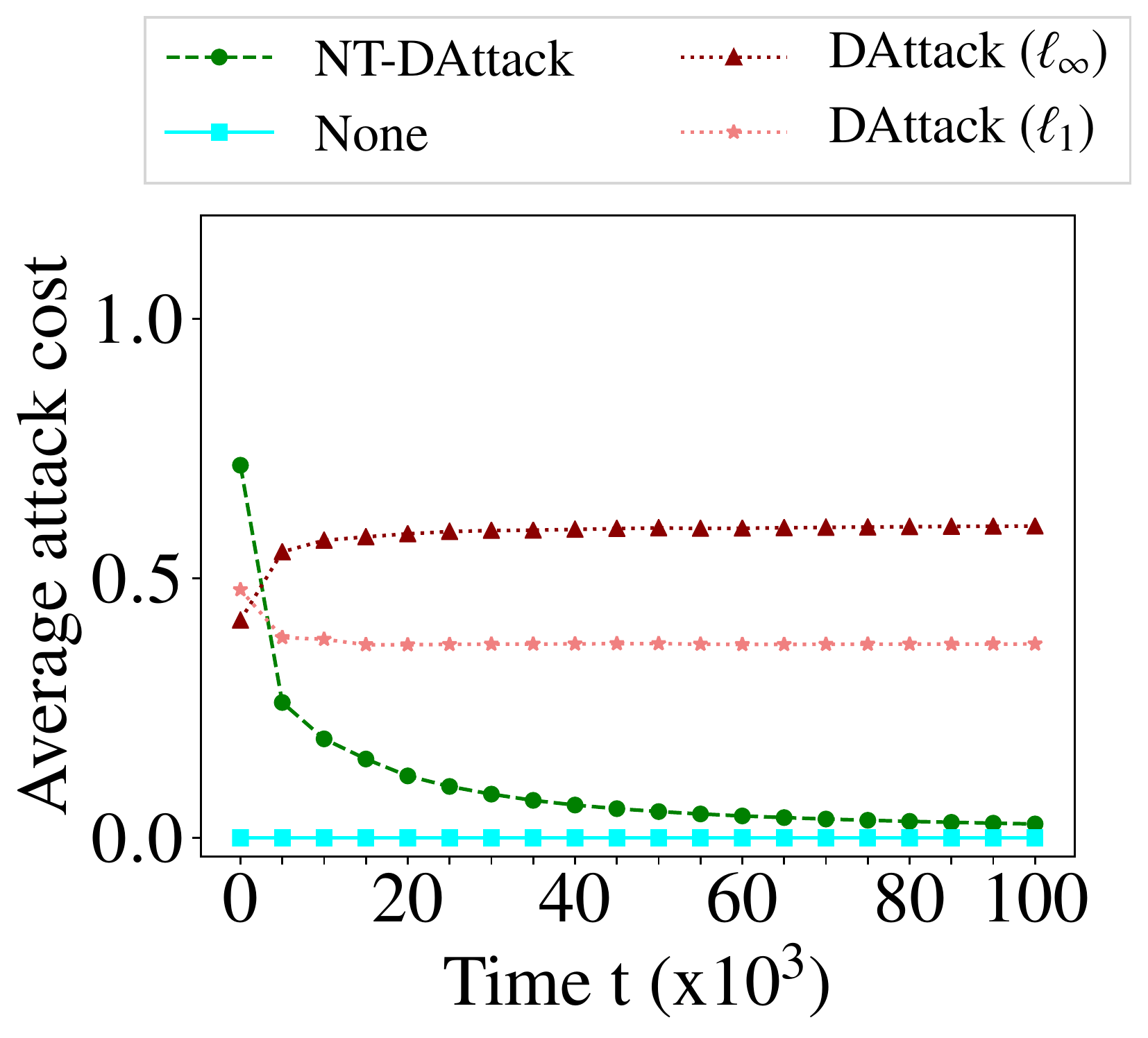}		
		\caption{\avgcost: $\widehat{P}$ attack}
		\label{fig:results.8}
	\end{subfigure}			
   \caption{Results for poisoning attacks in the online setting from Section~\ref{sec.onlineattacks}. (\textbf{a}, \textbf{b}) plots show results for attack on rewards and (\textbf{c}, \textbf{d}) plots show results for attack on dynamics. Details are in Section~\ref{sec.experiments.online}.
   }
	\label{fig:results.online}
   \vspace{-2mm}
\end{figure*}

\subsection{Attacks in the Online Setting: Setup and Results}\label{sec.experiments.online}
\looseness-1
For the online setting, we considered the following attack strategies: (i) reward attacks using \RAttack~(with $\ell_\infty$, $\ell_1$\emph{-norm}) and \NTRAttack; (ii) dynamics attacks using \DAttack~(with $\ell_\infty$, $\ell_1$\emph{-norm}) and \NTDAttack; and (iii) a default setting without adversary denoted as \NoAttack~where environment feedback is sampled from the original MDP $\overline{M}$.

In the experiments, we fix $\overline{R}(s_0, .)=-2.5$ and $\epsilon=0.1$; we plot the measure of the attacker's achieved goal in terms of \avgmissm~and attacker's cost in terms of \avgcost~for $\ell_1$\emph{-norm} measured over time $t$ (see Section~\ref{sec.formulation.online}). We consider an RL agent implementing the UCRL learning algorithm~\cite{auer2007logarithmic}, however the attacker does not use any knowledge of the agent's learning algorithm. The results are reported as an average of $20$ runs.

The results in Figure~\ref{fig:results.online} show that our proposed online attacks with \NTRAttack~and \NTDAttack~are highly effective: learner is forced to follow the target policy while the attacker's average cost is $o(1)$ (see Theorems~\ref{theorem.on.known.reward}, \ref{theorem.on.known.dyn}). In contrast, we can see that the online attacks with  \RAttack~and \DAttack~lead to high cost for the attacker, i.e., the cumulative cost is linear w.r.t. time as anticipated in Section~\ref{sec.on.ideas-definitions} (see discussions following Lemma~\ref{lemma.on.known.nontarget}).

\section{Conclusion}\label{sec.conclusions}
\looseness-1
We studied a security threat to reinforcement learning (RL) where an attacker poisons the environment, thereby forcing the agent into executing a target policy. Our work provides theoretical underpinnings of environment poisoning against RL along several new attack dimensions, including (i) adversarial manipulation of the transition dynamics, (ii) attack against RL agents maximizing average reward in undiscounted infinite horizon, and (iii) analyzing different attack costs for offline planning and online learning settings.

There are several promising directions for future work.
These include expanding the attack models (e.g., attacking rewards and transitions simultaneously) and broadening the set of attack goals (e.g., under partial specification of target policy). At the same time, relaxing the assumptions on the attacker knowledge of the underlying MDP could lead to more robust attack strategies. Another interesting future direction would be to make the studied attack models more scalable, e.g., applicable to continuous and large environments. Another interesting topic would be to devise attack strategies  against RL agents that use transfer learning approaches, especially in multi-agent RL systems, see \cite{da2019survey}.

While the paper provides a separate treatment for reward and transitions attack models, simultaneously attacking rewards and transitions might lead to more cost-effective solutions.  One possible way to formulate the optimization problem for the simultaneous attack model is to define the cost function as a weighted sum of the cost functions used in \eqref{prob.off.unc} and \eqref{prob.off.dyn}, and combine the corresponding constraints. 

\looseness-1While the experimental results demonstrate the effectiveness of the studied attack models, they do not reveal which types of learning algorithms are most vulnerable to the attack strategies studied in the paper. Further experimentation using a diverse set of the state of the art learning algorithms could reveal this, and provide some guidance in designing defensive strategies and novel RL algorithms robust to manipulations.

\section*{Acknowledgements}
Xiaojin Zhu is supported in part by NSF 1545481, 1623605, 1704117, 1836978 and the MADLab AF Center of Excellence FA9550-18-1-0166.



\bibliographystyle{icml2020}
\bibliography{main}

\iftoggle{longversion}{
\clearpage
\onecolumn
\appendix 
{\allowdisplaybreaks
\section{List of Appendices}\label{appendix:table-of-contents}
In this section we provide a brief description of the content provided in the appendices of the paper.   
\begin{itemize}
\item Appendix~\ref{sec.relatedwork} contains additional related work.
\item Appendix~\ref{appendix.sec.experiments} provides implementation details and additional experimental evaluation on a different environment.
\item Appendix~\ref{appendix_background} introduces some useful quantities and a lemma which are useful for proofs.
\item Appendix~\ref{appendix.off.general} contains proof of Lemma~\ref{lemma.using_neighbors} and some general results used in future proofs.
\item Appendix~\ref{appendix.off.rewards} contains proof of Theorem~\ref{theorem_off_unc} for offline attacks via poisoning rewards.
\item Appendix~\ref{appendix.off.dynamics} contains proof of Theorem~\ref{theorem_off_dyn} for offline attacks via poisoning dynamics and related discussions.
\item Appendix~\ref{appendix.on.general} contains proof of Lemma~\ref{lemma.on.known.nontarget}.
\item Appendix~\ref{appendix.on.rewards} contains proof of Theorem~\ref{theorem.on.known.reward} for online attacks via poisoning rewards.
\item Appendix~\ref{appendix.on.dynamics} contains proof of Theorem~\ref{theorem.on.known.dyn} for online attacks via poisoning dynamics.
\end{itemize}

\section{Additional Related Work} \label{sec.relatedwork}

\textbf{Test-time attacks against RL.} 
A growing body of contemporary works have studied test-time attacks against RL, in particular, on RL algorithms with neural network policies \cite{mnih2015human,schulman2015trust}. These attacks are typically done by adding noise in the observed state (e.g., a camera image) to fool the neural network policy into taking malicious actions~\cite{huang2017adversarial,DBLP:conf/ijcai/LinHLSLS17,tretschk2018sequential}. Different attack goals have been considered in these works, e.g., guiding the agent to some adversarial states or forcing agent to take actions that maximizes adversary's own rewards. 
Our work is technically quite different and is focused on training-time attacks where the goal is to force the agent to learn a target policy.


\textbf{Teaching an RL agent.} 
\looseness-1Poisoning attacks is mathematically equivalent to the formulation of machine teaching with teacher being the adversary~\cite{goldman1995complexity,singla2013actively,singla2014near,zhu2015machine,zhu2018overview,DBLP:conf/nips/ChenSAPY18,mansouri2019preference,DBLP:conf/nips/PeltolaCDK19,DBLP:conf/ijcai/DevidzeMH0S20}. In particular, there have been a number of recent works on teaching an RL agent via providing an optimized curriculum of demonstrations \cite{cakmak2012algorithmic,DBLP:conf/uai/WalshG12,hadfield2016cooperative,DBLP:conf/nips/HaugTS18,DBLP:conf/ijcai/KamalarubanDCS19,DBLP:conf/nips/TschiatschekGHD19,brown2019machine}. However, these works have focused on imitation-learning based RL agents who learn from provided demonstrations without any reward feedback \cite{osa2018algorithmic}. Given that we consider RL agents who find policies based on rewards, our work is technically very different from theirs. There is also a related literature on changing the behavior of an RL agent via \emph{reward shaping} \cite{ng1999policy,asmuth2008potential}; here the reward function is changed to only speed up the convergence of the learning algorithm while ensuring that the optimal policy in the modified environment is unchanged.
\section{Implementation Details and Additional Experiments (Section \ref{sec.experiments})}
\label{appendix.sec.experiments}
\looseness-1 In this section, we provide implementation details and report experimental  results on a different environment.\footnote{The source code is available at \url{https://github.com/adishs/icml2020_rl-policy-teaching_code}.}
\subsection{Implementation Details}
The optimal solutions to the problems \eqref{prob.off.unc}, \eqref{prob.on.known.reward_n1}, and \eqref{prob.on.known.dyn} can be computed efficiently using standard optimization techniques. In the source code, the solvers for these optimization problems are implemented as the following functions:
\begin{itemize}
    \item problem~\eqref{prob.off.unc} in the function \texttt{general\_attack\_on\_reward()}, see \texttt{teacher.py}
    \item problem~\eqref{prob.on.known.reward_n1} in the function \texttt{non\_target\_attack\_on\_reward()}, see \texttt{teacher.py}
    \item problem~\eqref{prob.on.known.dyn} in the function \texttt{non\_target\_attack\_on\_dynamics()}, see  \texttt{teacher.py}
\end{itemize}

Our approach for solving problem \eqref{prob.off.dyn} is implemented in the function \texttt{general\_attack\_on\_dynamics()}, see  \texttt{teacher.py}. Details are provided below:
\begin{itemize}
    \item As a first step, we use a simple heuristic to obtain a pool of transition kernels $\widetilde{P}$  by perturbations of $\overline{P}$ that increase the average reward of $\targetpi$. Here, the transition kernel $\widetilde{P}$ differs from $\overline{P}$ only for the actions taken by the target policy, i.e., $(s, \targetpi(s)) \ \forall s \in S$. This pool is created in the function \texttt{generate\_pool()}.
    \item As the second step, we take each of $\widetilde{P}$'s from this pool as as input to problem \eqref{prob.on.known.dyn} instead of $\overline{P}$ which in turn gives us a corresponding pool of solutions. Note here the problem \eqref{prob.on.known.dyn} modifies only the actions which are not taken by the target policy, i.e., $(s, a) \ \forall s \in S, a \neq \targetpi(s)$. This is done in the function \texttt{solve\_pool()}.
    \item As the final step, we pick a solution from this pool of solutions with the minimal cost.  This is done in the function \texttt{get\_P\_with\_smallest\_norm()}.
\end{itemize}

\subsection{Additional Experiments}
We perform additional numerical simulations on an environment represented as an MDP with nine states and two actions per state, see Figure~\ref{fig:environment.grid} for details. This environment is inspired from a navigation task and is slightly more complex than the environment in Figure~\ref{fig:environment}.

\begin{minipage}{\linewidth}
    \makebox[\linewidth]{
    \includegraphics[width=0.6\textwidth]{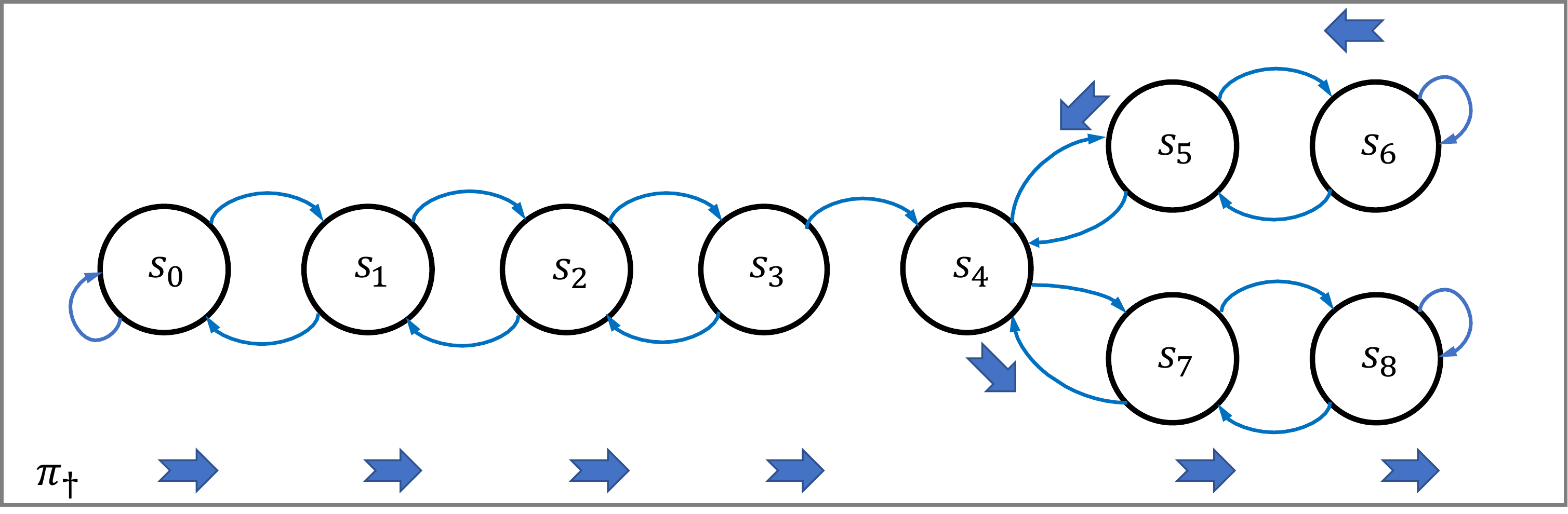}
  	}
  	\captionof{figure}{The environment has $|S| = 9$ states and $|A|=2$ actions per state as illustrated. The original reward function $\overline{R}$ is action independent and has the following values: $\overline{R}(s_1,.) = \overline{R}(s_2,.) = \overline{R}(s_3,.) = -2.5$,  $\overline{R}(s_4,.) = \overline{R}(s_5,.) = 1.0$, $\overline{R}(s_6,.) = \overline{R}(s_7,.) = \overline{R}(s_8,.) = 0$, and the reward of the state $s_0$ given by $\overline{R}(s_0,.)$ will be varied in experiments. With probability $0.9$, the actions succeed in navigating the agent as shown on arrows; with probability $0.1$ the agent's next state is sampled randomly from the set $S$. The target policy $\targetpi$ is to take actions as shown with bold arrows in the illustration. 
  	}
	\label{fig:environment.grid}
\end{minipage}
%

\begin{figure*}[h!]
\centering
	\begin{subfigure}[b]{0.245\textwidth}
	   \centering
		\includegraphics[width=1\linewidth]{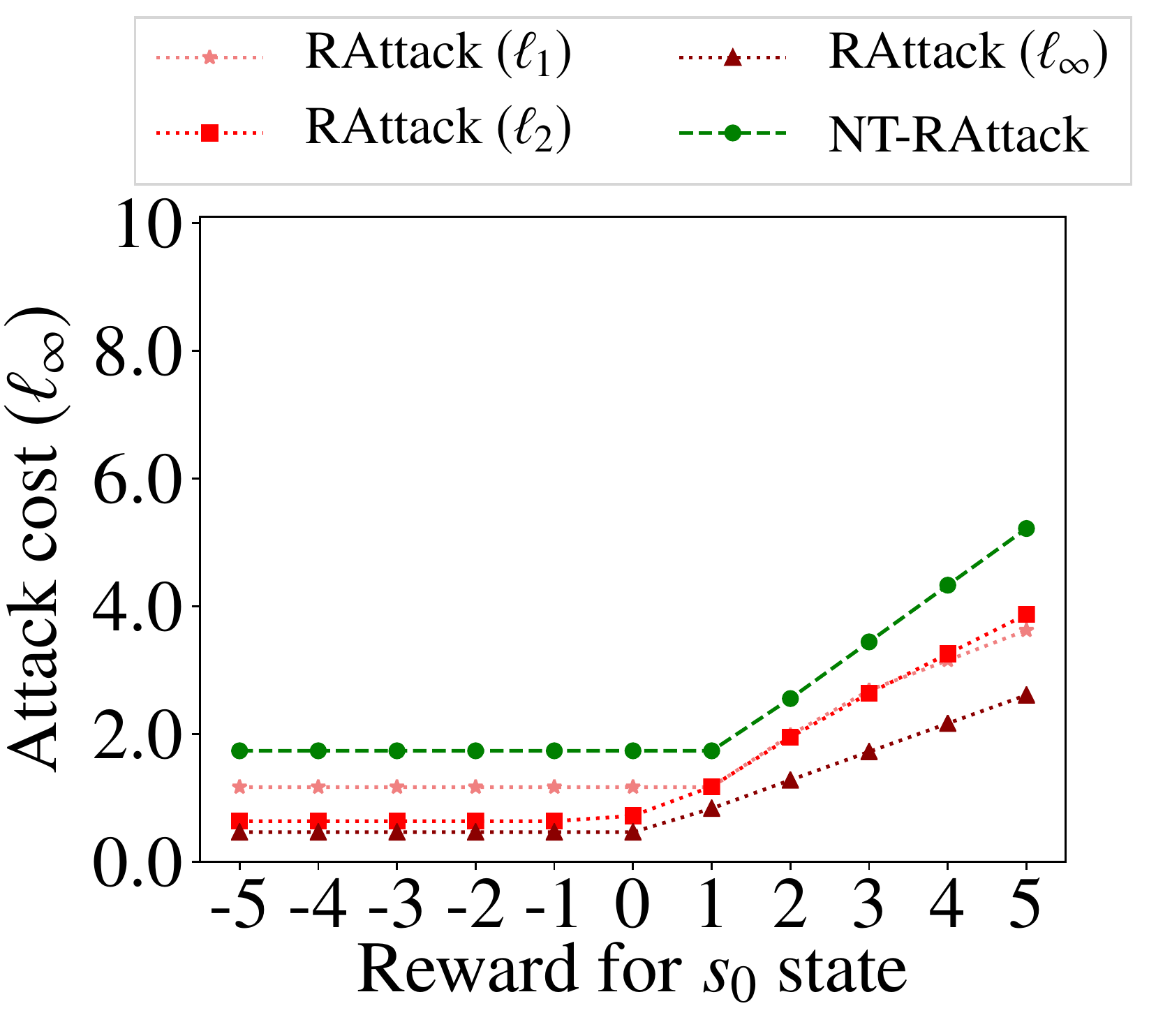}
		\caption{Vary $\overline{R}(s_0, .)$: $\widehat{R}$ attack}
		\label{fig:results.1.grid}
	\end{subfigure}
	\begin{subfigure}[b]{0.245\textwidth}
	    \centering
		\includegraphics[width=1\linewidth]{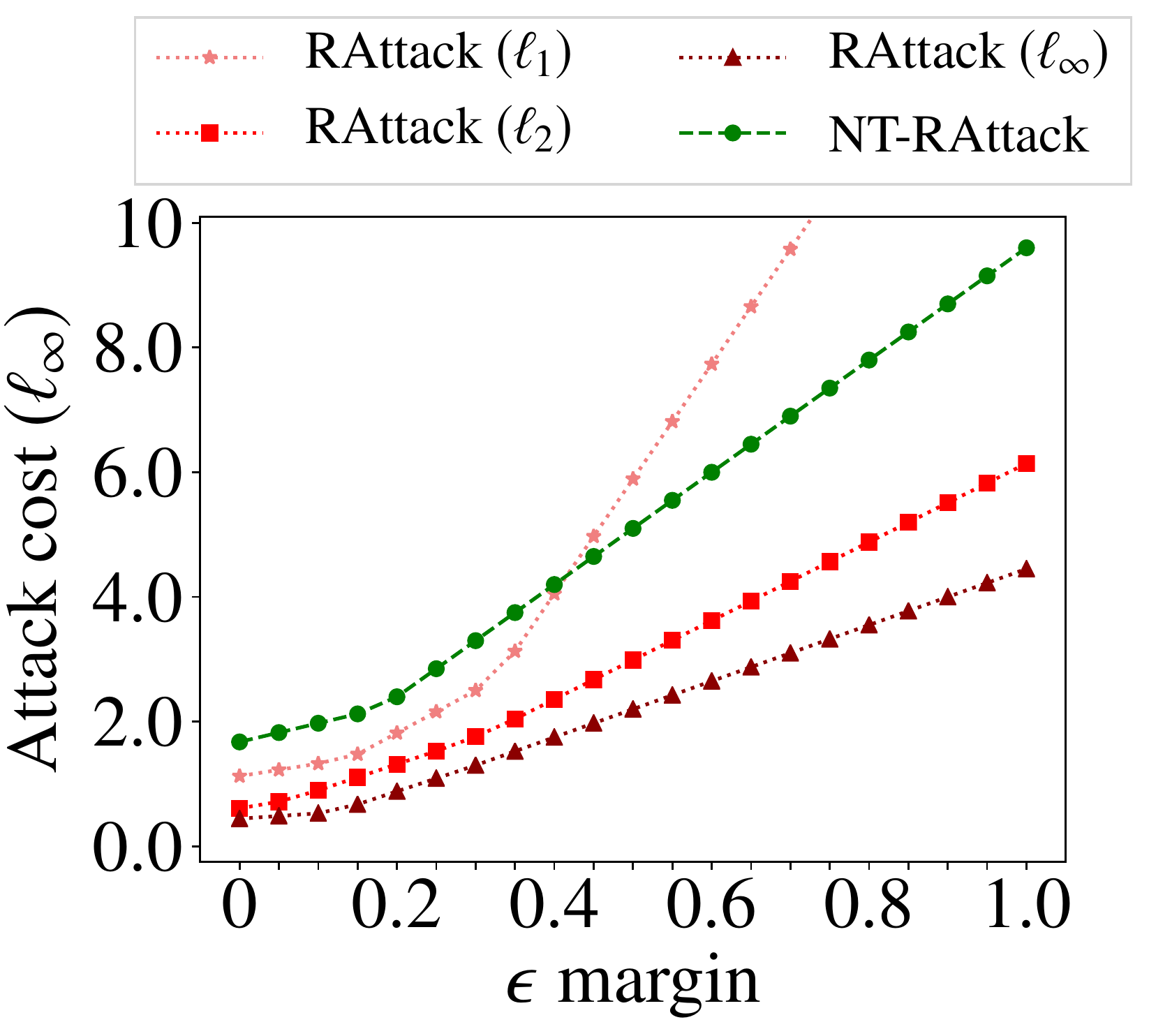}
		\caption{Vary $\epsilon$: $\widehat{R}$ attack}
		\label{fig:results.2.grid}
	\end{subfigure}		
	\begin{subfigure}[b]{0.245\textwidth}
	    \centering
		\includegraphics[width=1\linewidth]{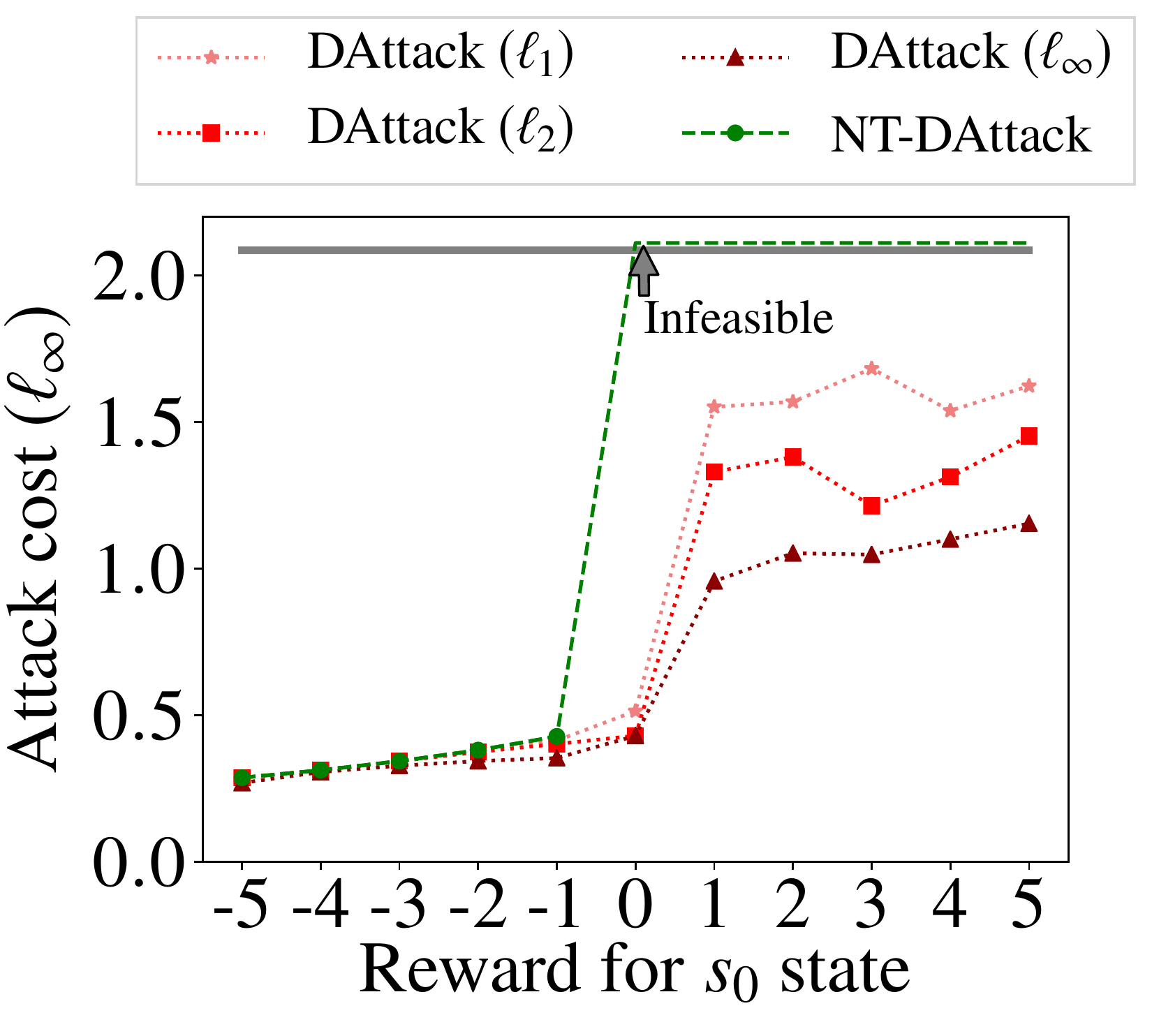}
		\caption{Vary $\overline{R}(s_0, .)$: $\widehat{P}$ attack}
		\label{fig:results.3.grid}
	\end{subfigure}		
	\begin{subfigure}[b]{0.245\textwidth}
	    \centering
		\includegraphics[width=1\linewidth]{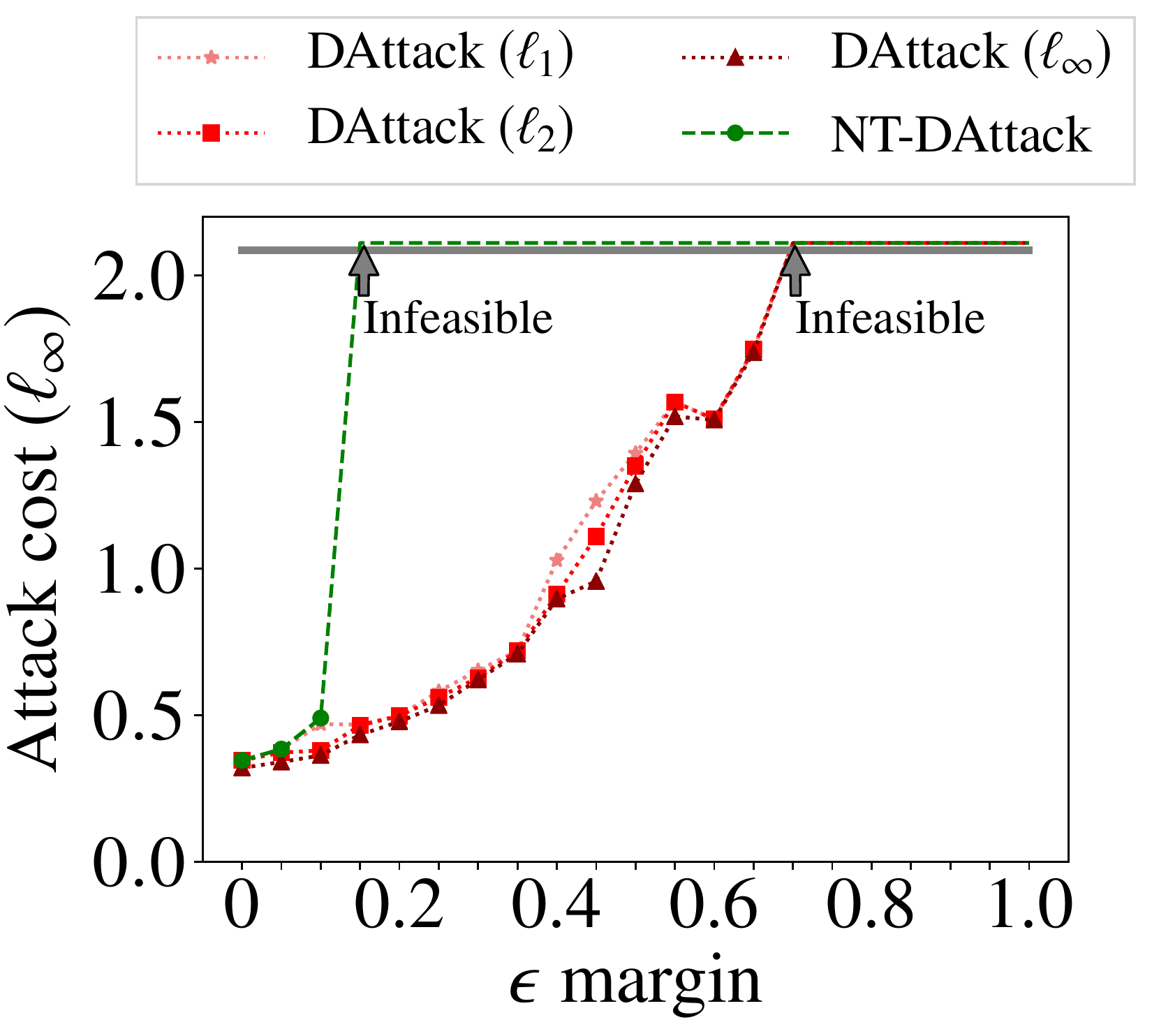}		
		\caption{Vary $\epsilon$:  $\widehat{P}$ attack}
		\label{fig:results.4.grid}
	\end{subfigure}
   \caption{Results for poisoning attacks in the offline setting from Section~\ref{sec.offlineattacks}. (\textbf{a}, \textbf{b}) plots show results for attack on rewards and (\textbf{c}, \textbf{d}) plots show results for attack on dynamics.
   }
	\label{fig:results.offline.grid}
	\vspace{-2mm}
\end{figure*}
%
%

\begin{figure*}[h!]
\centering
	\begin{subfigure}[b]{0.245\textwidth}
	   \centering
		\includegraphics[width=1\linewidth]{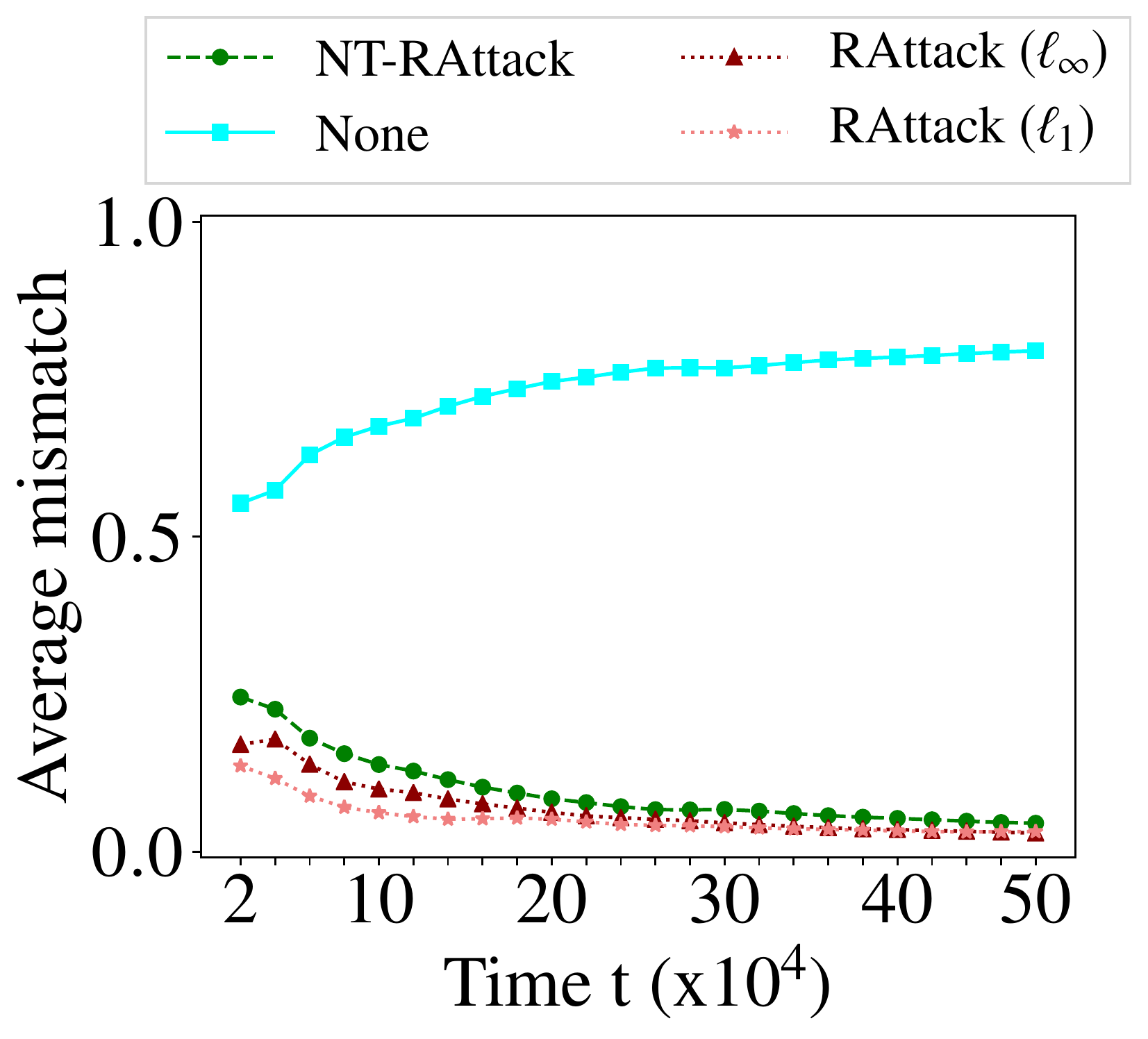}
		\caption{\avgmissm: $\widehat{R}$ attack}
		\label{fig:results.5.grid}
	\end{subfigure}
	\begin{subfigure}[b]{0.245\textwidth}
	    \centering
		\includegraphics[width=1\linewidth]{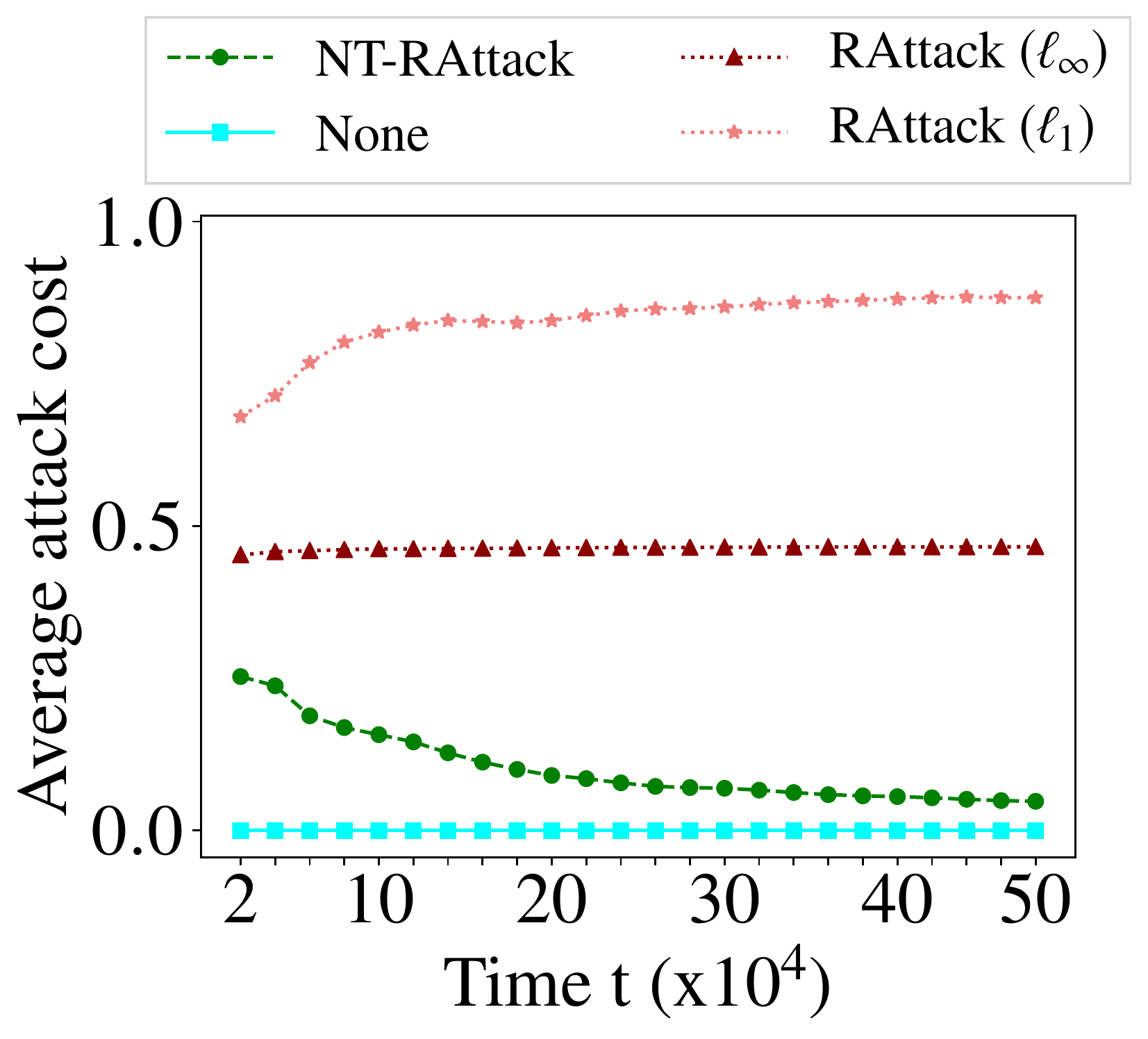}
		\caption{\avgcost: $\widehat{R}$ attack}
		\label{fig:results.6.grid}
	\end{subfigure}		
	\begin{subfigure}[b]{0.245\textwidth}
	    \centering
		\includegraphics[width=1\linewidth]{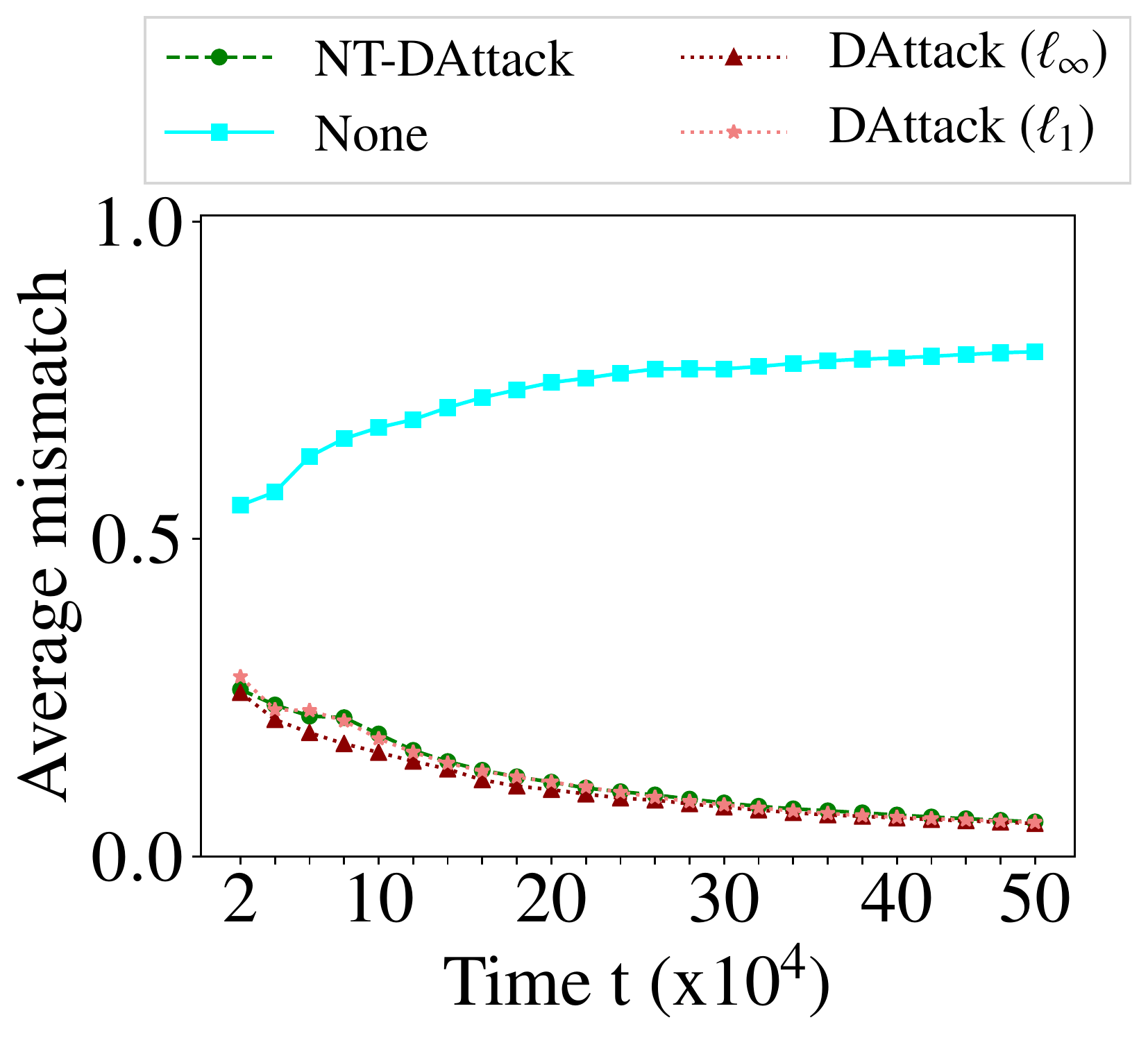}		
		\caption{\avgmissm: $\widehat{P}$ attack}
		\label{fig:results.7.grid}
	\end{subfigure}		
	\begin{subfigure}[b]{0.245\textwidth}
	    \centering
		\includegraphics[width=1\linewidth]{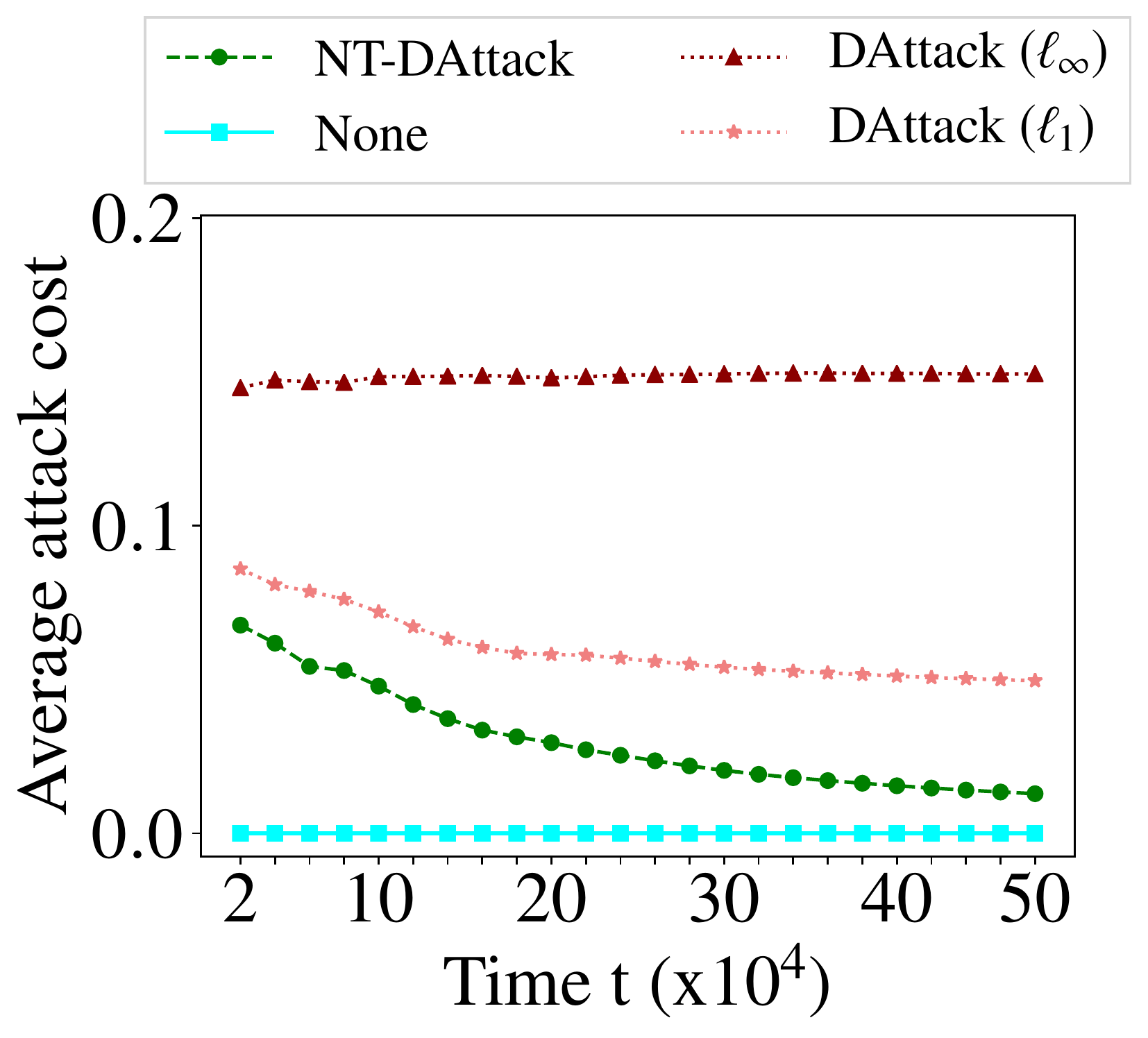}		
		\caption{\avgcost: $\widehat{P}$ attack}
		\label{fig:results.8.grid}
	\end{subfigure}			
   \caption{Results for poisoning attacks in the online setting from Section~\ref{sec.onlineattacks}. (\textbf{a}, \textbf{b}) plots show results for attack on rewards and (\textbf{c}, \textbf{d}) plots show results for attack on dynamics.
   }
	\label{fig:results.online.grid}
	\vspace{-2mm}
\end{figure*}

The experimental setup and attack strategies used for these additional experiments are exactly same as that used in Section~\ref{sec.experiments} for both the offline and the online settings. To recall, for the offline setting, we vary $\overline{R}(s_0, .) \in [-5, 5]$ and vary $\epsilon$ margin $\in [0, 1]$. We use $\ell_\infty$\emph{-norm} in the measure of attack cost. The results are reported as an average of $10$ runs. For the online setting, we fix $\overline{R}(s_0, .)=-2.5$ and $\epsilon=0.1$. We plot the measure of the attacker's achieved goal in terms of \avgmissm~and attacker's cost in terms of \avgcost~for $\ell_1$\emph{-norm} measured over time $t$. The results are reported as an average of $20$ runs.

While the scales of cost and mismatch are different, the key takeaway results from Figure~\ref{fig:results.offline.grid} (offline setting) and Figure~\ref{fig:results.online.grid} (online setting) are same as discussed in Section~\ref{sec.experiments.offline} and Section~\ref{sec.experiments.online} respectively.

\section{Background}
\label{appendix_background}
Throughout many proofs, we will use relative values defined in average reward reinforcement learning. Relative values or bias values of policy $\pi$ are defined as following\footnote{
This limit assumes that all policies are aperiodic. For periodic policies, we need to use the Cesaro limit \cite{DBLP:journals/ml/Mahadevan96}.
}:
\begin{gather*}
Q^\pi(s, a) = \lim_{N\to \infty} \expct{\sum^{N-1}_{t=0} \Big(R(s_t, a_t) - \rho^\pi\Big)| s_0=s, a_0=a, \pi},
\end{gather*}{}
where $s_0$ and $a_0$ are the initial state and we follow policy $\pi$ after taking action $a_0$ in state $s_0$. We will often refer to relative values as $Q$ values. These values satisfy the following recurrence equation
\begin{gather*}
Q^\pi(s, a) = R(s, a) - \rho^\pi + \sum_{s'\in \cS} P(s, a, s')Q^\pi(s', \pi(s')).
\end{gather*}{}
By definition we have $V^\pi(s) = Q^\pi(s, \pi(s))$. Based on Corollary 8.2.7 in \cite{Puterman1994}, these values can be calculated by solving set of equations
\begin{gather}
    \label{calcualte_v_1}
 V^\pi(s) = R(s, \pi(s)) - \rho^\pi + \sum_{s' \in \cS} P(s, \pi(s), s')V^\pi(s')\\
    \label{calcualte_v_2}
    \sum_{s\in\cS} \mu^\pi(s)V^\pi(s) = 0
\end{gather}{}
for $V^\pi(s)$ values and setting
\begin{gather}
\label{calcualte_q_1}
Q^\pi(s, a) = R(s, a) - \rho^\pi + \sum_{s'\in \cS} P(s, a, s')V^\pi(s').
\end{gather}{}

We rely on the following result of \citet{even2005experts}.  
\begin{lemma}\label{lemma_qrho_relate}
(Lemma 7 in \cite{even2005experts}) For two policies $\pi$ and $\pi'$ we have:
$$
\rho^\pi - \rho^{\pi'} = 
\sum_{s\in\cS}\mu^{\pi'}(s)\big(Q^\pi(s, \pi(s)) - Q^\pi(s, \pi'(s))\big).
$$
\end{lemma}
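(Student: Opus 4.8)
The plan is to verify the identity by a direct computation on the right-hand side, using only the recurrence satisfied by the relative values of $\pi$ and the defining property of the stationary distribution $\mu^{\pi'}$; since the statement is quoted verbatim from \cite{even2005experts}, one could alternatively just cite it, but the computation is short.

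First I would rewrite each summand. Since $V^\pi(s) = Q^\pi(s,\pi(s))$ and, by \eqref{calcualte_q_1}, $Q^\pi(s,a) = R(s,a) - \rho^\pi + \sum_{s'\in\cS} P(s,a,s')\,V^\pi(s')$ for every action $a$, applying the recurrence with $a=\pi'(s)$ gives
$$
Q^\pi(s,\pi(s)) - Q^\pi(s,\pi'(s)) = V^\pi(s) - R(s,\pi'(s)) + \rho^\pi - \sum_{s'\in\cS} P(s,\pi'(s),s')\,V^\pi(s').
$$

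Next I would take the $\mu^{\pi'}$-weighted sum over $s$ and treat the resulting terms. The term $\sum_{s\in\cS}\mu^{\pi'}(s)\,R(s,\pi'(s))$ equals $\rho^{\pi'}$ by the gain formula under ergodicity, and $\sum_{s\in\cS}\mu^{\pi'}(s)\,\rho^\pi = \rho^\pi$ because $\mu^{\pi'}$ is a probability distribution. The key cancellation is between $\sum_{s\in\cS}\mu^{\pi'}(s)\,V^\pi(s)$ and $\sum_{s\in\cS}\mu^{\pi'}(s)\sum_{s'\in\cS} P(s,\pi'(s),s')\,V^\pi(s')$: invoking the stationarity identity $\sum_{s\in\cS}\mu^{\pi'}(s)\,P(s,\pi'(s),s') = \mu^{\pi'}(s')$ (this is \eqref{eq.stat_dist.from.kernel} written for $\pi'$), the latter double sum collapses to $\sum_{s'\in\cS}\mu^{\pi'}(s')\,V^\pi(s')$, which is identical to the former. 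Hence these two terms cancel and what remains is exactly $\rho^\pi - \rho^{\pi'}$, as claimed.

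There is essentially no obstacle here — the whole argument is a few lines of algebra. The only point warranting a word of care is that the recurrence \eqref{calcualte_q_1} and the stationary-distribution identity are invoked under the ergodicity assumption already in force (for merely unichain or periodic policies one would replace ordinary limits by Cesàro limits, as noted in Appendix~\ref{appendix_background}); granting ergodicity, every step above is valid.
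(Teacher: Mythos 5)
Your computation is correct: expanding $Q^\pi(s,\pi'(s))$ via the recurrence \eqref{calcualte_q_1}, summing against $\mu^{\pi'}$, and cancelling $\sum_s \mu^{\pi'}(s)V^\pi(s)$ against $\sum_{s'}\mu^{\pi'}(s')V^\pi(s')$ via stationarity of $\mu^{\pi'}$ yields exactly $\rho^\pi - \rho^{\pi'}$. The paper itself offers no proof of this statement --- it is imported verbatim as Lemma~7 of \cite{even2005experts} --- so there is nothing to compare against except the cited source; your argument is the standard performance-difference derivation for the average-reward criterion and is complete under the ergodicity assumption the paper imposes, so it stands as a valid self-contained replacement for the citation.
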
{}



\section{Proofs for Offline Attacks: Lemma~\ref{lemma.using_neighbors} (Section
\ref{sec.off.ideas-definitions})}
\label{appendix.off.general}

We prove Lemma~\ref{lemma.using_neighbors} through several intermediate results. The first one is a direct consequence of Lemma \ref{lemma_qrho_relate}.
\begin{corollary}
\label{gain_diff_neighbor}
For any policy $\pi$ and its neighbor policy $\neighbor{\pi}{s}{a}$ we have:
$$
\rho^\pi - \rho^{\neighbor{\pi}{s}{a}} = 
\mu^{\neighbor{\pi}{s}{a}}(s)\big(Q^\pi(s, \pi(s)) - Q^\pi(s,a)\big).
$$
\end{corollary}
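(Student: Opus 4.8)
\textbf{Proof proposal for Corollary~\ref{gain_diff_neighbor}.}

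The plan is to apply Lemma~\ref{lemma_qrho_relate} directly with $\pi' = \neighbor{\pi}{s}{a}$ and then observe that almost all of the terms in the resulting sum vanish. Concretely, I would first write out
\begin{align*}
\rho^\pi - \rho^{\neighbor{\pi}{s}{a}} = \sum_{x \in \cS} \mu^{\neighbor{\pi}{s}{a}}(x)\Big(Q^\pi(x, \pi(x)) - Q^\pi\big(x, \neighbor{\pi}{s}{a}(x)\big)\Big).
\end{align*}
Then I would invoke the definition of the neighbor policy: $\neighbor{\pi}{s}{a}(x) = \pi(x)$ for every $x \ne s$, so each summand with $x \ne s$ has $Q^\pi(x,\pi(x)) - Q^\pi(x,\pi(x)) = 0$ and drops out. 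Only the $x = s$ term survives, where $\neighbor{\pi}{s}{a}(s) = a$, giving
\begin{align*}
\rho^\pi - \rho^{\neighbor{\pi}{s}{a}} = \mu^{\neighbor{\pi}{s}{a}}(s)\big(Q^\pi(s, \pi(s)) - Q^\pi(s, a)\big),
\end{align*}
which is exactly the claimed identity.

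I do not anticipate any real obstacle here; the only point requiring a word of care is that Lemma~\ref{lemma_qrho_relate} is stated for arbitrary pairs of policies, and applying it requires that $\rho^{\neighbor{\pi}{s}{a}}$ and $\mu^{\neighbor{\pi}{s}{a}}$ are well-defined — which is guaranteed by the standing ergodicity assumption on the MDP, since every deterministic policy (including every neighbor policy) then has a well-defined stationary distribution and gain. So the corollary is really just a specialization of Lemma~\ref{lemma_qrho_relate} exploiting the sparse structure of how $\neighbor{\pi}{s}{a}$ differs from $\pi$.
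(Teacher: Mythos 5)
Your proposal is correct and matches the paper's intended argument: the paper presents this corollary as a direct consequence of Lemma~\ref{lemma_qrho_relate}, obtained exactly as you describe by specializing $\pi' = \neighbor{\pi}{s}{a}$ and noting that all summands with $x \ne s$ vanish since the two policies agree there. Your remark about ergodicity guaranteeing that the stationary distribution and gain of the neighbor policy are well-defined is a fair (if implicit in the paper) point of hygiene.
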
{}

 Next, we provide a sufficient condition for a policy $\pi$ to be uniquely optimal. 
 \begin{lemma}
\label{single_optimal}
If we have $\rho^\pi \ge \rho^{\neighbor{\pi}{s}{a}} + \epsilon$ for every state $s$ and action $a \ne \pi(s)$, and $\epsilon > 0$, then $\pi$ is the only optimal policy.
\end{lemma}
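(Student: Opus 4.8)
The plan is to reduce the statement to a comparison of the relative ($Q$-) values of $\pi$, using the two identities already at our disposal: Corollary~\ref{gain_diff_neighbor} for neighbor policies and Lemma~\ref{lemma_qrho_relate} for arbitrary pairs of policies.

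First I would extract pointwise $Q$-value information from the hypothesis. Fix a state $s$ and an action $a \ne \pi(s)$. Combining the hypothesis $\rho^{\pi} - \rho^{\neighbor{\pi}{s}{a}} \ge \epsilon$ with the identity $\rho^{\pi} - \rho^{\neighbor{\pi}{s}{a}} = \mu^{\neighbor{\pi}{s}{a}}(s)\bigl(Q^{\pi}(s,\pi(s)) - Q^{\pi}(s,a)\bigr)$ from Corollary~\ref{gain_diff_neighbor} gives $\mu^{\neighbor{\pi}{s}{a}}(s)\bigl(Q^{\pi}(s,\pi(s)) - Q^{\pi}(s,a)\bigr) \ge \epsilon > 0$. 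Since the MDP is ergodic, $\mu^{\neighbor{\pi}{s}{a}}(s) \in (0,1]$, and therefore $Q^{\pi}(s,\pi(s)) - Q^{\pi}(s,a) \ge \epsilon$ for every $s$ and every $a \ne \pi(s)$; in particular $\pi(s)$ is the unique maximizer of $Q^{\pi}(s,\cdot)$ at each state.

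Next I would take an arbitrary deterministic policy $\pi' \ne \pi$ and apply Lemma~\ref{lemma_qrho_relate}, which yields $\rho^{\pi} - \rho^{\pi'} = \sum_{s} \mu^{\pi'}(s)\bigl(Q^{\pi}(s,\pi(s)) - Q^{\pi}(s,\pi'(s))\bigr)$. I would split this sum according to whether $\pi'$ agrees with $\pi$ at $s$: states with $\pi'(s)=\pi(s)$ contribute exactly $0$, while states with $\pi'(s) \ne \pi(s)$ contribute $\mu^{\pi'}(s)\bigl(Q^{\pi}(s,\pi(s)) - Q^{\pi}(s,\pi'(s))\bigr) \ge \epsilon\,\mu^{\pi'}(s) \ge 0$ by the previous step. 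Because $\pi' \ne \pi$ there is at least one state $s_0$ of the second type, and ergodicity guarantees $\mu^{\pi'}(s_0) > 0$, so the whole sum is at least $\epsilon\,\mu^{\pi'}(s_0) > 0$. Hence $\rho^{\pi} > \rho^{\pi'}$ for every $\pi' \ne \pi$, and since there are only finitely many deterministic policies this shows that $\pi$ is the unique optimal policy.

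There is no real obstacle in this argument; the only points requiring care are invoking ergodicity to ensure $\mu^{\neighbor{\pi}{s}{a}}(s) > 0$ and $\mu^{\pi'}(s_0) > 0$, and observing that a single state of disagreement already forces the gain gap to be strictly positive (all other terms being non-negative).
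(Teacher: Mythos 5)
Your proposal is correct and follows essentially the same route as the paper: both use Corollary~\ref{gain_diff_neighbor} to show $\pi(s)$ strictly maximizes $Q^{\pi}(s,\cdot)$ at every state, then apply Lemma~\ref{lemma_qrho_relate} to any $\pi'\ne\pi$ and use ergodicity ($\mu^{\pi'}(s_0)>0$ at a disagreement state) to force $\rho^{\pi}-\rho^{\pi'}>0$. Your version is slightly more careful in tracking the quantitative gap $\epsilon$ and in noting that the agreeing states contribute zero, but the argument is the same.
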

\begin{proof}
Let arbitrary $s\in\cS$ and $a\in\cA$ be such that $a\ne\pi(s)$. Based on corollary \ref{gain_diff_neighbor}, we have
\begin{equation}
\label{has_max_q}
Q^\pi(s, \pi(s)) - Q^\pi(s,a) = \frac{\rho^\pi - \rho^{\neighbor{\pi}{s}{a}}}{\mu^{\neighbor{\pi}{s}{a}}(s)} > 0.
\end{equation}{}
Note that in this paper we are focusing on recurrent MDPs, thus, we know $\mu^{\neighbor{\pi}{s}{a}}(s) > 0$. Now let $\pi'\ne\pi$ be a policy with $\pi'(s') = a' \ne \pi(s')$. Using (\ref{has_max_q}) we have
\begin{equation*}
\rho^\pi - \rho^{\pi'} = 
\sum_{s\in\cS}\mu^{\pi'}(s)\big(Q^\pi(s, \pi(s)) - Q^\pi(s, \pi'(s))\big) \ge \mu^{\pi'}(s')\big(Q^\pi(s', a') - Q^\pi(s',\pi(s'))\big) > 0
\end{equation*}{}
We again used the fact that MDP is recurrent to say $\mu^{\pi'}(s') > 0$.
\end{proof}{}


\textbf{Proof of Lemma~\ref{lemma.using_neighbors}
}
\begin{proof}
The necessity of the condition follows directly from the definition of $\epsilon$-robust policies. Let us focus on its sufficiency. 

Consider deterministic policies $\pi$, and denote the Hamming distance between two policies $\pi_1$ and $\pi_2$ by $D_H(\pi_1, \pi_2)$, i.e., $D_H(\pi_1, \pi_2) = \sum_{s \in \cS} \ind{\pi_1(s) \ne \pi_2(s)}$ where $\ind{.}$ denotes the indicator function. 
Assume that the condition of the proposition holds for policy $\pi^*$, i.e., that $\rho^{\pi^*} \ge \rho^{\pi_1} + \epsilon$ for all $\pi_1$ s.t. $D_H(\pi_1, \pi^*) = 1$. 

Lemma ~\ref{single_optimal} implies that $\pi^*$ is uniquely optimal. Now, consider policy $\pi_k$ s.t. $D_H(\pi_k, \pi^*) = k > 1$. Since $\pi^*$ is (uniquely) optimal and the MDP is recurrent ($\mu^{\pi^*}(s) > 0$), we have that
\begin{align*}
\rho^{\pi^*} - \rho^{\pi_k} = \sum_{s \in \cS} \mu^{\pi^*}(s) \cdot [Q^{\pi_k}(s, \pi^*(s)) - Q^{\pi_k}(s, \pi_{k}(s))] > 0,
\end{align*}
which implies that there exists $s_k \in \cS$ s.t. $[Q^{\pi_k}(s_k, \pi^*(s_k)) - Q^{\pi_k}(s_k, \pi_{k}(s_k))] > 0$. 
Define policy $\pi_{k-1}$ as
\begin{align*}
\pi_{k-1}(s) = \begin{cases}
\pi^*(s) &\mbox{ if } s = s_k \\
\pi_k(s) & \mbox{ otw. }
\end{cases}
\end{align*}
We have that
\begin{align*}
 \rho^{\pi_{k-1}} &= \rho^{\pi_k}  + \sum_{s \in \cS}\mu^{\pi_{k-1}}(s) \cdot [Q^{\pi_k}(s, \pi_{k-1}(s)) - Q^{\pi_k}(s, \pi_{k}(s))]
  \\&=  \rho^{\pi_k}  + \mu^{\pi_{k-1}}(s_k) \cdot [Q^{\pi_k}(s_k, \pi^*(s_k)) - Q^{\pi_k}(s_k, \pi_{k}(s_k))] \ge \rho^{\pi_k}.
\end{align*}
Therefore, by induction, we know that there exists a policy $\pi_1$ such that $\rho^{\pi_k} \le \rho^{\pi_1}$ and $D_H(\pi_1, \pi^*) = 1$. Utilizing our initial assumption, we obtain that $\rho^{\pi^*} \ge \rho^\pi + \epsilon$ for all $\pi \ne \pi^*$, which proves that $\pi^*$ is $\epsilon$-robust optimal if the condition of the proposition holds. 
\end{proof}

\section{Proofs for Offline Attacks: Poisoning Rewards (Section \ref{sec.off.reward})}
\label{appendix.off.rewards}

We break the proof of Theorem~\ref{theorem_off_unc} into two parts: In Appendix \ref{appendix.off.lower}, we prove the lower bound in the theorem, and in Appendix \ref{appendix.off.upper_feas} we show the upper bound and feasibility claim for the attack.

\subsection{Proofs for the Lower Bound in Theorem~\ref{theorem_off_unc}}
\label{appendix.off.lower}

To prove the lower bound in Theorem~\ref{theorem_off_unc}, we 
will use a proof technique that is similar to the one presented in \cite{ma2019policy}, but adapted to our setting (the average reward criterion).  

First, let us define operator $F$ as
\begin{align}\label{contraction_operator}
F(Q, R, \rho, P, \pi)(s,a) = R(s, a) - \rho + \sum_{s' \in \cS} P(s,a,s') V^\pi(s'),
\end{align}
or in vector notation
\begin{align*}
F(Q, R, \rho, P, \pi) = R - \rho \cdot \mathbf 1 + P \cdot V^\pi,
\end{align*}
where $V^\pi(s') = Q(s',\pi(s'))$ (and $\pi$ is a deterministic policy). Furthermore, we defined the span of $X$ as $sp(X) = \max_i X(i) - \min_i X(i)$ - as argued in \cite{Puterman1994}, $sp$ is a seminorm. 

\begin{lemma}\label{lemma_contraction}
The following holds: 
\begin{align*}
    sp(F(Q_1, R, \rho, P, \pi) - F(Q_2, R, \rho, P, \pi)) \le (1-\alpha) \cdot sp(Q_1 - Q_2),
\end{align*}
where 
\begin{align*}
\alpha = \min_{s,a,s',a'} \sum_{x \in \cS}\min \{P(s,a,x), P(s',a',x) \}.
\end{align*}
\end{lemma}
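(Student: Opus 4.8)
The plan is to reduce the claim to a classical fact about how a stochastic matrix contracts in the span seminorm, with the contraction coefficient governed by the Hajnal-type quantity $\alpha$. First I would observe that $F$ is affine in its first argument: since $V^\pi(s') = Q(s',\pi(s'))$ depends linearly on $Q$, we can write $F(Q, R, \rho, P, \pi) = R - \rho\cdot\mathbf{1} + P\cdot V^\pi$, and therefore
\begin{align*}
F(Q_1, R, \rho, P, \pi) - F(Q_2, R, \rho, P, \pi) = P\cdot(V_1^\pi - V_2^\pi),
\end{align*}
where $V_i^\pi(s') = Q_i(s',\pi(s'))$. Here $P$ is viewed as the $(|S|\cdot|A|)\times|S|$ matrix with rows indexed by pairs $(s,a)$ and entries $P(s,a,s')$, each row being a probability distribution over $S$. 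So the left-hand side of the lemma equals $sp\big(P\cdot(V_1^\pi - V_2^\pi)\big)$, and it suffices to show $sp(P\cdot w) \le (1-\alpha)\cdot sp(w)$ for any vector $w$ indexed by states, where I will apply this with $w = V_1^\pi - V_2^\pi$ and use $sp(V_1^\pi - V_2^\pi) \le sp(Q_1 - Q_2)$ (each coordinate of $V_i^\pi$ is a coordinate of $Q_i$, so the range of $V_1^\pi - V_2^\pi$ is contained in that of $Q_1 - Q_2$).

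The core estimate is the bound $sp(P w) \le (1-\alpha)\,sp(w)$. To prove it, fix two rows $(s,a)$ and $(s',a')$; I want to bound $(Pw)(s,a) - (Pw)(s',a') = \sum_x \big(P(s,a,x) - P(s',a',x)\big)w(x)$. The standard trick: let $m(x) = \min\{P(s,a,x), P(s',a',x)\}$ and let $\alpha_{(s,a),(s',a')} = \sum_x m(x)$, so $\alpha \le \alpha_{(s,a),(s',a')}$. Splitting $P(s,a,\cdot) = m(\cdot) + \big(P(s,a,\cdot) - m(\cdot)\big)$ and similarly for $(s',a')$, the common part $m$ cancels in the difference, leaving
\begin{align*}
(Pw)(s,a) - (Pw)(s',a') = \sum_x \big(P(s,a,x) - m(x)\big)w(x) - \sum_x \big(P(s',a',x) - m(x)\big)w(x).
\end{align*}
Both $\sum_x(P(s,a,x) - m(x))$ and $\sum_x(P(s',a',x) - m(x))$ equal $1 - \alpha_{(s,a),(s',a')}$, and the terms $P(s,a,x) - m(x)$, $P(s',a',x) - m(x)$ are nonnegative. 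So the first sum is at most $(1-\alpha_{(s,a),(s',a')})\max_x w(x)$ and the second is at least $(1-\alpha_{(s,a),(s',a')})\min_x w(x)$, giving $(Pw)(s,a) - (Pw)(s',a') \le (1-\alpha_{(s,a),(s',a')})\,sp(w) \le (1-\alpha)\,sp(w)$ (using $\alpha \le \alpha_{(s,a),(s',a')}$ and $sp(w)\ge 0$). Taking the maximum over the two rows that achieve the max and min of $Pw$ yields $sp(Pw) \le (1-\alpha)\,sp(w)$, and composing with the earlier reduction finishes the proof.

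The main obstacle is mostly bookkeeping rather than a genuine difficulty: being careful about the index set of $P$ as a matrix (rows over state-action pairs, columns over states) and making sure the cancellation-of-the-common-mass argument is stated cleanly, including the observation that $sp(w)\ge 0$ so that replacing $\alpha_{(s,a),(s',a')}$ by the smaller $\alpha$ is valid in the right direction. One should also note explicitly that ergodicity/recurrence is not needed for this lemma — it is a purely linear-algebraic statement about stochastic matrices — and that the step $sp(V_1^\pi - V_2^\pi)\le sp(Q_1-Q_2)$ uses only that $\{V_i^\pi(s) : s\in S\} \subseteq \{Q_i(s,a):(s,a)\}$ so the image of the difference shrinks.
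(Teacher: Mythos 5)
Your proposal is correct and follows essentially the same route as the paper: reduce the difference to $sp(P\cdot(V_1^\pi - V_2^\pi))$, apply the standard Hajnal/Dobrushin cancellation argument (the paper cites Proposition 6.6.1 of Puterman for exactly this step), and finish with $sp(V_1^\pi - V_2^\pi)\le sp(Q_1-Q_2)$ via the observation that each coordinate of $V_i^\pi$ is a coordinate of $Q_i$. No gaps; your explicit remarks on nonnegativity of $P-m$ and on $sp(w)\ge 0$ are the same implicit steps the paper uses.
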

\begin{proof}
We have that 
\begin{align*}
sp(F(Q_1, R, \rho, P, \pi) - F(Q_2, R, \rho, P, \pi)) = sp(P \cdot (V_{1}^\pi - V_{2}^\pi)). 
\end{align*}
Following the proof of Proposition 6.6.1 in \cite{Puterman1994}, we obtain that for $b(x,s,a,s',a') = \min \{P(s,a,x), P(s',a',x) \}$
\begin{align*}
    &\sum_{x \in \cS} P(s,a,x) \cdot  (V_{1}^\pi(x) - V_{2}^\pi(x)) -  \sum_{x \in \cS} P(s', a',x)  \cdot (V_{1}^\pi(x) - V_{2}^\pi(x))\\
    &= \sum_{x \in \cS} (P(s,a,x) - b(x,s,a,s',a')) \cdot  (V_{1}^\pi(x) - V_{2}^\pi(x)) \\
    &-  \sum_{x \in \cS} (P(s', a',x)  - b(x,s,a,s',a'))  \cdot (V_{1}^\pi(x) - V_{2}^\pi(x))\\
    &\le \sum_{x \in \cS} (P(s,a,x) - b(x,s,a,s',a')) \cdot \max_{x'} (V_{1}^\pi(x') - V_{2}^\pi(x')) \\
    &-  \sum_{x \in \cS} (P(s', a',x)  - b(x,s,a,s',a'))  \cdot \min_{x'} (V_{1}^\pi(x') - V_{2}^\pi(x'))\\
    &= (1 - \sum_{x\in \cS} b(x,s,a,s',a')) \cdot sp(V_{1}^\pi - V_{2}^\pi) \le (1-\alpha) \cdot sp(V_{1}^\pi - V_{2}^\pi)
\end{align*}
Therefore
\begin{align*}
sp(F(Q_1, R, \rho, P, \pi) - F(Q_2, R, \rho, P, \pi)) = sp(P \cdot (V_{1}^\pi - V_{2}^\pi)) \le (1 - \alpha) \cdot sp(V_{1}^\pi - V_{2}^\pi).
\end{align*}
Now, notice that for $s_\textnormal{max} = \arg \max_s [V_{1}^\pi(s) - V_{2}^\pi(s)]$ and $s_\textnormal{min} = \arg \min_s [V_{1}^\pi(s) - V_{2}^\pi(s)]$ we have that
\begin{align*}
     V_{1}^\pi(s_\textnormal{max}) - V_{2}^\pi(s_\textnormal{max}) &= Q_1(s_\textnormal{max},\pi(s_\textnormal{max})) - Q_2(s_\textnormal{max},\pi(s_\textnormal{max})) \le  \max_{s,a} [Q_1(s,a) - Q_2(s,a)] \\
    V_{1}^\pi(s_\textnormal{min}) - V_{2}^\pi(s_\textnormal{min}) &= Q_1(s_\textnormal{min},\pi(s_\textnormal{min})) - Q_2(s_\textnormal{min},\pi(s_\textnormal{min})) \ge   \min_{s, a} [Q_1(s,a) - Q_2(s,a)]
 \end{align*}
Therefore 
$sp(V_{1}^\pi - V_{2}^\pi) \le sp(Q_1 - Q_2)$,
which implies that
\begin{align*}
    sp(F(Q_1, R, \rho, P, \pi) - F(Q_2, R, \rho, P, \pi)) \le (1-\alpha) \cdot sp(Q_1 - Q_2)
\end{align*}
\end{proof}

To obtain the statement of the theorem, we will need to relate $sp(Q_1 - Q_2)$ to $\norm{R_1 - R_2}_{\infty}$. The following lemma provides a more general result, relating Q-values, reward functions and transition matrices. 

\begin{lemma}\label{lemma_qr_relation}
Let $Q_1^\pi$ and $V_1^\pi$ denote Q and V values of policy $\pi$ in MDP $M_1 = (\cS, \cA, R_1, P_1)$ and $Q_2^\pi$ denote Q values of policy $\pi$ in MDP $M_2 = (\cS, \cA, R_2, P_2)$. The following holds:
\begin{align*}
     \norm{R_1 - R_2}_{\infty} + \norm{P_1 - P_2}_{\infty} \cdot \norm{V_1^\pi}_{\infty} \ge \frac{\alpha_2}{2} \cdot sp(Q_1^\pi - Q_2^\pi).
\end{align*}
where
\begin{align*}
\alpha_2 = \min_{s,a,s',a'} \sum_{x \in \cS}\min \{P_2(s,a,x), P_2(s',a',x) \}.
\end{align*}

\end{lemma}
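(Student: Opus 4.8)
<br>

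The plan is to use the Bellman-type recurrence for $Q$ values together with the contraction (in span seminorm) established in Lemma~\ref{lemma_contraction}. First I would observe that $Q_1^\pi$ and $Q_2^\pi$ are fixed points of the respective operators: $Q_i^\pi = F(Q_i^\pi, R_i, \rho_i^\pi, P_i, \pi)$ for $i=1,2$, where $\rho_i^\pi$ is the gain of $\pi$ in $M_i$. (Recall $F(Q,R,\rho,P,\pi)(s,a) = R(s,a) - \rho + \sum_{s'} P(s,a,s') V^\pi(s')$ with $V^\pi(s') = Q(s',\pi(s'))$.) Since $sp$ is a seminorm and is invariant to adding a constant vector, the $-\rho \cdot \mathbf{1}$ terms drop out of any span computation, which is exactly why the gains $\rho_1^\pi, \rho_2^\pi$ do not appear in the final bound.

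The core estimate I would carry out is the following chain. Write $Q_1^\pi - Q_2^\pi = F(Q_1^\pi, R_1, \rho_1^\pi, P_1, \pi) - F(Q_2^\pi, R_2, \rho_2^\pi, P_2, \pi)$. Insert and subtract $F(Q_1^\pi, R_2, \rho_1^\pi, P_2, \pi)$ (i.e. an intermediate operator with the second MDP's reward and kernel but the first MDP's $Q$). Then
\begin{align*}
sp(Q_1^\pi - Q_2^\pi) &\le sp\big(F(Q_1^\pi, R_1, \rho_1^\pi, P_1, \pi) - F(Q_1^\pi, R_2, \rho_1^\pi, P_2, \pi)\big) \\
&\quad + sp\big(F(Q_1^\pi, R_2, \rho_1^\pi, P_2, \pi) - F(Q_2^\pi, R_2, \rho_2^\pi, P_2, \pi)\big).
\end{align*}
The second term: the two operators differ only in their first and third (and fourth, via $\rho$) arguments but use the \emph{same} kernel $P_2$ and policy $\pi$; after absorbing the constant $\rho$-shift, applying Lemma~\ref{lemma_contraction} (with $P = P_2$, $\alpha = \alpha_2$) bounds it by $(1-\alpha_2) \cdot sp(Q_1^\pi - Q_2^\pi)$. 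The first term equals $sp\big((R_1 - R_2) + (P_1 - P_2) V_1^\pi\big)$, which by the triangle inequality for $sp$ and the bound $sp(X) \le 2\norm{X}_\infty$ is at most $2\norm{R_1-R_2}_\infty + 2\norm{(P_1-P_2)V_1^\pi}_\infty \le 2\norm{R_1-R_2}_\infty + 2\norm{P_1-P_2}_\infty \cdot \norm{V_1^\pi}_\infty$, where the last step uses that each row of $P_1-P_2$ has $\ell_1$ norm at most $\norm{P_1-P_2}_\infty$ (this is the row-wise $\ell_1$ norm in the paper's convention). Rearranging $sp(Q_1^\pi - Q_2^\pi) \le 2\norm{R_1-R_2}_\infty + 2\norm{P_1-P_2}_\infty\norm{V_1^\pi}_\infty + (1-\alpha_2) sp(Q_1^\pi - Q_2^\pi)$ gives $\alpha_2 \cdot sp(Q_1^\pi - Q_2^\pi) \le 2(\norm{R_1-R_2}_\infty + \norm{P_1-P_2}_\infty\norm{V_1^\pi}_\infty)$, which is the claim.

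The main obstacle — really the only subtle point — is bookkeeping the $\rho$ terms and making sure the span seminorm genuinely annihilates them: one must be careful that when comparing $F(\cdot,\cdot,\rho_1^\pi,\cdot,\cdot)$ and $F(\cdot,\cdot,\rho_2^\pi,\cdot,\cdot)$ the difference of the two $-\rho\mathbf{1}$ terms is again a constant vector and hence contributes $0$ to the span. A secondary point is getting the direction of the contraction right: Lemma~\ref{lemma_contraction} requires both operators to share the same $R$, $\rho$, $P$, $\pi$, so the decomposition must be arranged (as above) so that the contraction step is applied to a pair of operators differing only in their $Q$-argument (modulo the constant $\rho$-shift). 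Everything else is routine: the triangle inequality for $sp$, the elementary bounds $sp(X)\le 2\norm{X}_\infty$ and $\norm{(P_1-P_2)V\|_\infty \le \norm{P_1-P_2}_\infty\norm{V}_\infty$.
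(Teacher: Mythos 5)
Your proposal is correct and follows essentially the same route as the paper: both use the fixed-point identity $Q_i^\pi = F(Q_i^\pi, R_i, \rho_i^\pi, P_i, \pi)$, telescope through intermediate operators, bound the reward and kernel perturbation terms by $2\norm{R_1-R_2}_\infty$ and $2\norm{P_1-P_2}_\infty\norm{V_1^\pi}_\infty$ via $sp(X)\le 2\norm{X}_\infty$, apply the span contraction of Lemma~\ref{lemma_contraction}, and rearrange. The only difference is cosmetic --- you use a two-term split (absorbing the $\rho$ mismatch into the contraction step via span invariance) where the paper uses a three-term split that keeps the $\rho$'s matched in the contraction term; both are valid.
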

\begin{proof}
 Notice that $Q_1^\pi$ and $Q_2^\pi$ satisfy 
\begin{align*}
Q_1^\pi(s,a) &= F(Q_1^\pi, R_1, \rho_1^\pi, P_1, \pi)\\
Q_2^\pi(s,a) &= F(Q_2^\pi, R_2, \rho_2^\pi, P_2, \pi),
\end{align*}
where $\rho_1^\pi$ and $\rho_2^\pi$ denote the average rewards of policy $\pi$ in $M_1$ and $M_2$, respectively. We obtain
\begin{align*}
    sp(Q_1^\pi - Q_2^\pi) &= sp(F(Q_1^\pi, R_1, \rho_1^\pi, P_1, \pi) - F(Q_2^\pi, R_2, \rho_2^\pi, P_2, \pi))\\
    &= sp(F(Q_1^\pi, R_1, \rho_1^\pi, P_1, \pi) 
    -F(Q_1^\pi, R_2, \rho_2^\pi, P_1, \pi)
    \\&+F(Q_1^\pi, R_2, \rho_2^\pi, P_1, \pi)
    -F(Q_1^\pi, R_2, \rho_2^\pi, P_2, \pi)
    \\&+F(Q_1^\pi, R_2, \rho_2^\pi, P_2, \pi)
    -F(Q_2^\pi, R_2, \rho_2^\pi, P_2, \pi))\\ 
    &\le sp(F(Q_1^\pi, R_1, \rho_1^\pi, P_1, \pi) 
    -F(Q_1^\pi, R_2, \rho_2^\pi, P_1, \pi))
    \\&+sp(F(Q_1^\pi, R_2, \rho_2^\pi, P_1, \pi)
    -F(Q_1^\pi, R_2, \rho_2^\pi, P_2, \pi))
    \\&+sp(F(Q_1^\pi, R_2, \rho_2^\pi, P_2, \pi)
    -F(Q_2^\pi, R_2, \rho_2^\pi, P_2, \pi))\\
    &\le sp(R_1 - \rho_1^\pi \cdot \mathbf 1 - R_2 + \rho_2^\pi \cdot \mathbf 1)
    + sp((P_1-P_2) \cdot V_1^\pi) + (1-\alpha_2) \cdot sp(Q_1^\pi - Q_2^\pi)
\end{align*}
where the last inequality is due to Lemma \ref{lemma_contraction} (i.e., $sp(F(Q_1^\pi, R_2, \rho_2^\pi, P_2, \pi) - F(Q_2^\pi, R_2, \rho_2^\pi, P_2, \pi)) \le (1-\alpha_2) \cdot sp(Q_1^\pi - Q_2^\pi)$). 
Due to the properties of $sp$, we have
\begin{align*}
    sp(R_1 - \rho_1^\pi \cdot \mathbf 1 - R_2 + \rho_2^\pi \cdot \mathbf 1) = sp(R_1 - R_2) \le 2\cdot \norm{R_1 - R_2}_{\infty},
\end{align*}
and 
\begin{align*}
  sp((P_1-P_2) \cdot V_1^\pi) &\le 2\cdot \norm{(P_1-P_2) \cdot V_1^\pi}_{\infty} \\
  &=2 \cdot \max_{s,a}|\sum_{s'}( P_1(s,a,s') -  P_2(s,a,s')) \cdot V_1^\pi(s')| \\
  &\le2 \cdot \max_{s,a}\sum_{s'}|( P_1(s,a,s') -  P_2(s,a,s')) \cdot V_1^\pi(s')| \\
  &\le 2 \cdot \max_{s,a}\sum_{s'}|( P_1(s,a,s') -  P_2(s,a,s'))| \cdot \max_{s''} |V_1^\pi(s'')| \\
  &\le 2 \cdot \norm{P_1 - P_2}_{\infty} \cdot \norm{V_1^\pi}_{\infty},
\end{align*}
where $\norm{P_1 - P_2}_{\infty} = \max_{s,a} \sum_{s'}|P_1(s,a,s') - P_2(s,a,s')|$.
Putting this together with the upper bound on $sp(Q_1^\pi - Q_2^\pi)$, we obtain
\begin{align*}
   2 \cdot \norm{R_1 - R_2}_{\infty} + 2 \cdot \norm{P_1 - P_2}_{\infty} \cdot \norm{V_1^\pi}_{\infty}\ge \alpha_2 \cdot sp(Q_1^\pi - Q_2^\pi),
\end{align*}
which proves the claim. 
\end{proof}{}
\vspace{-2mm}
We are now ready to prove the lower bound in Theorem \ref{theorem_off_unc}.

\textbf{Proof of the Lower Bound in Theorem~\ref{theorem_off_unc}:}
\vspace{-4mm}
\begin{proof}
 From Lemma \ref{lemma_qr_relation}, it follows that a lower bound can be obtained by bounding $sp(\widehat{Q}^{\targetpi} - \overline{Q}^{\targetpi})$ from below, where $\widehat{Q}^{\targetpi}$ are Q-values of the target policy in the modified MDP, and $\overline{Q}^{\targetpi}$ are Q values of the target policy in the initial MDP. 

Let $s'$ and $a'$ be a state action pair that satisfy: $s',a' = \arg\max_{s,a} \overline{\chi}_{\epsilon}^{\targetpi}( s, a)$. W.l.o.g. we can assume that $\overline{\chi}_{\epsilon}^{\targetpi}(s', a') > 0$, since otherwise no modifications are needed to the original MDP and the lower bound trivially holds. We have 
\begin{align*}
sp(\overline{Q}^{\targetpi} - \widehat{Q}^{\targetpi}) &= sp(\widehat{Q}^{\targetpi} - \overline{Q}^{\targetpi}) = \max_{s,a} [\widehat{Q}^{\targetpi}(s, a) - \overline{Q}^{\targetpi}(s, a)]
- \min_{s,a}[\widehat{Q}^{\targetpi}(s, a) - \overline{Q}^{\targetpi}(s, a) ] \\
&\ge 
\widehat{Q}^{\targetpi}(s', \targetpi(s')) - \overline{Q}^{\targetpi}(s', \targetpi(s')) - (\widehat{Q}^{\targetpi}(s', a') - \overline{Q}^{\targetpi}(s', a')) \\
&=
(\widehat{Q}^{\targetpi}(s', \targetpi(s')) - \widehat{Q}^{\targetpi}(s', a')) +  
(\overline{Q}^{\targetpi}(s', a') - \overline{Q}^{\targetpi}(s', \targetpi(s')) \\
&\ge \frac{\epsilon}{\overline{\mu}^{\neighbor{\targetpi}{s'}{a'}}(s')} + (\overline{Q}^{\targetpi}(s', a') - \overline{Q}^{\targetpi}(s', \targetpi(s')) \\
&=\frac{\overline{\rho}^{\neighbor{\targetpi}{s'}{a'}} - \overline{\rho}^{\targetpi} + \epsilon}{\overline{\mu}^{\neighbor{\targetpi}{s'}{a'}}(s')} = \overline{\chi}_{\epsilon}^{\targetpi}(s', a'),
\end{align*}
where we used the fact that $\widehat{Q}^{\targetpi}(s', \targetpi(s')) \ge \widehat{Q}^{\targetpi}(s', a') +  \frac{\epsilon}{\overline{\mu}^{\neighbor{\targetpi}{s'}{a'}}(s')}$ (because $\targetpi$ is robustly optimal in the modified MDP and the transition kernel did not change) and Lemma \ref{lemma_qrho_relate} (Corollary \ref{gain_diff_neighbor}) to relate Q values to average rewards $\rho$. Finally, using Lemma \ref{lemma_qr_relation} (and noting that we did not change transition matrix $\overline{P}$), we obtain 
\begin{align*}
\norm{\widehat{R} - \overline{R}}_{\infty} \ge \frac{\overline{\alpha}}{2} \cdot \norm{ \overline{\chi}_{\epsilon}^{\targetpi}}_{\infty}
\end{align*}
The inequality $\norm{\widehat{R} - \overline{R}}_{p} \ge \norm{\widehat{R} - \overline{R}}_{\infty}$ implies the lower bound. 
\end{proof}

\subsection{Proofs for the Upper Bound and Feasibility in Theorem~\ref{theorem_off_unc}}
\label{appendix.off.upper_feas}

\begin{lemma}
\label{lemma.off.reward.nontarget}
$\widehat{R} = \overline{R} - \overline{\chi}_{\epsilon}^{\targetpi}$ is the solution of the following problem:
\begin{align}
	\label{prob.off.nontarget}
	\tag{P6}
	\min_{R} &\quad \norm{R - \overline R}_{p}\\
	\notag
	s.t. &\quad \sum_{s'} \overline{\mu}^{\targetpi}(s') \cdot \ R \big (s', \targetpi(s') \big ) \ge 
	\sum_{s'} \overline{\mu}^{\neighbor{\targetpi}{s}{a}}(s') \cdot R \big (s', \neighbor{\targetpi}{s}{a}(s') \big ) + \epsilon \quad \forall s, a \ne \targetpi(s), \\
	\notag
	& \quad R(s, \targetpi(s)) = \overline{R}(s, \targetpi(s)) \quad \forall s.
\end{align}
\end{lemma}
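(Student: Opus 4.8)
The plan is to show that problem \eqref{prob.off.nontarget} decouples across state-action pairs once the equality constraints are imposed, and then to solve each resulting one-dimensional subproblem explicitly. First I would use the equality constraints $R(s, \targetpi(s)) = \overline{R}(s, \targetpi(s))$ to rewrite the inequality constraint indexed by a pair $(s, a)$ with $a \ne \targetpi(s)$. Since the neighbor policy $\neighbor{\targetpi}{s}{a}$ agrees with $\targetpi$ on every state except $s$, the left-hand side becomes $\sum_{s'}\overline{\mu}^{\targetpi}(s')\,\overline{R}(s', \targetpi(s')) = \overline{\rho}^{\targetpi}$, a constant; on the right-hand side the only term involving a free variable is $\overline{\mu}^{\neighbor{\targetpi}{s}{a}}(s)\cdot R(s,a)$, while the remaining terms sum to $\overline{\rho}^{\neighbor{\targetpi}{s}{a}} - \overline{\mu}^{\neighbor{\targetpi}{s}{a}}(s)\cdot\overline{R}(s,a)$. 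Hence the $(s,a)$-constraint is equivalent to the scalar inequality
\[
R(s,a) \;\le\; \overline{R}(s,a) - \frac{\overline{\rho}^{\neighbor{\targetpi}{s}{a}} - \overline{\rho}^{\targetpi} + \epsilon}{\overline{\mu}^{\neighbor{\targetpi}{s}{a}}(s)},
\]
where $\overline{\mu}^{\neighbor{\targetpi}{s}{a}}(s) > 0$ by ergodicity. The key structural observation is that each variable $R(s,a)$ with $a \ne \targetpi(s)$ appears in exactly one such constraint and in no other, so the feasible set is a product of half-lines.

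Next I would note that $\norm{R - \overline{R}}_{p}$ is coordinatewise nondecreasing in each $\big|R(s,a) - \overline{R}(s,a)\big|$, so minimizing it reduces to minimizing every coordinate's deviation independently. For target pairs this deviation is forced to $0$. For a non-target pair $(s,a)$, minimizing $\big|R(s,a) - \overline{R}(s,a)\big|$ over the half-line above yields $R(s,a) = \overline{R}(s,a)$ when the subtracted quantity is $\le 0$, and $R(s,a) = \overline{R}(s,a) - \big(\overline{\rho}^{\neighbor{\targetpi}{s}{a}} - \overline{\rho}^{\targetpi} + \epsilon\big)/\overline{\mu}^{\neighbor{\targetpi}{s}{a}}(s)$ otherwise; by the definition of $\overline{\chi}^{\targetpi}_{\epsilon}$ in \eqref{chi.definition} (which is exactly this quantity's positive part), both cases are captured by $R(s,a) = \overline{R}(s,a) - \overline{\chi}^{\targetpi}_{\epsilon}(s,a)$. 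Assembling the coordinates gives $\widehat{R} = \overline{R} - \overline{\chi}^{\targetpi}_{\epsilon}$, and feasibility of this choice is immediate since $\overline{\chi}^{\targetpi}_{\epsilon}(s,\targetpi(s)) = 0$ discharges the equality constraints while the above construction satisfies each scalar inequality.

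I do not expect a real obstacle here; the only care needed is the bookkeeping that converts the stationary-distribution-weighted constraint into scalar form, in particular correctly isolating the single free variable $R(s,a)$ and identifying the constant pieces with $\overline{\rho}^{\targetpi}$ and $\overline{\rho}^{\neighbor{\targetpi}{s}{a}}$, together with the case split induced by the $\pos{\cdot}$ in the definition of $\overline{\chi}^{\targetpi}_{\epsilon}$. The argument is uniform in $p \ge 1$ because the proposed minimizer is simultaneously optimal for each coordinate.
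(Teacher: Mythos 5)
Your proposal is correct, and it is in fact more complete than the proof the paper gives. The paper's own argument only substitutes $\widehat{R} = \overline{R} - \overline{\chi}^{\targetpi}_{\epsilon}$ into the two constraint families and verifies feasibility (using $\overline{\rho}^{\neighbor{\targetpi}{s}{a}} - \overline{\chi}^{\targetpi}_{\epsilon}(s,a)\cdot\overline{\mu}^{\neighbor{\targetpi}{s}{a}}(s) \le \overline{\rho}^{\targetpi} - \epsilon$ and $\overline{\chi}^{\targetpi}_{\epsilon}(s,\targetpi(s))=0$); it never explicitly argues optimality. You instead reduce each $(s,a)$-constraint, after imposing the equality constraints, to the scalar half-line condition $R(s,a) \le \overline{R}(s,a) - \bigl(\overline{\rho}^{\neighbor{\targetpi}{s}{a}} - \overline{\rho}^{\targetpi} + \epsilon\bigr)/\overline{\mu}^{\neighbor{\targetpi}{s}{a}}(s)$, observe that each free variable appears in exactly one constraint so the feasible set is a product of half-lines, and take the coordinatewise projection of $\overline{R}$ onto it. This buys you the optimality claim (which the lemma asserts but the paper's proof leaves implicit), makes the role of the positive part in the definition of $\overline{\chi}^{\targetpi}_{\epsilon}$ transparent, and shows in one stroke why the minimizer is the same for every $p \ge 1$. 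Your scalar reduction is algebraically consistent with the paper's substitution, so the two arguments agree where they overlap.
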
{}

\begin{proof}
Since the transitions are not changed, we have
 \begin{align*}
    \sum_{s'} \widehat{R}(s', \neighbor{\targetpi}{s}{a}(s')) \cdot \widehat{\mu}^{\neighbor{\targetpi}{s}{a}}(s')
    &=\sum_{s'} \widehat{R}(s', \neighbor{\targetpi}{s}{a}(s')) \cdot  \overline{\mu}^{\neighbor{\targetpi}{s}{a}}(s')
    \\&=\sum_{s'} \overline{R}(s', \neighbor{\targetpi}{s}{a}(s'))  \cdot  \overline{\mu}^{\neighbor{\targetpi}{s}{a}}(s') - \overline{\chi}_{\epsilon}^{\targetpi}(s,a) \cdot \overline{\mu}^{\neighbor{\targetpi}{s}{a}}(s)
    \\&=\overline{\rho}^{\neighbor{\targetpi}{s}{a}} - \overline{\chi}_{\epsilon}^{\targetpi}(s,a) \cdot \overline{\mu}^{\neighbor{\targetpi}{s}{a}}(s)\\
    &\le \overline{\rho}^{\neighbor{\targetpi}{s}{a}} - (\overline{\rho}^{\neighbor{\targetpi}{s}{a}} - \overline{\rho}^{\targetpi} + \epsilon)\\
    &= \overline{\rho}^{\targetpi} - \epsilon \\&= \widehat{\rho}^{\targetpi} - \epsilon\\
    &= \sum_{s'} \overline{\mu}^{\targetpi}(s') \cdot \ R \big (s', \targetpi(s') \big ) - \epsilon
 \end{align*}
 which shows that the attack satisfies the first constraint of the optimization problem. The second constraint is also satisfied as we have $\overline{\chi}_{\epsilon}^{\targetpi}(s, \targetpi(s)) = 0$
 \end{proof}

\textbf{Proof of Feasibility and the Upper Bound in Theorem~\ref{theorem_off_unc}:}
\begin{proof}
 Consider the attack with $\widehat{R} = \overline{R} - \overline{\chi}_{\epsilon}^{\targetpi}$. As we showed in Lemma \ref{lemma.off.reward.nontarget}, it is the solution for problem (\ref{prob.off.nontarget}) which has all the constraints in (\ref{prob.off.unc}). Thus, this attack is also a solution for (\ref{prob.off.unc}) which shows the feasibility of this optimization problem. Furthermore, this attack also provides us with an upper bound on the quality of the obtained solution, i.e., for this attack:  
 \begin{align*}
     \norm{\widehat{R} - \overline{R}}_{p} = \left (\sum_{s,a} \left ( \widehat{R}(s,a) - \overline{R}(s,a) \right )^p \right )^{1/p} = \left (\sum_{s,a}  {\overline{\chi}_{\epsilon}^{\targetpi}(s,a)}^p \right )^{1/p} =  \norm{\overline{\chi}_\epsilon^{\targetpi}}_{p},
 \end{align*}
 and we know that an optimal attack achieves a lower cost (i.e., a lower value of $\norm{\widehat{R} - \overline{R}}_{p}$). 
\end{proof}
\section{Proofs for  Offline Attacks: Poisoning Dynamics (Section \ref{sec.off.dynamics})}
\label{appendix.off.dynamics}

To prove Theorem \ref{theorem_off_dyn}, we first introduce a more constrained problem which has a tractable reformulation in Appendix~\ref{appendix.off.dyn.newprob}. This problem enables us to prove our feasibility condition in Appendix~\ref{appendix.off.dyn.proof} and will also be the basis of our online dynamic attacks. Then, in Appendix~\ref{appendix.off.dyn.proof}, we prove the theorem.

\subsection{More Constrained Problem and Reformulation}
\label{appendix.off.dyn.newprob}

Consider the following problem
\begin{align}
  	\label{prob.off.dyn.nontarget}
  	\tag{P7}
  	\min_{P, \mu^{\targetpi} \mu^{\neighbor{\targetpi}{s}{a}}} &\quad \norm{P - \overline P}_{p}\\
    \notag
  	\mbox{ s.t. }& \quad \mu^{\pi}(s) = \sum_{s'} P(s', \pi(s'), s) \cdot \mu^{\pi}(s') \quad \forall s,\\
  	\notag
 	&\quad \mu^{\neighbor{\targetpi}{s}{a}}(s') = \sum_{s''}  P(s'', \neighbor{\targetpi}{s}{a}(s''), s') \cdot \mu^{\neighbor{\targetpi}{s}{a}}(s'') \quad \forall s', s,a \ne \targetpi(s),\\
  	\notag
  	&\quad \sum_{s'} \mu^{\targetpi}(s') \cdot \ \overline R \big (s', \targetpi(s') \big ) \ge 
	\sum_{s'} \mu^{\neighbor{\targetpi}{s}{a}}(s') \cdot \overline R \big (s', \neighbor{\targetpi}{s}{a}(s') \big ) + \epsilon \quad \forall s, a \ne \targetpi(s),\\
    \notag
  	&\quad P(s,a,s') \ge \delta \cdot \overline{P}(s,a,s') \quad \forall s, a, s',\\
  	\notag
  	&\quad P(s, \targetpi(s), s') =  \overline{P}(s, \targetpi(s), s') \quad \forall s, s'.
  \end{align}
 This is the exact same problem (\ref{prob.off.dyn}) with an additional condition $P(s, \targetpi(s), s') =  \overline{P}(s, \targetpi(s), s')$ for all $s, s'$. This problem is also used in the online attacks (see problem (\ref{prob.on.known.dyn})). We will show that this problem has a simple reformulation as the following:
  \begin{align}
\label{prob.off.dyn.nontarget.new}
\tag{P8}
 	\min_{P} &\quad \norm{P - \overline{P}}_{p}\\
 	\notag
 	\mbox{s.t.} &\quad  \overline{R}(s,\targetpi(s)) + \overline{B}^{\targetpi}(s) - \overline{R}(s,a) \ge \epsilon + \sum_{s'} P(s,a,s') \cdot \overline{U}^{\targetpi}(s, s') \quad \quad \forall s,a \ne \targetpi(s), s',\\
 	\notag
 	&\quad  P(s,a,s') \ge \delta \cdot \overline{P}(s,a,s') \quad \forall s, a, s',\\
 	\notag
 	& \quad  P(s,\targetpi(s),s') =  \overline{P}(s,\targetpi(s),s') \quad \forall s, s',
\end{align}
where
\begin{align*}
    \overline{U}^{\targetpi}(s, s') = \overline{V}^{\targetpi}(s') + \epsilon\cdot \overline{T}^{\targetpi}(s', s) \cdot \ind{s' \ne s}.
\end{align*}
Note that $\overline{U}^{\targetpi}$ is obtained from the initial MDP $\overline{M}$.

Before we show these problems are equivalent, let us first characterize the minimum value of $\mu^{\neighbor{\targetpi}{s}{a}}(s)$. To do so, we utilize the diameter of Markov chain defined by MDP $\overline{M}$ and policy $\targetpi$, i.e., $\overline{D}^{\targetpi} = \max_{s,s'} \overline{T}^{\targetpi}(s,s')$. Here, $\overline{T}^{\targetpi}(s,s')$ is the expected time to reach $s'$ starting from $s$ in MDP $\overline{M}$ under policy $\targetpi$. We have: 

\begin{lemma}\label{lemma_dyn_attack_stat_dist}
Assume that for transition kernel $P$, we have 
$P(s,\targetpi(s),s') =  \overline{P}(s,\targetpi(s),s')$ for all $s, s'$, and $\mu^\pi$ denotes the stationary distribution of $\pi$ under $P$. Then, for $\mu^{\neighbor{\targetpi}{s}{a}}(s)$ we have
\begin{align*}
    \mu^{\neighbor{\targetpi}{s}{a}}(s) = \frac{1}{1 + \sum_{s' \ne s} P(s, a, s') \overline{T}^{\targetpi}(s', s)} \ge \frac{1}{1 + \overline{D}^{\targetpi}},
\end{align*}
where $\overline{D}^{\targetpi} = \max_{s',s''} \overline{T}^{\targetpi}(s',s'')$ is the diameter of MDP $\overline{M}$ with policy $\targetpi$. 
\end{lemma}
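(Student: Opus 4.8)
The plan is to exploit that the policy $\neighbor{\targetpi}{s}{a}$ coincides with $\targetpi$ at every state other than the fixed state $s$, together with the hypothesis that $P(x,\targetpi(x),\cdot) = \overline P(x,\targetpi(x),\cdot)$ for every state $x$, in order to express $\mu^{\neighbor{\targetpi}{s}{a}}(s)$ via a first-return-time computation that only involves hitting times of the original chain under $\targetpi$.

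First I would invoke the standard renewal identity that, for a positive-recurrent state of an irreducible Markov chain, the stationary probability equals the reciprocal of the expected first return time; thus $\mu^{\neighbor{\targetpi}{s}{a}}(s) = 1/m_s$, where $m_s$ is the expected first return time to $s$ for the chain induced by $\neighbor{\targetpi}{s}{a}$ and $P$. Irreducibility (indeed ergodicity) of this chain follows from the regularity constraint $P \ge \delta \cdot \overline P$ with $\delta > 0$, combined with the standing assumption that $\overline M$ is ergodic. Next I would perform a one-step decomposition from $s$: since $\neighbor{\targetpi}{s}{a}$ plays action $a$ at $s$, the successor is $s'$ with probability $P(s,a,s')$, so $m_s = \sum_{s'} P(s,a,s')\,[1 + h(s')] = 1 + \sum_{s'\ne s} P(s,a,s')\, h(s')$, where $h$ is the function on $S$ with $h(s) := 0$ and, for $x \ne s$, $h(x)$ equal to the expected time to reach $s$ from $x$ under $\neighbor{\targetpi}{s}{a}$ and $P$.

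The crux is the identification $h(x) = \overline T^{\targetpi}(x,s)$ for all $x \ne s$. A trajectory contributing to $h(x)$ stays in $S \setminus \{s\}$ until its final step, and on $S \setminus \{s\}$ the policy $\neighbor{\targetpi}{s}{a}$ agrees with $\targetpi$ while the relevant rows of $P$ agree with those of $\overline P$; hence $h$ satisfies, for every $x \ne s$, the first-passage equation $h(x) = 1 + \sum_{y} \overline P(x,\targetpi(x),y)\, h(y)$, which is precisely the linear system characterizing $\overline T^{\targetpi}(\cdot,s)$. Uniqueness of the solution to this system (again by ergodicity of $\overline M$) yields $h(x) = \overline T^{\targetpi}(x,s)$, and substituting back gives $\mu^{\neighbor{\targetpi}{s}{a}}(s) = \bigl(1 + \sum_{s'\ne s} P(s,a,s')\, \overline T^{\targetpi}(s',s)\bigr)^{-1}$, the first claim. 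The lower bound is then immediate from $\overline T^{\targetpi}(s',s) \le \overline D^{\targetpi}$ (definition of the diameter) and $\sum_{s' \ne s} P(s,a,s') \le 1$, which give $m_s \le 1 + \overline D^{\targetpi}$ and hence $\mu^{\neighbor{\targetpi}{s}{a}}(s) \ge 1/(1 + \overline D^{\targetpi})$.

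I expect the main obstacle to be the bookkeeping in the crucial step: rigorously arguing that the first-passage (to $s$) vector of the modified chain, restricted to $S \setminus \{s\}$, is identical to that of the $\targetpi$-chain in $\overline M$. I would handle this through uniqueness of the solution of the first-passage linear system rather than via an explicit trajectory-by-trajectory coupling, since that avoids delicate measure-theoretic accounting over paths.
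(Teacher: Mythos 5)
Your proposal is correct and follows essentially the same route as the paper: the renewal identity $\mu^{\neighbor{\targetpi}{s}{a}}(s) = 1/T^{\neighbor{\targetpi}{s}{a}}(s,s)$, a one-step decomposition at $s$, and the observation that first-passage times to $s$ from $s' \ne s$ under the modified chain coincide with $\overline T^{\targetpi}(s',s)$ because neither the policy nor the kernel is altered off $s$ along such excursions. The only difference is that you justify that identification via uniqueness of the first-passage linear system, where the paper simply asserts the two equalities; your version is slightly more careful but not a different argument.
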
  
\begin{proof}
By using the fact that $\frac{1}{\mu^{\neighbor{\targetpi}{s}{a}}(s)} = T^{\neighbor{\targetpi}{s}{a}}(s, s)$ (see Theorem~1.21 in \cite{durrett1999essentials}), where $T^{\targetpi}(s',s)$ is the expected time to reach $s$ starting from $s'$ in the corresponding Markov chain, we obtain
\begin{align*}
    \frac{1}{\mu^{\neighbor{\targetpi}{s}{a}}(s)}
    &=  T^{\neighbor{\targetpi}{s}{a}}(s, s)\\
    &=  1 + \sum_{s' \ne s} P(s, a, s') \cdot T^{\neighbor{\targetpi}{s}{a}}(s', s)\\
    &=1 +  \sum_{s'\ne s} P(s, a, s') \cdot T^{\targetpi}(s', s)\\
    &=1 + \sum_{s' \ne s} P(s, a, s') \cdot \overline{T}^{\targetpi}(s', s)\\
    &\le 1 +  \max_{s'} \overline{T}^{\targetpi}(s', s)\\
    &\le 1 + \overline{D}^{\targetpi}.
\end{align*}
The last inequality obtained from the definition of diameter $\overline{D}^{\targetpi}$, i.e., $\overline{D}^{\targetpi} = \max_{s',s''} \overline{T}^{\targetpi}(s',s'')$ and we have used the fact that transitions of $\targetpi$ are not changed.
\end{proof}

\begin{lemma}
\label{lemma.off.dyn.nontarget.reform}
Problem (\ref{prob.off.dyn.nontarget.new}) is a reformulation of (\ref{prob.off.dyn.nontarget}).
\end{lemma}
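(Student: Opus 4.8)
The plan is to eliminate the stationary-distribution variables from \eqref{prob.off.dyn.nontarget} and rewrite its one nontrivial family of constraints in closed form using quantities of the \emph{original} MDP $\overline M$ only, thereby matching \eqref{prob.off.dyn.nontarget.new} term by term.

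First I would record the invariances that make this work. In \eqref{prob.off.dyn.nontarget} the reward is untouched ($\widehat R=\overline R$) and the last constraint forces $P(s,\targetpi(s),\cdot)=\overline P(s,\targetpi(s),\cdot)$, so the Markov reward process induced by $\targetpi$ under $P$ coincides with the one under $\overline P$; hence $\rho^{\targetpi}=\overline\rho^{\targetpi}$, $V^{\targetpi}=\overline V^{\targetpi}$, and the hitting times $T^{\targetpi}(\cdot,\cdot)$ (in particular the diameter $\overline D^{\targetpi}$) are the same object whether computed in $\widehat M$ or $\overline M$. Moreover, since $\epsP>0$ keeps all trajectory probabilities strictly positive, every $P$ feasible for \eqref{prob.off.dyn.nontarget} induces ergodic chains, so for each policy the associated $\mu$-variable is pinned down to the unique probability vector solving \eqref{eq.stat_dist.from.kernel}; as the objective $\norm{P-\overline P}_p$ does not involve these variables, \eqref{prob.off.dyn.nontarget} is equivalent to the program obtained by projecting the $\mu$'s out, in which the constraint $\sum_{s'}\mu^{\targetpi}(s')\overline R(s',\targetpi(s')) \ge \sum_{s'}\mu^{\neighbor{\targetpi}{s}{a}}(s')\overline R(s',\neighbor{\targetpi}{s}{a}(s')) + \epsilon$ simply reads $\rho^{\targetpi}\ge\rho^{\neighbor{\targetpi}{s}{a}}+\epsilon$.

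Next I would turn this gain inequality into the linear constraint of \eqref{prob.off.dyn.nontarget.new}. By Corollary~\ref{gain_diff_neighbor} applied to $\pi=\targetpi$, $\rho^{\targetpi}-\rho^{\neighbor{\targetpi}{s}{a}} = \mu^{\neighbor{\targetpi}{s}{a}}(s)\big(V^{\targetpi}(s)-Q^{\targetpi}(s,a)\big)$, and since $\mu^{\neighbor{\targetpi}{s}{a}}(s)>0$ the constraint becomes $V^{\targetpi}(s)-Q^{\targetpi}(s,a) \ge \epsilon/\mu^{\neighbor{\targetpi}{s}{a}}(s)$. Lemma~\ref{lemma_dyn_attack_stat_dist} (applicable because $P(s,\targetpi(s),\cdot)=\overline P(s,\targetpi(s),\cdot)$) gives $1/\mu^{\neighbor{\targetpi}{s}{a}}(s) = 1 + \sum_{s'\ne s} P(s,a,s')\,\overline T^{\targetpi}(s',s)$. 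For the left-hand side I would use the relative-value recurrence \eqref{calcualte_q_1}: $Q^{\targetpi}(s,a) = \overline R(s,a) - \overline\rho^{\targetpi} + \sum_{s'}P(s,a,s')\overline V^{\targetpi}(s')$ and $V^{\targetpi}(s)=Q^{\targetpi}(s,\targetpi(s)) = \overline R(s,\targetpi(s)) - \overline\rho^{\targetpi} + \overline B^{\targetpi}(s)$, so $V^{\targetpi}(s)-Q^{\targetpi}(s,a) = \overline R(s,\targetpi(s)) + \overline B^{\targetpi}(s) - \overline R(s,a) - \sum_{s'}P(s,a,s')\overline V^{\targetpi}(s')$. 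Substituting both expressions into $V^{\targetpi}(s)-Q^{\targetpi}(s,a) \ge \epsilon\big(1 + \sum_{s'\ne s} P(s,a,s')\overline T^{\targetpi}(s',s)\big)$ and collecting the $P(s,a,\cdot)$ terms yields exactly $\overline R(s,\targetpi(s)) + \overline B^{\targetpi}(s) - \overline R(s,a) \ge \epsilon + \sum_{s'}P(s,a,s')\overline U^{\targetpi}(s,s')$ with $\overline U^{\targetpi}(s,s') = \overline V^{\targetpi}(s') + \epsilon\,\overline T^{\targetpi}(s',s)\ind{s'\ne s}$.

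Finally I would observe that the remaining constraints of \eqref{prob.off.dyn.nontarget}, namely $P(s,a,s')\ge\epsP\cdot\overline P(s,a,s')$ and $P(s,\targetpi(s),s')=\overline P(s,\targetpi(s),s')$, together with the objective, carry over verbatim to \eqref{prob.off.dyn.nontarget.new}; hence the two programs have identical feasible sets (after projecting out $\mu$) and identical objectives, which is the claimed reformulation. The only points requiring care — the ``hard part'' such as it is — are (i) justifying that the $\mu$-variables are genuinely determined by $P$ so they can be removed without altering the optimum, which rests on ergodicity from $\epsP>0$, and (ii) being precise that $\rho^{\targetpi},V^{\targetpi},T^{\targetpi},\overline B^{\targetpi}$ are unchanged by the poisoning, since that is exactly what lets every term in the new constraint be precomputed from $\overline M$.
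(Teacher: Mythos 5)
Your proposal is correct and follows essentially the same route as the paper's proof: convert the gain inequality to a $Q$-value gap via Corollary~\ref{gain_diff_neighbor}, replace $1/\mu^{\neighbor{\targetpi}{s}{a}}(s)$ using Lemma~\ref{lemma_dyn_attack_stat_dist}, and expand the $Q$-values via the recurrence together with the invariance of $\overline V^{\targetpi}$ under target-preserving poisoning. You are somewhat more explicit than the paper about projecting out the $\mu$-variables and about which quantities are unchanged, but these are details the paper leaves implicit rather than a different argument.
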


\begin{proof}
Let $\rho^\pi$, $Q^\pi$, and $\mu^\pi$ denote the average reward, Q-values, and stationary distribution of $\pi$ on MDP $M = (S, A, \overline R, P)$. The last two constraints of (\ref{prob.off.dyn.nontarget}) are repeated in (\ref{prob.off.dyn.nontarget.new}). The rest of conditions in (\ref{prob.off.dyn.nontarget}) are equivalent to $\rho^{\targetpi}  \ge \rho^{\neighbor{\targetpi}{s}{a}}+ \epsilon$ for all $s, a \ne \targetpi(s)$ which we should prove it is equivalent to the remaining constraint in (\ref{prob.off.dyn.nontarget.new}).
By using the relation between average rewards $\rho$ and $Q$ values from Lemma \ref{lemma_qrho_relate} (Corollary \ref{gain_diff_neighbor}), we can write this condition as:
\begin{align*}
     Q^{\targetpi}(s,\targetpi(s)) - Q^{\targetpi}(s,a)  \ge \frac{\epsilon}{\mu^{\neighbor{\targetpi}{s}{a}}(s)}, \forall a \ne \targetpi(s).
\end{align*}
Due to Lemma \ref{lemma_dyn_attack_stat_dist}, this can be rewritten as
\begin{align*}
     Q^{\targetpi}(s,\targetpi(s)) - Q^{\targetpi}(s,a)  \ge \epsilon + \epsilon \cdot \sum_{s' \ne s} P(s, a, s') \cdot \overline{T}^{\targetpi}(s',s), \forall a \ne \targetpi(s).
\end{align*}
By using the recurrence relation for $Q$ values, the definition of $\overline{B}^{\targetpi}$, and the fact that $P(s, \targetpi(s), .) = \overline{P}(s, \targetpi(s), .)$,  we obtain that for all $s$ and $a$, it is equivalent to the following
\begin{align}
    \label{eq_lemma_dyn_attack_cond_2}
    \epsilon &\le \overline{R}(s, \targetpi(s)) + \overline{B}^{\targetpi}(s) - \overline{R}(s, a) - \sum_{s'} P(s, a, s') \cdot \bigg[\overline{V}^{\targetpi}(s') + \epsilon \cdot \overline{T}^{\targetpi}(s', s) \cdot \ind{s' \ne s} \bigg ]\\
    &= \overline{R}(s, \targetpi(s)) + \overline{B}^{\targetpi}(s) - \overline{R}(s, a) - \sum_{s'} P(s, a, s') \cdot \overline{U}^{\targetpi}(s, s').
    \end{align}
which is the condition in (\ref{prob.off.dyn.nontarget.new})
\end{proof}{}

\subsection{Feasibility Analysis}
\label{appendix.off.dyn.proof}

In this section we analyze feasibility of dynamics offline attacks as formulated in problem (\ref{prob.off.dyn}). To give a sufficient condition for feasibility of (\ref{prob.off.dyn}), it suffices to give a sufficient condition for feasibility of (\ref{prob.off.dyn.nontarget}) as it is a more constrained problem. In Lemma \ref{lemma_dyn_attack_cond}, we give a sufficient and necessary condition for feasibility of (\ref{prob.off.dyn.nontarget}). The condition stated in Theorem~\ref{theorem_off_dyn} is a simpler one showed in Corollary~\ref{cor_dyn_attack_cond} using Lemma \ref{lemma_dyn_attack_cond}.

\begin{lemma}\label{lemma_dyn_attack_cond}
The optimization problem (\ref{prob.off.dyn.nontarget}) has a solution $\widehat{P}$ if and only if for all state $s$ and actions $a \ne \targetpi(s)$ we have $C(s,a) \ge \epsilon$, where
\begin{align*}
    C(s, a) = \overline{R}(s, \targetpi(s)) + \overline{B}^{\targetpi}(s) - \overline{R}(s, a) - (1-\delta)\cdot \min_{s'} \overline{U}^{\targetpi}(s, s') - \delta \sum_{s'} \overline{P}(s, a, s') \cdot \overline{U}^{\targetpi}(s, s').
\end{align*}
\end{lemma}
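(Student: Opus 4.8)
The plan is to show that \eqref{prob.off.dyn.nontarget} is feasible iff one can choose, for each non-target pair $(s,a)$, a distribution $P(s,a,\cdot)$ that (i) dominates $\delta \cdot \overline{P}(s,a,\cdot)$ componentwise, (ii) sums to one, and (iii) satisfies the linearized robustness inequality from \eqref{prob.off.dyn.nontarget.new}, namely $\overline{R}(s,\targetpi(s)) + \overline{B}^{\targetpi}(s) - \overline{R}(s,a) \ge \epsilon + \sum_{s'} P(s,a,s') \cdot \overline{U}^{\targetpi}(s, s')$. Since by Lemma~\ref{lemma.off.dyn.nontarget.reform} problem \eqref{prob.off.dyn.nontarget} is equivalent to \eqref{prob.off.dyn.nontarget.new}, and the constraints in \eqref{prob.off.dyn.nontarget.new} decouple across $(s,a)$ (the target rows are fixed to $\overline{P}$, and each non-target row appears in exactly one constraint), feasibility of the whole program reduces to per-$(s,a)$ feasibility. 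So I would fix an arbitrary $s$ and $a \ne \targetpi(s)$ and analyze when the single-row feasibility region is nonempty.

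For a fixed row, the question is: over all distributions $q$ on $S$ with $q(s') \ge \delta \cdot \overline{P}(s,a,s')$ for all $s'$, what is the minimum of $\sum_{s'} q(s') \overline{U}^{\targetpi}(s,s')$? The feasible row exists iff this minimum is $\le \overline{R}(s,\targetpi(s)) + \overline{B}^{\targetpi}(s) - \overline{R}(s,a) - \epsilon$. This is a linear program over a simplex-with-lower-bounds polytope, and its minimizer is obtained greedily: put the mandatory mass $\delta \cdot \overline{P}(s,a,s')$ on every coordinate (total mandatory mass $\delta$, since $\overline P(s,a,\cdot)$ is a distribution), and allocate the remaining free mass $1-\delta$ entirely to a coordinate $s'$ minimizing $\overline{U}^{\targetpi}(s,s')$. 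Hence
\begin{align*}
\min_{q} \sum_{s'} q(s')\overline{U}^{\targetpi}(s,s') = \delta \sum_{s'} \overline{P}(s,a,s') \cdot \overline{U}^{\targetpi}(s, s') + (1-\delta)\cdot \min_{s'} \overline{U}^{\targetpi}(s, s').
\end{align*}
Substituting this into the row-feasibility inequality and rearranging gives exactly $C(s,a) \ge \epsilon$ with $C$ as defined in the statement. Collecting over all $(s,a)$ with $a \ne \targetpi(s)$ yields the claim. I would also note the $\delta \le 1$ hypothesis guarantees the mandatory mass $\delta \le 1$ so the free mass $1-\delta \ge 0$ is well-defined, and $\delta > 0$ keeps the resulting kernel ergodic (as already noted after \eqref{prob.off.dyn}); moreover whenever the per-row regions are nonempty, selecting each row's minimizer and keeping $\targetpi$-rows at $\overline{P}$ produces a single kernel $\widehat{P}$ satisfying all constraints simultaneously, since the remaining constraints of \eqref{prob.off.dyn.nontarget} (the stationarity identities defining $\mu^{\targetpi}$ and $\mu^{\neighbor{\targetpi}{s}{a}}$) are automatically consistent via Lemma~\ref{lemma_dyn_attack_stat_dist} and the reformulation.

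The main obstacle I anticipate is justifying the reduction from the coupled program to independent single-row problems cleanly: one must verify that the stationary-distribution variables $\mu^{\targetpi}$ and $\mu^{\neighbor{\targetpi}{s}{a}}$ are not a source of infeasibility, i.e.\ that for any $P$ satisfying the last two constraints of \eqref{prob.off.dyn.nontarget} the induced stationary distributions are well-defined and positive (which follows from $\delta > 0$ forcing ergodicity) and that Lemma~\ref{lemma_dyn_attack_stat_dist}'s closed form for $\mu^{\neighbor{\targetpi}{s}{a}}(s)$ is exactly what makes the reformulation \eqref{prob.off.dyn.nontarget.new} equivalent. Given Lemma~\ref{lemma.off.dyn.nontarget.reform}, this is already handled, so the remaining work is the elementary LP argument for the greedy minimizer over the simplex with box lower bounds — routine, but needs the observation that $\sum_{s'}\overline{P}(s,a,s') = 1$ to pin down the mandatory and free mass.
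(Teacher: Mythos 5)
Your proof is correct and follows essentially the same route as the paper's: both pass to the reformulation \eqref{prob.off.dyn.nontarget.new} and use the fact that the minimum of $\sum_{s'} P(s,a,s')\,\overline{U}^{\targetpi}(s,s')$ over rows satisfying $P(s,a,\cdot)\ge\delta\cdot\overline{P}(s,a,\cdot)$ is attained by placing the free mass $1-\delta$ on $\arg\min_{s'}\overline{U}^{\targetpi}(s,s')$, which turns the per-row constraint into exactly $C(s,a)\ge\epsilon$. The only cosmetic difference is that the paper re-verifies sufficiency of this extremal kernel directly through $Q$-values, whereas you invoke Lemma~\ref{lemma.off.dyn.nontarget.reform} for both directions, which is slightly more economical.
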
  
 \begin{proof}
We define $C'(s, a)$ as
\begin{align*}
    C'(s, a) = \overline{R}(s, \targetpi(s)) + \overline{B}^{\targetpi}(s) - \overline{R}(s, a) - \sum_{s'} P(s, a, s') \cdot \overline{U}^{\targetpi}(s, s').
\end{align*}{}
The constraints of  (\ref{prob.off.dyn.nontarget.new}) which is a reformulation of (\ref{prob.off.dyn.nontarget}) imply that we should have $C'(s, a) \ge \epsilon$.
Due to the constraint $P(s,a,s') \ge \delta \cdot \overline{P}(s,a,s')$, we can write
\begin{align*}
    C'(s,a) \le \overline{R}(s, \targetpi(s)) + \overline{B}^{\targetpi}(s) - \overline{R}(s, a) - (1-\delta) \cdot \min_{s'} \overline{U}^{\targetpi}(s, s') - \delta \cdot \sum_{s'} \overline{P}(s, a, s') \cdot \overline{U}^{\targetpi}(s, s') = C(s,a).
\end{align*}
Therefore, for the attack to be feasible, it has to hold that $C(s,a) \ge \epsilon$. To see that $C(s,a) \ge \epsilon$ is also a sufficient condition, let us consider transition kernel $\widehat P$ defined as
\begin{align*}
    \widehat{P}(s, a, s') = \begin{cases}
    \overline{P}(s, \targetpi(s), s') &\mbox{ if }  a = \targetpi(s)\\
    1 - \delta  + \delta \cdot \overline{P}(s, a, s') &\mbox{ if }  a \ne \targetpi(s) \text{ and } s' = \argmin_{s''} U(s, s'')\\
    \delta \cdot \overline{P}(s, a, s') &\mbox{ otherwise }
    \end{cases}
\end{align*}
For state $s$ and action $a \ne \targetpi(s)$, let $s_{\textnormal{min}} = \argmin_{s''} U(s, s'')$.
From the definition of $Q$ values and the fact that we only change transitions for non-target state-action pairs, we obtain that
\begin{align*}
   \widehat{Q}^{\targetpi}(s,\targetpi(s)) - \widehat{Q}^{\targetpi}(s,a) &= \overline{R}(s, \targetpi(s)) - \overline{\rho}^{\targetpi} + \sum_{s'} \overline{P}(s, \targetpi(s), s') \overline{V}^{\targetpi}(s')
   \\&-\overline{R}(s, a) + \overline{\rho}^{\targetpi} - \sum_{s'} \widehat{P}(s, a, s') \overline{V}^{\targetpi}(s')
   \\&=\overline{R}(s, \targetpi(s))- \overline{R}(s, a) + \overline{B}^{\targetpi}(s) - (1-\delta) \cdot \overline{V}^{\targetpi}(s_{\textnormal{min}}) - \delta \cdot \sum_{s'} \overline{P}(s,a,s') \overline{V}^{\targetpi}(s')
   \\&=\overline{R}(s, \targetpi(s))- \overline{R}(s, a) + \overline{B}^{\targetpi}(s) - (1-\delta) \cdot \overline{U}^{\targetpi}(s, s_{\textnormal{min}}) - \delta \cdot \sum_{s'} \overline{P}(s,a,s') \overline{U}^{\targetpi}(s, s')\\
   &+(1-\delta) \cdot \epsilon \cdot  \overline{T}^{\targetpi}(s_{\textnormal{min}}, s) \cdot \mathds 1_{s_{\textnormal{min}} \ne s} +
   \delta \cdot \sum_{s'} \overline{P}(s, a, s') \cdot \epsilon \cdot \overline{T}^{\targetpi}(s', s) \cdot \mathds 1_{s' \ne s}   
   \\&=C(s, a) + \sum_{s \ne s'} \widehat{P}(s, a, s') \cdot \epsilon \cdot \overline{T}^{\targetpi}(s', s)
   \\&\ge \epsilon +  \sum_{s \ne s'} \widehat{P}(s, a, s') \cdot \epsilon \cdot \overline{T}^{\targetpi}(s', s) = \frac{\epsilon}{\widehat{\mu}^{\neighbor{\targetpi}{s}{a}}},
\end{align*}
where the last equality is due to Lemma \ref{lemma_dyn_attack_stat_dist}. Using the relation between average rewards $\rho$ and $Q$ values from Lemma \ref{lemma_qrho_relate} (Corollary \ref{gain_diff_neighbor}), we obtain the claim.  
\end{proof}

Finally, from Lemma \ref{lemma_dyn_attack_cond} we derive a simpler to express sufficient condition that we use in for the main theorem of this section.  
\begin{corollary}\label{cor_dyn_attack_cond}
The optimization problem (\ref{prob.off.dyn.nontarget}) has a solution $\widehat{P}$ if for all state $s$ and actions $a \ne \targetpi(s)$ we have $\overline{\beta}^{\targetpi}_{\delta}(s,a) \ge \epsilon \cdot (1 + \overline{D}^{\targetpi})$.
\end{corollary}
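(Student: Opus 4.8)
The plan is to deduce Corollary~\ref{cor_dyn_attack_cond} directly from Lemma~\ref{lemma_dyn_attack_cond}, which already reduces feasibility of (\ref{prob.off.dyn.nontarget}) to verifying $C(s,a) \ge \epsilon$ for every state $s$ and every action $a \ne \targetpi(s)$. Hence it suffices to establish the pointwise inequality $C(s,a) \ge \overline{\beta}^{\targetpi}_{\delta}(s,a) - \epsilon \cdot \overline{D}^{\targetpi}$: granting this, the hypothesis $\overline{\beta}^{\targetpi}_{\delta}(s,a) \ge \epsilon \cdot (1 + \overline{D}^{\targetpi})$ immediately gives $C(s,a) \ge \epsilon \cdot (1 + \overline{D}^{\targetpi}) - \epsilon \cdot \overline{D}^{\targetpi} = \epsilon$, and Lemma~\ref{lemma_dyn_attack_cond} then yields the feasible transition kernel $\widehat{P}$.

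To prove the pointwise bound I would start from the definition $\overline{U}^{\targetpi}(s,s') = \overline{V}^{\targetpi}(s') + \epsilon \cdot \overline{T}^{\targetpi}(s',s) \cdot \ind{s' \ne s}$ and observe that the second summand is nonnegative and at most $\epsilon \cdot \overline{D}^{\targetpi}$, since $\overline{T}^{\targetpi}(s',s) \le \overline{D}^{\targetpi}$; thus $\overline{V}^{\targetpi}(s') \le \overline{U}^{\targetpi}(s,s') \le \overline{V}^{\targetpi}(s') + \epsilon \cdot \overline{D}^{\targetpi}$ for all $s'$. Both occurrences of $\overline{U}^{\targetpi}$ in the formula $C(s,a) = \overline{R}(s,\targetpi(s)) + \overline{B}^{\targetpi}(s) - \overline{R}(s,a) - (1-\delta)\min_{s'}\overline{U}^{\targetpi}(s,s') - \delta\sum_{s'}\overline{P}(s,a,s')\overline{U}^{\targetpi}(s,s')$ carry a minus sign, so I upper-bound them: evaluating at an argmin of $\overline{V}^{\targetpi}$ gives $\min_{s'} \overline{U}^{\targetpi}(s,s') \le \overline{V}_{\textnormal{min}}^{\targetpi} + \epsilon \cdot \overline{D}^{\targetpi}$, and averaging gives $\sum_{s'} \overline{P}(s,a,s') \overline{U}^{\targetpi}(s,s') \le \sum_{s'} \overline{P}(s,a,s') \overline{V}^{\targetpi}(s') + \epsilon \cdot \overline{D}^{\targetpi}$. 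The two diameter error terms enter with weights $1-\delta$ and $\delta$, so they collapse to a single $-\epsilon \cdot \overline{D}^{\targetpi}$. What remains is the convex combination $(1-\delta)\overline{V}_{\textnormal{min}}^{\targetpi} + \delta \sum_{s'} \overline{P}(s,a,s')\overline{V}^{\targetpi}(s')$, which I bound above, using $\sum_{s'} \overline{P}(s,a,s')\overline{V}^{\targetpi}(s') \le \overline{V}_{\textnormal{max}}^{\targetpi} := \max_{s'} \overline{V}^{\targetpi}(s')$, by $(1-\delta)\overline{V}_{\textnormal{min}}^{\targetpi} + \delta \overline{V}_{\textnormal{max}}^{\targetpi} = \overline{V}_{\textnormal{min}}^{\targetpi} + \delta \cdot sp(\overline{V}^{\targetpi})$. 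Substituting back yields $C(s,a) \ge \overline{R}(s,\targetpi(s)) - \overline{R}(s,a) + \overline{B}^{\targetpi}(s) - \overline{V}_{\textnormal{min}}^{\targetpi} - \delta \cdot sp(\overline{V}^{\targetpi}) - \epsilon \cdot \overline{D}^{\targetpi}$, and the right-hand side is exactly $\overline{\beta}^{\targetpi}_{\delta}(s,a) - \epsilon \cdot \overline{D}^{\targetpi}$ by the definition of $\overline{\beta}^{\targetpi}_{\delta}$, as desired.

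I do not anticipate a genuine obstacle here — the argument is a short chain of inequalities layered on Lemma~\ref{lemma_dyn_attack_cond} — but the step requiring the most care is controlling the two $\overline{U}^{\targetpi}$ terms while tracking the $(1-\delta)$ and $\delta$ weights, so that the diameter slack aggregates into a single $\epsilon \cdot \overline{D}^{\targetpi}$ and the span $sp(\overline{V}^{\targetpi})$ picks up the coefficient $\delta$ (originating from the $\delta$-weighted expectation over $\overline{P}(s,a,\cdot)$) rather than $1-\delta$.
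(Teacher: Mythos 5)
Your proposal is correct and follows essentially the same route as the paper: both reduce to verifying $C(s,a)\ge\epsilon$ via Lemma~\ref{lemma_dyn_attack_cond}, bound the two $\overline U^{\targetpi}$ terms by $\overline V^{\targetpi}_{\textnormal{min}}+\epsilon\cdot\overline D^{\targetpi}$ and $\overline V^{\targetpi}_{\textnormal{max}}+\epsilon\cdot\overline D^{\targetpi}$ respectively, and collapse the $(1-\delta)$/$\delta$ convex combination into $\overline V^{\targetpi}_{\textnormal{min}}+\delta\cdot sp(\overline V^{\targetpi})+\epsilon\cdot\overline D^{\targetpi}$ to obtain $C(s,a)\ge\overline\beta^{\targetpi}_{\delta}(s,a)-\epsilon\cdot\overline D^{\targetpi}$. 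Your write-up is in fact slightly more explicit than the paper's in justifying the intermediate inequalities; no gaps.
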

\begin{proof}
Note that $C(s,a)$ from Lemma \ref{lemma_dyn_attack_cond} is bounded by: 
\begin{align*}
    C(s, a) &= \overline{R}(s, \targetpi(s)) + \overline{B}^{\targetpi}(s) - \overline{R}(s, a) - (1-\delta)\cdot \min_{s'} \overline{U}^{\targetpi}(s, s') - \epsilon_{P} \sum_{s'} \overline{P}(s, a, s') \cdot \overline{U}^{\targetpi}(s, s')\\
    &\ge \overline{R}(s, \targetpi(s)) + \overline{B}^{\targetpi}(s) - \overline{R}(s, a) - (1-\delta)\cdot (\min_{s'} \overline{V}^{\targetpi}(s') + \epsilon \cdot \overline{D}^{\targetpi})
    -\delta \cdot (\max_{s'} \overline{V}^{\targetpi}(s') + \epsilon \cdot \overline{D}^{\targetpi})\\ 
    &=\overline{\beta}^{\pi}_{\delta}(s,a) - \epsilon\overline{D}^{\targetpi},
\end{align*}
where in the last inequality we applied the definition of $\overline{\beta}^{\targetpi}_{\delta}(s,a)$, i.e.
\begin{align*}
\overline{\beta}^{\targetpi}_{\delta}(s,a) &= \overline{R}(s, \targetpi(s)) - \overline{R}(s,a) +
     \overline{B}^{\targetpi}(s) - \overline{V}_{\textnormal{min}}^{\targetpi} - \delta \cdot sp(\overline{V}^{\targetpi}).
\end{align*}

Using the condition $\overline{\beta}^{\targetpi}_{\delta}(s,a) \ge \epsilon \cdot (1 + \overline{D}^{\targetpi})$, we obtain
\begin{align*}
    C(s, a) \ge \epsilon \cdot (1 + \overline{D}^{\targetpi}) - \epsilon\overline{D}^{\targetpi} = \epsilon,
\end{align*}
which by Lemma \ref{lemma_dyn_attack_cond} implies the statement. 
\end{proof}

\subsection{Proof of Theorem~\ref{theorem_off_dyn}}
\label{appendix.off.dyn.proof}

\begin{proof}
\textbf{Feasibility and the upper bound:}
The sufficient condition in the statement of the theorem follows directly from Corollary \ref{cor_dyn_attack_cond} by noting that a solution to the optimization problem (\ref{prob.off.dyn.nontarget}) is also a solution to the optimization problem (\ref{prob.off.dyn}). To obtain the upper bound, 
let us consider an attack of the following form: 
\begin{align}
\label{attack.off.dyn.upper}
\widehat{P}(s,a,s') = (1-\lambda(s,a)) \cdot \overline{P}(s,a,s') + \lambda(s,a) \cdot P_{s}(s,a,s')
\end{align}
where
\begin{align*}
    P_s(s,a,s') = \begin{cases}
    \overline{P}(s,a,s') &\mbox{ if } a = \pi_{T}(s)\\
    1-\delta + \delta \cdot \overline{P}(s,a,s') &\mbox{ if } a \ne \pi_{T}(s) \mbox{ and } s' = \arg\min_{s''} \overline{V}^{\targetpi}(s'')\\
    \delta \cdot \overline{P}(s,a,s') &\mbox{ otw. }\\
    \end{cases}.
\end{align*}
Here, $\lambda(s,a)$ is defined for each state-action pair separately. For the above attack, note that
\begin{align*}
     \norm{\widehat{P} - \overline{P}}_{p} = \left (\sum_{s,a}\left (\sum_{s'} |\widehat{P}(s,a,s') - \overline{P}(s,a,s')|\right)^{p} \right)^{1/p} \le 2 \cdot \left ( \sum_{s,a} \lambda^{p}(s,a) \right )^{1/p}.
 \end{align*}
Hence, we want to find the minimum $\lambda(s,a)$ (across all states $s$ and $a \ne \targetpi(s)$) for which the optimization problem is feasible. 

Due to Lemma~\ref{lemma.using_neighbors}, for the attack to be feasible, we need to satisfy
\begin{align*}
    \widehat{\rho}^{\targetpi} \ge \widehat{\rho}^{\neighbor{\targetpi}{s}{a}} + \epsilon, \forall a \ne \targetpi(s).
\end{align*}
Using Lemma \ref{lemma_qrho_relate} (Corollary \ref{gain_diff_neighbor}), we can rewrite this condition in terms of $Q$ as 
\begin{align}\label{equation_condition_qopt}
     \widehat{Q}^{\targetpi}(s,a) - \widehat{Q}^{\targetpi}(s,\targetpi(s))   \le -\frac{\epsilon}{\widehat{\mu}^{\neighbor{\targetpi}{s}{a}}(s)}, \forall a \ne \targetpi(s).
\end{align}
Moreover, $\widehat{V}^{\targetpi} = \overline{V}^{\targetpi}$ for the considered attack because it does not change the transitions of the target policy. Therefore, we have
\begin{align*}
    \widehat{Q}^{\targetpi}(s,a) - \widehat{Q}^{\targetpi}(s,\targetpi(s))&=
    \overline{R}(s,a) -\widehat{\rho}^{\targetpi} + \sum_{s'}\widehat{P}(s,a,s') \cdot \overline{V}^{\targetpi}(s') \\&- \overline{R}(s,\targetpi(s)) + \widehat{\rho}^{\targetpi} -  \sum_{s'}\widehat{P}(s,\targetpi(s),s')\cdot \overline{V}^{\targetpi}(s')\\
    &=\overline{R}(s,a) + \sum_{s'}[(1-\lambda(s,a)) \cdot \overline{P}(s,a,s') + \lambda(s,a) \cdot P_s(s,a,s')]\cdot \overline{V}^{\targetpi}(s') \\&- \overline{R}(s,\targetpi(s))  -  \sum_{s'}\overline{P}(s,\targetpi(s),s') \cdot \overline{V}^{\targetpi}(s')\\
    &= (1-\lambda(s,a)) \cdot [\overline{R}(s,a) -\overline{\rho}^{\targetpi} + \sum_{s'}\overline{P}(s,a,s') \cdot \overline{V}^{\targetpi}(s') 
    \\&- \overline{R}(s,\targetpi(s)) + \overline{\rho}^{\targetpi} -  \sum_{s'}\overline{P}(s,\targetpi(s),s')\cdot \overline{V}^{\targetpi}(s')
    ]
    \\&+\lambda(s,a)\cdot
    [\overline{R}(s,a) + (1-\delta) \cdot \min_{s'} \overline{V}^{\targetpi}(s') + \delta \cdot \sum_{s'} \overline{P}(s,a ,s') \cdot \overline{V}^{\targetpi}(s')\\
    &- \overline{R}(s,\targetpi(s)) 
    -\sum_{s'} \overline{P}(s,\targetpi(s),s') \cdot \overline{V}^{\targetpi}(s')]\\
    &\le (1-\lambda(s,a)) \cdot [\overline{Q}^{\targetpi}(s,a) - \overline{Q}^{\targetpi}(s,\targetpi(s))]
    \\&+\lambda(s,a)\cdot
    [\overline{R}(s,a) - \overline{R}(s,\targetpi(s)) + \overline{V}_{\textnormal{min}}^{\targetpi}
    - \overline{B}^{\targetpi}(s) + \delta \cdot (\max_{s'} \overline{V}^{\targetpi}(s') - \overline{V}_{\textnormal{min}}^{\targetpi})]\\
    &= (1-\lambda(s,a)) \cdot \frac{ \overline{\rho}^{\neighbor{\targetpi}{s}{a}} -
    \overline{\rho}^{\targetpi}
    }{\overline{\mu}^{\neighbor{\targetpi}{s}{a}}(s)}
    -\lambda(s,a)\cdot \overline{\beta}^{\targetpi}_{\delta}(s,a) \\&= (1-\lambda(s,a)) \cdot \overline{\chi}^{\targetpi}_0(s,a)
    -\lambda(s,a)\cdot \overline{\beta}^{\targetpi}_{\delta}(s,a).
\end{align*}
By Lemma \ref{lemma_dyn_attack_stat_dist}, $\widehat{\mu}^{\neighbor{\targetpi}{s}{a}}(s) \ge \frac{1}{1+\overline{D}^{\targetpi}}$. Hence, we know that the attack is successful when $\widehat{Q}^{\targetpi}(s,a) - \widehat{Q}^{\targetpi}(s,\targetpi(s)) \le -\epsilon \cdot (1+\overline{D}^{\targetpi})$, which by the above inequality implies that a sufficient condition for the attack to be successful is
\begin{align*}
    \lambda(s,a) \cdot \overline{\chi}^{\targetpi}_0(s,a)
    +\lambda(s,a)\cdot \overline{\beta}^{\targetpi}_{\delta}(s,a) \ge \epsilon \cdot (1+\overline{D}^{\pi^T}) + \overline{\chi}^{\targetpi}_0(s,a).
\end{align*} 
Furthermore, we also know that if $\overline{\rho}^{\targetpi} \ge \overline{\rho}^{\neighbor{\targetpi}{s}{a}} + \epsilon$, we can set $\lambda(s,a) = 0$. Therefore, the attack is successful if $\lambda(s,a)$ satisfies
\begin{align*}
    \lambda(s,a) \ge \frac{\overline{\chi}^{\targetpi}_0(s,a) + \epsilon \cdot (1+\overline{D}^{\pi^T})}{ \overline{\chi}^{\targetpi}_0(s,a) + \overline{\beta}^{\targetpi}_{\delta}(s,a)} \cdot \ind{\overline{\chi}_{\epsilon}^{\targetpi}(s,a) > 0}.
\end{align*}

By choosing the lower bound for $\lambda(s,a)$, we obtain the  upper bound in the statement of the theorem, i.e.:
\begin{align}
\label{proof.off.dyn.cost_value}
\norm{\widehat{P} - \overline{P}}_{p} \le 2 \cdot \norm{\Lambda}_{p},
\end{align}{}
where $\Lambda(s,a) = \frac{\overline{\chi}^{\targetpi}_0(s,a) + \epsilon \cdot (1+\overline{D}^{\pi^T})}{ \overline{\chi}^{\targetpi}_0(s,a) + \overline{\beta}^{\targetpi}_{\delta}(s,a)} \cdot \ind{\overline{\chi}_{\epsilon}^{\targetpi}(s,a) > 0}$. 

\textbf{Lower bound:} To prove the lower bound, we follow the same arguments as in the proof of Theorem~\ref{theorem_off_unc} (the proof for the lower bound). Let $s'$ and $a'$ be a state action pair that satisfy: $s',a' = \arg\max_{s,a} \overline{\chi}^{\targetpi}( s, a)$.
Let's consider the case when $\overline{\chi}^{\targetpi}(s', a') > 0$.
We have
\begin{align*}
sp(\overline{Q}^{\targetpi} - \widehat{Q}^{\targetpi}) &= sp(\widehat{Q}^{\targetpi} - \overline{Q}^{\targetpi}) = \max_{s,a} [\widehat{Q}^{\targetpi} - \overline{Q}^{\targetpi}]
- \min_{s,a}[\widehat{Q}^{\targetpi} - \overline{Q}^{\targetpi} ] \\
&\ge 
\widehat{Q}^{\targetpi}(s', \targetpi(s')) - \overline{Q}^{\targetpi}(s', \targetpi(s')) - (\widehat{Q}^{\targetpi}(s', a') - \overline{Q}^{\targetpi}(s', a')) \\
&=
(\widehat{Q}^{\targetpi}(s', \targetpi(s')) - \widehat{Q}^{\targetpi}(s', a')) +  
(\overline{Q}^{\targetpi}(s', a') - \overline{Q}^{\targetpi}(s', \targetpi(s')) \\
&\ge \frac{\epsilon}{\widehat{\mu}^{\neighbor{\targetpi}{s'}{a'}}(s')} + (\overline{Q}^{\targetpi}(s', a') - \overline{Q}^{\targetpi}(s', \targetpi(s')) \\
&\ge \epsilon + \frac{\overline{\rho}^{\neighbor{\targetpi}{s'}{a'}} - \overline{\rho}^{\targetpi}}{\overline{\mu}^{\neighbor{\targetpi}{s'}{a'}}(s')} > \overline{\chi}^{\targetpi}_0(s', a'),
\end{align*}
where we used the fact that $\widehat{Q}^{\targetpi}(s', \targetpi(s')) \ge \widehat{Q}^{\targetpi}(s', a') +  \frac{\epsilon}{\widehat{\mu}^{\neighbor{\targetpi}{s'}{a'}}(s')}$ (because $\targetpi$ is robustly optimal in the modified MDP) and Lemma \ref{lemma_qrho_relate} (Corollary \ref{gain_diff_neighbor}) to relate Q values to average rewards $\rho$.
When $\overline{\chi}^{\targetpi}(s', a') = 0$, we know that $sp(\overline{Q}^{\targetpi} - \widehat{Q}^{\targetpi}) \ge 0$ due to the properties of $sp$. Therefore, $sp(\overline{Q}^{\targetpi} - \widehat{Q}^{\targetpi}) = sp(\widehat{Q}^{\targetpi} - \overline{Q}^{\targetpi}) \ge \norm{\overline{\chi}^{\targetpi}_0}_{\infty}$. 

By using Lemma \ref{lemma_qr_relation} (but now noting that we did not change rewards $\overline{R}$), we obtain 
\begin{align*}
\norm{\widehat{P} - \overline{P}}_{\infty} \cdot \norm{\overline{V}^{\targetpi}}_{\infty} \ge \frac{\widehat \alpha}{2} \cdot \norm{\overline{\chi}^{\targetpi}_0}_{\infty}.
\end{align*}
Factor $\widehat{\alpha}$ can be bounded by
\begin{align*}
    \widehat{\alpha} &= \min_{s,a,s',a'} \sum_{x} \min \{ \widehat{P}(s,a,x) , \widehat{P}(s',a',x) \} \\
    &\ge \min_{s,a,s',a'} \sum_{x} \min \{ \delta \cdot \overline{P}(s,a,x) , \delta \cdot  \overline{P}(s',a',x) \} \\
    &= \delta \cdot \min_{s,a,s',a'} \sum_{x} \min \{ \overline{P}(s,a,x) , \cdot  \overline{P}(s',a',x) \} \\
    &= \delta \cdot \overline{\alpha}.
\end{align*}

Putting this together with the previous expression, we obtain that
\begin{align*}
\norm{\widehat{P} - \overline{P}}_{\infty} \cdot \norm{\overline{V}^{\targetpi}}_{\infty} \ge \frac{\delta \cdot \overline{\alpha}}{2}.
\cdot \norm{\overline{\chi}^{\targetpi}_0}_{\infty}.
\end{align*}
The inequality $\norm{\widehat{P} - \overline{P}}_{p} \ge \norm{\widehat{P} - \overline{P}}_{\infty}$ implies the lower bound. 

\end{proof}

\subsection{Choosing $\delta$}

While $\epsP$ can be set to small values, making the corresponding constraint in \eqref{prob.off.dyn} a relatively weak condition, it is important to note that its value controls parameters of MDP $\widehat M$ that are important for practical considerations in the offline setting. 
Moreover, since $\epsP$ is a parameter in the optimization problems \eqref{prob.on.known.dyn} and \eqref{prob.off.dyn.nontarget} (and \eqref{prob.off.dyn.nontarget.new}), its value is also important for the online setting. 

In the case of attacks on a learning agent, our results have dependency on the agent's regret, which in turn depends on the properties of MDP $\widehat M$. For example, if the agent adopts UCRL as its learning procedure, its regret will depend on the diameter of $\widehat M$. Hence, $\epsP$ should be adjusted based on time horizon $T$, so that the parameters of MDP $\widehat M$ relevant for the agent's regret do not outweigh time horizon $T$. 

In the case of attacks on a planning agent, setting $\epsP$ to small values could result in a solution $\widehat M$ that has a large mixing time, in which case average reward $\rho$ might not approximate well the average of obtained rewards in a finite horizon (e.g., see \cite{even2005experts}).
This means that the choice of $\epsP$ should account for the finiteness of time horizon in practice. 

We leave a more detailed analysis that includes these considerations for future work.

\section{Proofs for Online Attacks: Lemma~\ref{lemma.on.known.nontarget} (Section
\ref{sec.on.ideas-definitions})}
\label{appendix.on.general}

\paragraph{Proof of Lemma~\ref{lemma.on.known.nontarget}}
Assume the learner follows an algorithm $\textsc{Alg}$ which chooses the action from a distribution based on the previous observations. Theorem 5.5.1 in \cite{Puterman1994} shows that a history-independent algorithm $\Pi$ exists that chooses the action $a_t$ form the distribution $q_{t, s_t}$ where the distributions $q_{t,s}$ are fixed and we have
\begin{gather}
\label{equal_dist}
    \forall s,a,t: \Pru{\textsc{Alg}}{s_t =s, a_t = a} =  \Pru{\Pi}{s_t =s, a_t = a},\\
    \label{equal_decision}
    q_{t,s}(a) = \Pru{\textsc{Alg}}{a_t = a | s_t = s}.
\end{gather}
Here, $\mathbb{P}_{\textsc{Alg}}$ and $\mathbb{P}_{\Pi}$ denote probabilities in the cases that the learner follows $\textsc{Alg}$ and $\Pi$, respectively.
Equation (\ref{equal_dist}) means $\Pi$ has the same expected regret and missmatches as $\textsc{Alg}$. Thus it suffices to prove the theorem for $\Pi$.

First, we extend the definitions of $P$, $R$, and $Q$ defined in Appendix \ref{appendix_background} as
\begin{gather*}
    Q(s, d) = \expctu{a\sim d}{Q(s, a)}\\
    R(s, d) = \expctu{a\sim d}{R(s, a)}\\
    P(s, d, s') = \expctu{a\sim d}{P(s, a, s')}
\end{gather*}{}
for distribution $d$ on the actions.

Now let $M = (\cS, \cA, R, P)$ be the environment. Denote the $Q$ values, $V$ values, and average reward of $\pi$ on $M$ by $Q^{\pi}$, $V^{\pi}$, and $\rho^\pi$, respectively. Also let $\rho^*$ be average reward of $\targetpi$ as $\targetpi$ is $\epsilon$-robust optimal on $M$. By Corollary \ref{gain_diff_neighbor}, for any $s, a \ne \targetpi(s)$ we have
\begin{align*}
    V^{\targetpi}(s) - Q^{\targetpi}(s, a)&= 
    \frac{1}{\mu^{\neighbor{\targetpi}{s}{a}}(s)}\big(
    \rho^* - \rho^{\neighbor{\targetpi}{s}{a}}
    \big)\\
    &\ge \frac{\epsilon}{\mu_{\textnormal{max}}}
\end{align*}{}
Defining $\eta = \epsilon/\mu_{\textnormal{max}}$, we can write
\begin{align*}
    Q^{\targetpi}(s, q_{t,s}) &= \sum_{a\ne\targetpi(s)} q_{t,s}(a)Q^{\targetpi}(s,a) + q_{t,s}(\targetpi(s))V^{\targetpi}(s)\\
    &\le \sum_{a\ne\targetpi(s)} q_{t,s}(a)\big(V^{\targetpi}(s) - \eta\big) + q_{t,s}(\targetpi(s))V^{\targetpi}(s)\\
    &= V^{\targetpi}(s) - \eta\bigg(1 - q_{t,s}(\targetpi(s))\bigg)\\
    &= V^{\targetpi}(s) - \eta e_{t,s},
\end{align*}{}
defining $e_{t,s} = 1 - q_{t,s}(\targetpi(s))$. Using the definition of $Q^{\targetpi}$ we have
\begin{align*}
    R(s, q_{t,s}) + \sum_{s'}P(s, q_{t,s}, s')V^{\targetpi}(s') - V^{\targetpi}(s) + \eta e_{t,s} \le \rho^*.
\end{align*}{}
This can be written in vector notation as
\begin{align*}
    R_{q_t} + (P_{q_t} - I)V^{\targetpi} + \eta e_t \le \rho^*\mathbf 1.
\end{align*}
Now let $d_t(s) = \Pru{\Pi}{s_t = s}$, and $d_t$ be the row vector of that. Thus, $d_{t+1} = d_{t} P_{q_t}$.
Multiplying the last inequality by $d_t$ from left gives
\begin{align*}
    &d_{t}R_{q_t} + (d_{t+1} - d_t)V^{\targetpi} + \eta  d_t e_t \le \rho^*\\
    \Rightarrow & \eta  d_t e_t \le \rho^* - \expct{r_t} + (d_t - d_{t+1})V^{\targetpi}.
\end{align*}
Summing the inequality for $t = 0$ to $T - 1$:
\begin{align*}
    \eta\sum_{t=0}^{T-1} d_t e_t &\le T\rho^* - \expct{\sum_{t=0}^{T-1} r_t} + (d_0 - d_{T})V^{\targetpi}\\
    &\le \expct{\regret(T, M)} + 2\norm{V^{\targetpi}}_\infty
\end{align*}{}

Now note that
\begin{align*}
    \sum_{t=0}^{T-1} d_t e_t &= \sum_{t=0}^{T-1} \sum_s d_t(s) e_{t,s}\\
    &= \sum_{t=0}^{T-1} \sum_s \Pru{\Pi}{s_t = s} \bigg(1 - q_{t,s}(\targetpi(s))\bigg)\\
    &= \sum_{t=0}^{T-1}\sum_s \Pru{\Pi}{s_t = s} \Pru{\Pi}{a_t \ne \targetpi(s_t) | s_t = s}\\
    &= \sum_{t=0}^{T-1} \Pru{\Pi}{a_t \ne \targetpi(s_t)}\\
    &= T\expct{\avgmissm(T)}.
\end{align*}{}
Therefore, we have 
\begin{align*}
    \expct{\avgmissm(T)} &\le \frac{\expct{\regret(T, M)} + 2\norm{V^{\targetpi}}_\infty}{T\eta}\\
    &= \frac{\mu_{\textnormal{max}}}{\epsilon \cdot T}\Big(\expct{\regret(T, M)} + 2\norm{V^{\targetpi}}_\infty\Big).
\end{align*}{}


\section{Proofs For Online Attacks: Poisoning Rewards  (Section \ref{sec.on.reward})}
\label{appendix.on.rewards}

\paragraph{Proof of Theorem \ref{theorem.on.known.reward}}
From Lemma \ref{lemma.off.reward.nontarget}, 
 $\widehat{R} = \overline{R} - \overline{\chi}^{\targetpi}_\epsilon$ is a solution for (\ref{prob.on.known.reward_n1}) which is the same as (\ref{prob.off.unc}) with an additional constraint. This means that $\widehat{R}$ is also a solution for (\ref{prob.off.unc}). Thus, $\targetpi$ is $\epsilon$-robust optimal in $\widehat{M} = (S, A, \widehat{R}, \overline{P})$. It is also obvious that the attack makes the learner learn in the MDP $\widehat{M}$. Lemma~\ref{lemma.on.known.nontarget} immediately proves the bound on expected average missmatches:
 $$
 \expct{\avgmissm(T)} \le \frac{K(T, \widehat{M})}{T}.
 $$
Since $\overline{\chi}^{\pi_T}_\epsilon(s, a) = 0$ for $a = \pi_T(s))$, we have
\begin{align*}
    \expct{\sum_{t=0}^{T-1} (\widehat{R}_t(s_t, a_t) - \overline{R}(s_t, a_t))^p} &= 
    \expct{\sum_{t=0}^{T-1} \overline{\chi}^{\targetpi}_\epsilon(s_t, a_t)^p}\\
    &= \expct{\sum_{t=0}^{T-1} \ind{a_t \ne \pi(s_t)} \overline{\chi}^{\targetpi}_\epsilon(s_t, a_t)^p}\\
    &\le \norm{ \overline{\chi}^{\targetpi}_\epsilon}_\infty^p T \expct{\avgmissm(T)}\\
    &
    \le 
        \norm{ \overline{\chi}^{\targetpi}_\epsilon}_\infty^p K(T, \widehat{M}),
\end{align*}{} 
As $p  \ge 1$, note that the function $f(x) = x^{1/p}$ is concave, so by jensen inequality, for a random variable $X$ we have $\expct{f(X)} \le f(\expct{X})$. We can write
\begin{align*}
    \expct{\avgcost(T)} &= \frac{1}{T}\expct{\left(\sum_{t=1}^T (\widehat{R}_t(s_t, a_t) - \overline{R}(s_t, a_t))^p\right)^{1/p}}\\
    &\le \frac{1}{T}\expct{\sum_{t=1}^T (\widehat{R}_t(s_t, a_t) - \overline{R}(s_t, a_t))^p}^{1/p}\\
    &\le \frac{\norm{ \overline{\chi}^{\targetpi}_\epsilon}_\infty}{T} K(T, \widehat{M})^{1/p}
\end{align*}{}

\section{Proofs For Online Attacks: Poisoning Dynamics (Section \ref{sec.on.dynamics})}
\label{appendix.on.dynamics}

As we mentioned in the main text, the optimization problem \eqref{prob.on.known.dyn} can be transformed into a tractable convex problem with liner constraints -- this is shown in Appendix \ref{appendix.off.dynamics}, where we used such an attack to obtain the upper bound of Theorem \ref{theorem_off_dyn}. 
We refer the reader to Section \ref{appendix.off.dyn.newprob}, and in particular, the optimization problems \eqref{prob.off.dyn.nontarget} and \eqref{prob.off.dyn.nontarget.new} for more details.

\paragraph{Proof of Theorem \ref{theorem.on.known.dyn}}

By Lemma~\ref{lemma.off.dyn.nontarget.reform}, the solution $\widehat{P}$ for the problem (\ref{prob.on.known.dyn}) can be efficiently calculated with a convex program with linear constraints. As (\ref{prob.on.known.dyn}) is the same as (\ref{prob.off.dyn}) with just an additional constraint, this $\widehat{P}$ is a feasible solution for (\ref{prob.off.dyn}) and by Theorem \ref{theorem_off_dyn}, $\targetpi$ is $\epsilon$-robust optimal in $\widehat{M} = (S, A, \overline{R}, \widehat{P})$ which is what the learner observes. Lemma \ref{lemma.on.known.nontarget} gives the bound for expected average missmatches:
$$
\expct{\avgmissm(T)}\le \frac{K(T, \widehat{M})}{T}
$$

To show the upper bound we consider the attack (\ref{attack.off.dyn.upper}) we used to prove the upper bound in Theorem \ref{theorem_off_dyn}:
\begin{align*}
    \widehat{P}'(s,a,s') = (1-\Lambda(s,a)) \cdot \overline{P}(s,a,s') + \Lambda(s,a) \cdot P_{s}(s,a,s')
\end{align*}{}

Since we are using the optimal solution of (\ref{prob.on.known.dyn}), and by equation (\ref{proof.off.dyn.cost_value}) we have
$$
\norm{\widehat{P} - \overline{P}}_{p} \le \norm{\widehat{P}' - \overline{P}}_{p} \le 2 \cdot \norm{\Lambda}_{p}
$$
Using this we can write
\begin{align*}
	\expct{\sum_{t=1}^T \norm{\widehat{P}_t(s_t, a_t, .) - \overline{P}(s_t, a_t, .)}_1^p}
	&=
	\expct{\sum_{t=1}^T \norm{\widehat{P}(s_t, a_t, .) - \overline{P}(s_t, a_t, .)}_1^p}\\
	&= \expct{\sum_{t=1}^T \ind{a_t \ne \targetpi(s_t)}\norm{\widehat{P}(s_t, a_t, .) - \overline{P}(s_t, a_t, .)}_1^p}\\
	&\le \norm{\widehat{P} - \overline{P}}_\infty^p T\expct{\avgmissm(T}\\
	&\le \big( 2 \cdot \norm{\Lambda}_{\infty}\big)^p K(T, \widehat{M})
\end{align*}
As $p \ge 1$ similar to proof of Theorem \ref{theorem.on.known.reward} by jensen inequality we have
\begin{align*}
    \expct{\avgcost(T)} 
    &= \frac{1}{T}\expct{\left(\sum_{t=1}^T \norm{\widehat{P}_t(s_t, a_t, .) - \overline{P}(s_t, a_t, .)}_1^p\right)^{1/p}}\\
    &\le \frac{1}{T}\expct{\sum_{t=1}^T \norm{\widehat{P}_t(s_t, a_t, .) - \overline{P}(s_t, a_t, .)}_1^p}^{1/p}\\
    & \le \frac{2}{T} \norm{\Lambda}_{\infty} K(T, \widehat{M})^{1/p}
\end{align*}

}
}
{}
\end{document}
